\documentclass{article}
\usepackage[nonatbib, final]{0neurips_2025}

\usepackage{algorithm}
\usepackage[noend]{algpseudocode}
\usepackage{algorithmicx}

\usepackage{amsmath, mathtools}
\usepackage{amssymb}
\usepackage{amsfonts}
\usepackage{amsthm}
\usepackage{amsbsy}
\usepackage{mathrsfs}
\usepackage{bm}
\usepackage{enumerate}
\usepackage{booktabs}
\usepackage{enumitem}
\usepackage{pifont}

\usepackage[utf8]{inputenc} %
\usepackage[T1]{fontenc}    %
\usepackage[colorlinks, linkcolor=blue, anchorcolor=blue, urlcolor=red, citecolor=magenta]{hyperref}
\usepackage[nameinlink, noabbrev, capitalise]{cleveref}
\usepackage{url}            %
\usepackage{booktabs}       %
\usepackage{multirow}
\usepackage{microtype}      %
\usepackage{xcolor}         %
\usepackage[normalem]{ulem} %
\usepackage{float}
\usepackage{graphicx}
\usepackage{subcaption}
\usepackage[most]{tcolorbox}

\usepackage[backend=bibtex, style=alphabetic, hyperref=true, maxbibnames=30]{biblatex} %
\addbibresource{ref.bib}

\crefname{equation}{}{}
\crefname{figure}{Fig.}{Figs.}
\crefname{section}{Sec.}{Secs.}
\crefname{appendix}{App.}{Apps.}
\crefname{table}{Tab.}{Tabs.}

\crefname{theorem}{Thm.}{Thms.}
\newtheorem{lemma}{Lemma}
\crefname{lemma}{Lem.}{Lems.}

\crefname{assumption}{Assump.}{Assumps.}
\newtheorem{proposition}{Proposition}
\crefname{proposition}{Prop.}{Props.}
\newtheorem{corollary}{Corollary}
\crefname{corollary}{Cor.}{Cors.}

\crefname{definition}{Def.}{Defs.}
\newtheorem{remark}{Remark}
\crefname{remark}{Rmk.}{Rmks.}
\theoremstyle{remark}
\crefname{algocf}{Alg.}{Algs.}
\crefname{algorithm}{Alg.}{Algs.}

\newcommand{\ro}[1]{\left(#1\right)} %
\newcommand{\sq}[1]{\left[#1\right]} %
\newcommand{\cu}[1]{\left\{#1\right\}} %
\newcommand{\abs}[1]{\left|#1\right|}

\newcommand{\floor}[1]{\left\lfloor#1\right\rfloor}

\newcommand{\e}{\mathrm{e}}
\newcommand{\tp}{^\mathrm{T}}
\newcommand{\const}{\operatorname{const}}

\renewcommand{\P}{\mathbb{P}}
\newcommand{\E}{\operatorname{\mathbb{E}}}
\newcommand{\var}{\operatorname{Var}}

\newcommand{\iid}{\stackrel{\mathrm{i.i.d.}}{\sim}}
\newcommand{\R}{\mathbb{R}}

\newcommand{\unif}{\operatorname{Unif}}

\newcommand{\stopgrad}[1]{\texttt{stopgrad}\left(#1\right)}
\newcommand{\asto}{\stackrel{\mathrm{a.s.}}{\longrightarrow}}
\renewcommand{\d}{\mathrm{d}}
\newcommand{\de}[2]{\frac{\mathrm{d}#1}{\mathrm{d}#2}} %
\newcommand{\ess}{\operatorname{ESS}}

\newcommand{\argmin}{\mathop\mathrm{argmin}}

\newcommand{\kl}{\operatorname{KL}}
\newcommand{\tv}{\operatorname{TV}}

\newcommand{\cB}{\mathcal{B}}

\newcommand{\cF}{\mathcal{F}}

\newcommand{\cX}{\mathcal{X}}

\newcommand{\Q}{\mathbb{Q}}

\newcommand{\ub}{{\bar{u}}}

\newcommand{\gammab}{{\overline{\gamma}}}

\newcommand{\xt}{\widetilde{x}}
\newcommand{\Xt}{\widetilde{X}}
\newcommand{\Xb}{\overline{X}}
\newcommand{\Zh}{\widehat{Z}}

\newcommand{\pinit}{p_\mathrm{init}}

\newcommand{\pdata}{p_\mathrm{data}}
\newcommand{\pbase}{p_\mathrm{base}}
\newcommand{\punif}{p_\mathrm{unif}}
\newcommand{\pmask}{p_\mathrm{mask}}
\newcommand{\phsamp}{\widehat{p}_{\mathrm{samp}}}
\newcommand{\psamp}{p_{\mathrm{samp}}}
\newcommand{\um}{\mathrm{UM}} %
\newcommand{\mask}{\mathbf{M}}
\newcommand{\betah}{\beta_\mathrm{high}}
\newcommand{\betac}{\beta_\mathrm{critical}}
\newcommand{\betal}{\beta_\mathrm{low}}
\newcommand{\bphi}{\bm{\phi}}
\newcommand{\bp}{\bm{p}}
\newcommand{\tok}{\mathrm{tok}}

\title{MDNS: Masked Diffusion Neural Sampler via Stochastic Optimal Control}

\author{
Yuchen Zhu$^{1,*}$, Wei Guo$^{1,*}$, Jaemoo Choi$^{1}$, Guan-Horng Liu$^{2}$, Yongxin Chen$^{1}$, Molei Tao$^{1}$ \\
$^1$Georgia Institute of Technology \\
$^2$FAIR at Meta \\
\texttt{\{yzhu738, wei.guo, jchoi843, yongchen, mtao\}@gatech.edu, ghliu@meta.com}
}
\begin{document}
\maketitle

\def\thefootnote{*}\footnotetext{Equal contribution, random order.}
\def\thefootnote{\arabic{footnote}}

\begin{abstract}
    We study the problem of learning a neural sampler to generate samples from discrete state spaces where the target probability mass function $\pi\propto\mathrm{e}^{-U}$ is known up to a normalizing constant, which is an important task in fields such as statistical physics, machine learning, combinatorial optimization, etc. To better address this challenging task when the state space has a large cardinality and the distribution is multi-modal, we propose $\textbf{M}$asked $\textbf{D}$iffusion $\textbf{N}$eural $\textbf{S}$ampler ($\textbf{MDNS}$), a novel framework for training discrete neural samplers by aligning two path measures through a family of learning objectives, theoretically grounded in the stochastic optimal control of the continuous-time Markov chains. We validate the efficiency and scalability of MDNS through extensive experiments on various distributions with distinct statistical properties, where MDNS learns to accurately sample from the target distributions despite the extremely high problem dimensions and outperforms other learning-based baselines by a large margin. A comprehensive study of ablations and extensions is also provided to demonstrate the efficacy and potential of the proposed framework.
    Our code is available at \url{https://github.com/yuchen-zhu-zyc/MDNS}.
\end{abstract}

\section{Introduction}
\label{sec:intro}

Drawing samples from an unnormalized target distribution $\pi\propto\e^{-U}$ on some state space $\cX_0$ is a fundamental problem with wide-ranging applications across fields such as statistical physics \cite{landau2014a}, Bayesian inference \cite{gelman2013bayesian}, machine learning \cite{andrieu2003introduction,wainwright2008graphical}, etc. For decades, Markov chain Monte Carlo (MCMC) methods, such as the Langevin Monte Carlo, the Metropolis-Hastings algorithm, and the Glauber dynamics, have been a cornerstone. These methods simulate a Markov chain whose stationary distribution is the target distribution, and converge provably fast under certain circumstances \cite{chewi2022log}. Despite their widespread adoption, MCMC algorithms face significant challenges, especially when the state space is high-dimensional and the target distribution is multimodal, which hinders efficient exploration and convergence (see, e.g., the lower-bound results in \cite{randall2006slow,galvin2006slow,ge2018simulated,he2025query}).

Recently, inspired by the success of diffusion models in generative modeling for continuous data \cite{ho2020denoising,song2021denoising,song2021scorebased}, significant progress has been made in leveraging neural networks to learn a stochastic differential equation (SDE) to generate samples from $\pi$, which we collectively refer to as \textbf{neural samplers} (e.g., \cite{zhang2022path,vargas2023denoising,albergo2025nets,havens2025adjoint}). Concurrently, the diffusion paradigm has been effectively extended to discrete state spaces \cite{austin2021structured,campbell2022continuous,sun2023score,lou2024discrete}, finding broad applications in domains involving sequences of categorical data. However, while neural samplers have been extensively developed for distributions on $\R^d$ and discrete diffusion models have shown promise in modeling discrete data, the study of leveraging discrete diffusion models for sampling remains relatively unexplored. Addressing this gap is crucial, as numerous real-world sampling problems in areas such as statistical physics \cite{faulkner2024sampling} and combinatorial optimization \cite{sun2023revisiting,sanokowski2024a,sanokowski2025scalable} necessitate specialized methods to navigate the unique challenges they pose. This work aims to develop such a specialized approach.

\textbf{Contributions.} Our contributions can be summarized in the following points.
\begin{enumerate}[leftmargin=15pt]
    \item We introduce \textbf{Masked Diffusion Neural Sampler} (MDNS), a novel framework for training discrete neural samplers that leverages optimal control of CTMCs and masked discrete diffusion models.
    \item To cope with the optimization challenges caused by the discontinuity of CTMC trajectories, we introduce several learning objectives that are free of differentiability requirements.
    \item To achieve scalable training in high dimensions, we propose a novel training loss named \textbf{Weighted Denoising Cross-entropy} \cref{eq:obj_wdce} that directly implements score learning on the target distribution using importance sampling.
    \item We conduct comprehensive experiments and ablation studies on Ising and Potts models to validate the efficacy of our method. MDNS can accurately sample from target distributions even when the state space has cardinality $10^{122}$, substantially surpassing other learning-based methods.
\end{enumerate}

\textbf{Related works.} We refer readers to \cref{app:rel_work} for a detailed discussion of related works.

\section{Preliminaries}
\label{sec:prelim}
\subsection{Continuous-time Markov Chains}
\label{sec:prelim_ctmc}
A \textbf{continuous-time Markov chain (CTMC)} $X=(X_t)_{t\in[0,T]}$ is a stochastic process defined on a probability space $(\Omega,\cF,\Pr)$ and takes value in a finite state space $\cX$. Its law is completely characterized by the \textbf{generator} $Q=(Q_t\in\R^{\cX\times\cX})_{t\in[0,T]}$, defined by
\begin{equation}
    Q_t(x,y)=\lim_{\Delta t\to0}\frac{1}{\Delta t}(\Pr(X_{t+\Delta t}=y|X_t=x)-1_{x=y}),
    \label{eq:def_gen_mat}
\end{equation}
where $1_A$ is the indicator function of the statement $A$, being one if $A$ holds and zero if otherwise. By definition, the generator satisfies $Q_t(x,y)\ge0$ for $x\ne y$ and $Q_t(x,x)=-\sum_{y\ne x}Q_t(x,y)$.

The path of $X$, i.e., $t\mapsto X_t(\omega)$, is a piecewise constant and c\`adl\`ag\footnote{Acronym of the French phrase standing for ``right continuous with left limits''.} function. The \textbf{path measure} of a CTMC $X$ is a probability measure on $(\Omega,\cF)$ defined as $\P^X(A):=\Pr(X\in A)$, which is the distribution of $X$ on $\Omega$. We define $\P^X_t$ as the marginal distribution of $X_t$, and $\P^X_{t|s}(\cdot|x)$ as the distribution of $X_t$ conditional on $X_s=x$. The following lemma is an important result to know from the literature of CTMCs, which shows how to compute the \textbf{Radon-Nikod\'ym (RN) derivative} between two path measures driven by CTMCs with different generators and initial distributions, and is crucial for developing our proposed sampling algorithms in \cref{eq:rnd_simplified}:

\begin{lemma}
\label{lem:rnd}
    Given two CTMCs with generators $Q^1,Q^2$ and initial distributions $\mu_1,\mu_2$ on $\cX$, let $\P^1,\P^2$ be the associated path measures. Then for any trajectory $\xi=(\xi_t)_{t\in[0,T]}$,
    \begin{equation}
        \log\de{\P^2}{\P^1}(\xi)=\log\de{\mu_2}{\mu_1}(\xi_0)+\sum_{t:\xi_{t-}\ne\xi_t}\log\frac{Q^2_t(\xi_{t-},\xi_t)}{Q^1_t(\xi_{t-},\xi_t)}+\int_0^T\sum_{y\ne\xi_t}(Q^1_t(\xi_t,y)-Q^2_t(\xi_t,y))\d t.
        \label{eq:ctmc_rnd}        
    \end{equation}
    \label{lem:ctmc_rnd}
\end{lemma}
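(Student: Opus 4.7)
The plan is to reduce the continuous-time statement to a discrete-time calculation on a refining partition and then pass to the limit. I would partition $[0,T]$ into $N$ subintervals of length $\Delta t = T/N$ and restrict each CTMC to the skeleton $(X_{k\Delta t})_{k=0}^N$. Each such skeleton is a discrete-time Markov chain whose one-step transition kernel satisfies $p^\Delta_t(x,y) = 1_{x=y} + Q_t(x,y)\Delta t + o(\Delta t)$, obtained by integrating the Kolmogorov forward equation. Because the joint law of the skeleton factorizes by the Markov property, the log RN derivative at the skeleton level is exactly $\log\de{\mu_2}{\mu_1}(\xi_0) + \sum_{k=0}^{N-1}\log(p^{\Delta,2}_{k\Delta t}(\xi_{k\Delta t},\xi_{(k+1)\Delta t}) / p^{\Delta,1}_{k\Delta t}(\xi_{k\Delta t},\xi_{(k+1)\Delta t}))$; the task is to expand each term and take $N\to\infty$.

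I would split the index $k$ into \emph{no-jump} and \emph{jump} subintervals. For $\Delta t$ small enough, almost every trajectory has at most one jump per subinterval since $\cX$ is finite and only finitely many jumps occur on $[0,T]$. On a no-jump subinterval, $p^\Delta_t(x,x) = 1 - \sum_{y\ne x}Q_t(x,y)\Delta t + o(\Delta t)$; applying $\log(1+u) = u + O(u^2)$ and subtracting the $Q^1$ and $Q^2$ versions yields $\sum_{y\ne x}(Q^1_t(x,y) - Q^2_t(x,y))\Delta t + o(\Delta t)$. Summing over the no-jump subintervals produces a Riemann sum that converges to the integral term in \cref{eq:ctmc_rnd}. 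On a jump subinterval carrying the transition $x\to y$ with $y\ne x$, $p^\Delta_t(x,y) = Q_t(x,y)\Delta t + o(\Delta t)$, so the $\log\Delta t$ contributions cancel between $Q^1$ and $Q^2$, leaving $\log(Q^2_t(x,y)/Q^1_t(x,y)) + o(1)$; summing over the a.s. finitely many jumps produces the middle term.

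The main obstacle is controlling the error uniformly so that both limits can be taken simultaneously. I would rely on the finiteness of $\cX$ and on (standard) piecewise continuity of $t\mapsto Q_t$ to conclude that $\sup_{t,x,y}|Q_t(x,y)|$ is finite; the remainders are then $O(\Delta t^2)$ uniformly, and their sum over the $N = T/\Delta t$ subintervals contributes $O(\Delta t)\to 0$. For the log-ratios to be finite, one needs the standing absolute-continuity hypothesis $\P^2 \ll \P^1$ and $\mu_2 \ll \mu_1$, i.e., $Q^1_t(x,y) = 0 \Rightarrow Q^2_t(x,y) = 0$, which makes the formula meaningful. To connect the skeleton-level and path-level RN derivatives, I would invoke L\'evy's upward theorem along the filtration generated by progressively finer dyadic partitions, whose union generates the Skorokhod $\sigma$-algebra on c\`adl\`ag paths.

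An alternative, shorter route is to apply the Girsanov theorem for pure-jump processes directly: identify $(Q^2_t/Q^1_t)(X_{t-},\cdot)$ as the predictable multiplicative change-of-measure process, verify the compensator condition, and read off \cref{eq:ctmc_rnd} as the resulting Dol\'eans-Dade exponential composed with the initial-distribution correction $\d\mu_2/\d\mu_1$. I would prefer the discretization approach in the main text since it is elementary and mirrors the reader's intuition, and cite the Girsanov version as corroboration for readers who want a cleaner derivation.
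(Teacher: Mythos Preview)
Your proposal is correct and follows essentially the same approach as the paper: discretize $[0,T]$ into a uniform partition, factorize the skeleton-level RN derivative via the Markov property, split into jump and no-jump subintervals, Taylor-expand the no-jump log-ratios to obtain the Riemann sum for the integral term, and let the mesh vanish. If anything, you are more careful than the paper in flagging the uniform $O(\Delta t^2)$ error control, the absolute-continuity hypothesis, and the use of L\'evy's upward theorem to pass from skeletons to the full path measure; the paper's proof simply writes the limit without justifying these points.
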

A detailed proof is provided in \cref{app:theory_proofs}. An intuitive interpretation of \cref{eq:ctmc_rnd} is to view the RN derivative between path measures as the limit of density ratios between finite-dimensional joint distributions, and approximate the transition distribution by \cref{eq:def_gen_mat}. We remark that for masked diffusion models, \cref{eq:ctmc_rnd} can be \textit{precisely} calculated without discretization error, as will be seen later in \cref{eq:rnd_simplified}. %

\subsection{Discrete Diffusion Models}
\label{sec:prelim_mask}
In discrete diffusion models, one learns the generator of a CTMC that starts from a simple distribution $\pinit$ and reaches the target distribution $\pdata$ at the final time. This CTMC is typically chosen as the time-reversal of a noising process that converts any distribution to $\pinit$.
An especially effective subclass of discrete diffusion is the \textbf{masked discrete diffusion models} \cite{lou2024discrete,ou2025your,sahoo2024simple,shi2024simplified,zheng2025masked}, which corresponds to choosing $\pinit$ to be $\pmask$, the Dirac distribution on the fully masked sequence. Adding a mask token $\mask$ into the original state space $\cX_0:= \{1,2,...,N\}^D$ containing length-$D$ sequences with vocabulary size $N$, denote the mask-augmented state space by $\cX:=\{1,2,...,N,\mask\}^D$. Throughout the paper, $x=(x^1,...,x^D)$ denotes a sequence of tokens, $x^{d\gets n}$ represents the sequence constructed by replacing the $d$-th position of $x$ by $n$, and $x^\um=(x^d:x^d\ne\mask)$ represents the unmasked part of $x\in\cX$.

Suppose we need to model sequential data $X \in \cX_0$ following the distribution $\pdata$. The intuition behind the masked discrete diffusion model is to define a noising CTMC that independently and randomly masks each token according to a certain schedule, then reverse this CTMC to generate data from a sequence of pure mask tokens. It is proved in \cite{ou2025your} that for $x\ne y\in\cX$, the generator for the unmasking generative process enjoys a special structure and can be written as
\begin{align}
Q_t(x,y)= \gamma(t)\Pr_{X\sim\pdata}(X^d=n|X^\um=x^\um), ~\text{if}~x^d=\mask~\text{and}~y=x^{d\gets n},
\label{eq:gen_pre}
\end{align}
and $0$ if otherwise, for some noise schedule $\gamma:[0,T]\to[0,\infty)$ such that $\int_0^T\gamma(t)\d t=\infty$. In practice, practitioners typically leverage a neural network $s_\theta$ that takes a partially masked sequence $x\in\cX$ as input and outputs $s_\theta(x)\in\R^{D\times N}$ whose $(d,n)$ entry approximates $\Pr_{X\sim\pdata}(X^d=n|X^\um=x^\um)$ if $x^d=\mask$. %

A standard way for training a masked discrete diffusion model given i.i.d. samples from $\pdata$ is using the \textbf{denoising cross-entropy (DCE)} loss \cite{ou2025your, sahoo2024simple, shi2024simplified}:
\begin{equation}
\min_\theta\E_{\pdata(x)}\E_{\lambda\sim\unif(0,1)}\sq{w(\lambda)\E_{\mu_\lambda(\xt|x)}\sum_{d:\xt^d=\mask}-\log s_\theta(\xt)_{d,x^d}},
    \label{eq:loss_dce}
\end{equation}
where the transition kernel $\mu_\lambda(\cdot|x)$ means independently masking each entry of $x$ with probability $\lambda$, and the weight $w(\cdot)$ can be any positive function. In particular, the loss with $w(\lambda)=\frac{1}{\lambda}$ corresponds to the any-order autoregressive loss \cite[Eq. (3.7)]{ou2025your}.

Sampling from a masked discrete diffusion model can be achieved through multiple schemes, such as the Euler method, variants of $\tau$-leaping \cite{lou2024discrete,ou2025your,ren2025fast, campbell2022continuous}, uniformization \cite{chen2025convergence}, and (semi-)autoregressive sampler \cite{arriola2025block, nie2025large}. In this paper, we mainly use an \textit{exact} sampling method known as the \textbf{random order autoregressive sampler} \cite[App. J.4]{ou2025your}, implemented by choosing a uniformly random permutation of $\{1,...,D\}$ and autoregressively unmasking each position along the permutation conditional on the observed positions.

\section{Control-based Learning of Discrete Neural Sampler}
\label{sec:sampler}

\subsection{Discrete Sampling with CTMCs}
\label{sec:sampler_prob_setting}
In this paper, we focus on the task of drawing samples from a \textbf{target distribution} $\pi(x) = \frac{1}Ze^{-U(x)}$ on the finite state space $\cX_0 = \{1, 2, \dots, N\}^{D}$. Here, we only have access to the potential function $U$, and the normalizing constant $Z=\sum_{x\in\cX_0}\e^{-U(x)}$ is unknown. Moreover, we are interested in using CTMCs to sample from this target distribution, in the sense that we hope to find a generator $Q^* = (Q_t^*)_{t\in[0,T]}$ such that it drives a CTMC $X=(X_t)_{t\in[0,T]}$ with $X_0 \sim \pinit$ (a readily sampleable initial distribution) to reach the target distribution at the final time $T$, i.e., $X_T \sim \pi$. 

While this setup is simple and ideal, the problem is essentially ill-posed and hard to solve due to the non-uniqueness of the feasible generator $Q^*$. We thus seek to restrict the solution space by enforcing stricter constraints: finding a generator $Q^*$ such that the associated path measure $\P^*$ satisfies
\begin{align}
\P^*(\xi) = \P^0(\xi_{[0,T)}|\xi_T)\pi(\xi_T) = \P^0(\xi)\de{\pi}{\P^0_T}(\xi_T) \coloneqq \P^0(\xi) \frac{1}{Z} \e^{r(\xi_T)}, ~\text{where}~ r \coloneqq -U - \log \pbase,
\label{eq:p_star}
\end{align}
for any trajectory $\xi=(\xi_t)_{t\in[0,T]}$. Here, $\P^0$ is a \textbf{reference path measure} with a known generator $Q^0$, readily sampleable initial distribution $\P^0_0 =: \pinit$, and known final distribution $\P^0_T =: \pbase$.

Such a formulation has a natural connection to the \textbf{stochastic optimal control (SOC)} of CTMCs. In fact, parameterizing our candidate generator as $Q^u$ and assuming that it induces a path measure $\P^u$,
the learning of $Q^u$ through matching $\P^u$ to $\P^*$ can be understood as controlling a CTMC with generator $Q^0$ to reach a new terminal distribution $\pi$. We demonstrate this connection in detail by considering the following SOC problem on $\cX$:
\begin{equation}
\begin{aligned}
    \min_{u}~~&\E_{X\sim\P^u}\sq{\int_0^T\sum_{y\ne X_t}\ro{Q^u_t\log\frac{Q^u_t}{Q^0_t}-Q^u_t+Q^0_t}(X_t,y)\d t-r(X_T)},\\
    \mathrm{s.t.}~~&X=(X_t)_{t\in[0,T]}~\text{is a CTMC on $\cX$ with generator}~Q^u,~X_0\sim \pinit,
\end{aligned}
\label{eq:soc_prob}
\end{equation}
It can be shown that the problem \cref{eq:soc_prob} is equivalent to minimizing the Kullback–Leibler (KL) divergence between two path measures $\P^u$ and $\P^*$, $ \kl(\P^u\|\P^*)=\E_{\P^u}\log\de{\P^u}{\P^*}$. 
Define the \textbf{value function} as
\begin{align}
\label{eq:value-func}
-V_t(x)=\inf_u\E_{X\sim\P^u}\sq{\left.\int_t^T\sum_{y\ne X_s}\ro{Q^u_s\log\frac{Q^u_s}{Q^0_s}-Q^u_s+Q^0_s}(X_s,y)\d s-r(X_T)\right|X_t=x}.
\end{align}
In order to guarantee the existence and uniqueness of the solution to \cref{eq:soc_prob} for any $r$ and $\pinit$, we need to choose a 
reference path measure $\P^0$ that is \textbf{memoryless} \cite{domingoenrich2025adjoint}, i.e., $X_0$ and $X_T$ are independent for $X\sim\P^0$. In this case, the unique optimal solution $Q^*$ can be expressed as a multiplicative perturbation of the reference generator $Q^0$:
\begin{align}
    Q^{*}_{t}(x,y) = Q^{0}_t(x,y) \exp(V_t(y) - V_t(x)), \quad \forall x \neq y.
    \label{eq:q_star_q_0}
\end{align}
Moreover, the optimal path measure $\P^*$ associated with $Q^*$ has the identical form as \cref{eq:p_star}: $\de{\P^*}{\P^0}(\xi)=\frac{1}{Z}\e^{r(\xi_T)}$, $\forall\xi$, where $Z=\E_{\P^0_T}\e^r$. These results can be shown using \cref{lem:ctmc_rnd} and standard SOC theory, and we include the proofs in \cref{app:theory_soc} for self-consistency and better readability.

\subsection{Optimal Control Formulation of Discrete Neural Sampler Training}
\label{sec:sampler_soc_training}
As discussed in the previous section, if we manage to learn the generator $Q^u$ that produces a path measure $\P^u$ matching $\P^*$, we can sample from the target distribution by simulating the CTMC with this generator. However, minimizing $\kl(\P^u \|\P^*)$ is not the only available choice of objective to reach the goal. In fact, it has some disadvantages due to the inherent discontinuous nature of the problem. Instead, we propose the following general framework for learning the discrete neural sampler:

\begin{tcolorbox}[colback=gray!10, colframe=black, boxrule=0.5pt, arc=2pt, left=1mm, right=1mm, top=1mm, bottom=1mm]
\textbf{Framework: } Given a target distribution $\pi \propto \e^{-U}$ on $\cX_0$, a choice of reference path measure $\P^0$ with generator $Q^0$, learn the generator $Q^u$ to sample from $\pi$ through optimizing an efficiently estimable objective $\cF(\P^u, \P^*)$, where $\P^*$ is given in \cref{eq:p_star} and $Q^* = \argmin_{Q^u} \cF(\P^u, \P^*)$.
\end{tcolorbox}

It is straightforward to see that the problem \cref{eq:soc_prob} is a special case of the proposed framework upon choosing $\cF(\P^u, \P^*) = \kl(\P^u \| \P^*)$. However, although directly relevant to many theoretical results, naively optimizing the discretized, estimated objective in \cref{eq:soc_prob} is not an appropriate approach for learning the desired controlled generator $Q^u$. The estimation of \cref{eq:soc_prob} requires the simulation of CTMC using a neural network parameterized generator with parameter $\theta$, yet the objective itself is inherently non-differentiable with respect to $\theta$ since (1) the trajectories of CTMCs are pure jump processes, and (2) the reward function $r$ is non-differentiable, making it difficult to effectively train the generator. 

A workaround is proposed in \cite{wang2025finetuning}, where the authors retain the differentiability of the objectives through relaxing the CTMC trajectories from staying on $\cX_0$ to sequences of probability vectors on $\mathbb{R}^{D \times N}$, using the Gumbel softmax trick. This partly solves the problem at the cost of introducing a high approximation error into the learning process, since the $\theta$ gradients coming from the Gumbel softmax trick are known to be biased and easily cause numerical instability \cite{jang2017categorical, liu2023bridging}. This leads to failure to converge to the correct target distribution, even in low dimensions, as empirically validated in our experiments (see \cref{app:exp_ising_drakes}). Moreover, backpropagation over the entire trajectory is a memory-intensive operation, and we would like to avoid it as much as possible. In the following, we propose several alternative
learning objectives that operate without these disadvantages.

\textbf{Relative-entropy with REINFORCE (RERF).} One of the key reasons that $\kl(\P^u\|\P^*)$ is intractable for direct optimization originates from the fact that the expectation is with respect to $\P^u$, which tracks the $\theta$ gradient. Alternatively, we can estimate $\nabla_{\theta} \kl(\P^u \| \P^*)$ by introducing a gradient-nontracking trajectory simulated using $\ub = \stopgrad{u}$, also known as the REINFORCE trick \cite{williams1992simple,ranganath2014black,mnih2014neural}. We denote the path measure produced with $\ub$ to be $\P^{\ub}$\footnote{This means we sample a trajectory $X\sim\P^u$ and detach it from the computational graph. After that, use the values of the trajectory to compute the objective and do backpropagation.}, and express the gradient of the relative-entropy by the following identity: $\nabla_\theta\kl(\P^u\|\P^*)=\nabla_\theta\cF_{\mathrm{RERF}}(\P^u, \P^*)$, where
\begin{equation}
\cF_{\mathrm{RERF}}(\P^u, \P^*):=\E_{\P^\ub}\log\de{\P^*}{\P^u}\ro{\log\de{\P^*}{\P^\ub}+C},~\forall C\in\R.
\label{eq:obj_rerf}
\end{equation}
See \cref{app:theory_proofs} for a detailed derivation. Thus, $\cF_{\mathrm{RERF}}$ can be introduced for the purpose of gradient estimation of relative entropy. We remark that, in contrast to the approach proposed in \cite{wang2025finetuning}, \cref{eq:obj_rerf} yields an \textit{unbiased estimator} of the relative-entropy gradient, which is crucial for accurate learning of the target distribution. However, $\cF_{\mathrm{RERF}}(\P^u, \P^*)$
is not a valid ``loss function'' but only a computational surrogate, in the sense that a reduction in the objective value does not guarantee an improvement in learning performance.

\textbf{Log-variance (LV).} Other than KL divergence on path measures, we can also consider the variance type of losses, such as Log-variance (LV). Originally proposed in \cite{nusken2021solving} to solve SOC problems for SDEs on $\mathbb{R}^d$, LV considers minimizing the following $v$-dependent objective
\begin{align}
\cF_{\mathrm{LV}}(\P^u, \P^*) \coloneqq \var_{\P^v}\log\de{\P^*}{\P^u},
\label{eq:obj_lv}
\end{align}
where $v$ for now is generic and $\P^v$ is a chosen sampling path measure driven by the generator $Q^v$ that does not require gradient backpropagation. 
Note that the optimality of the solution \cref{eq:obj_lv} is guaranteed under weak regularity assumptions on $\P^v$.\footnote{As long as $\P^v$ is supported almost everywhere on the space of paths.} In practice, we choose $v = \ub$ as in \cite{nusken2021solving} for algorithmic effectiveness. \cref{eq:obj_lv} can be efficiently computed for CTMCs leveraging \cref{lem:rnd} and \cref{eq:p_star}.

\textbf{Cross-entropy (CE).} While the aforementioned RERF and LV objectives have an optimality guarantee of the solution, they often do not enjoy optimization guarantees as in general these objectives have a nonconvex landscape. To mitigate it, we consider cross-entropy between $\P^u$ and $\P^*$,
\begin{align}
\label{eq:obj_ce}
\cF_{\mathrm{CE}}(\P^u, \P^*)\coloneqq \kl(\P^* \| \P^u) = \E_{\P^*}\log\de{\P^*}{\P^u} = \E_{\P^v} \de{\P^*}{\P^v} \log \de{\P^*}{\P^u}.
\end{align}
$\cF_{\mathrm{CE}}(\P^u, \P^*)$ is \textit{convex} in $\P^u$ due to the convexity of $t\mapsto-\log t$, thus enjoying a benign optimization landscape. However, since $\P^*$ is not directly tractable for simulation, we need to introduce an auxiliary sampling path measure $\P^v$ and equivalently express the objective based on $\P^v$. This can be understood as an importance sampling estimation of the original objective, and in practice we use $v = \ub$. 

To sum up, the RERF, LV, and CE losses do not involve the error for approximating the discrete variables by continuous ones, do not require backpropagation over the entire trajectory of states, and thus are more preferable for efficient and stable optimization.

\subsection{Masked Diffusion Neural Sampler}
\label{sec:sampler_mask}
Besides the objective $\cF(\P^u, \P^*)$, another major component in the design space of the proposed training framework is the choice of reference path measure $\P^0$ and the corresponding generator $Q^0$. Recall from \cref{sec:sampler_prob_setting} that we need the reference path measure to be memoryless to guarantee the existence and uniqueness of the SOC problem \cref{eq:soc_prob}.
We now introduce a method for choosing such a memoryless reference path measure based on a masked discrete diffusion model. This approach, which we term the \textbf{M}asked \textbf{D}iffusion \textbf{N}eural \textbf{S}ampler (\textbf{MDNS}), forms the core of our framework. The corresponding learning algorithms are subsequently developed based on the objectives proposed in \cref{sec:sampler_soc_training}. Furthermore, we demonstrate that this framework can be extended to incorporate uniform discrete diffusion models, an adaptation referred to as the \textbf{U}niform \textbf{D}iffusion \textbf{N}eural \textbf{S}ampler (\textbf{UDNS}), with a detailed discussion deferred to \cref{app:unif}.

We choose $\P^0$ to be the generative process of a masked discrete diffusion model, starting from $\pinit \gets \pmask(x) =: 1_{x=(\mask,...,\mask)}$ and terminating at $\pdata \gets \punif(x) := \frac{1}{N^D}1_{x\in\cX_0}$, the uniform distribution on the unmasked data space $\cX_0$. Based on \cref{eq:gen_pre}, the corresponding generator is
\begin{align}
    Q^0_t(x,x^{d\gets n})= \gamma(t)\Pr_{X\sim\punif}(X^d=n|X^\um=x^\um)1_{x^d=\mask} = \frac{\gamma(t)}{N}1_{x^d=\mask}.
    \label{eq:q0_mask}
\end{align}
One can prove the following lemma (see \cref{app:theory_proofs} for the proof):
\begin{lemma}
    Under the assumption that $\int_0^T\gamma(t)\d t=\infty$, the reference path measure $\P^0$ with generator $Q^0$ defined in \cref{eq:q0_mask} and starting from $\pmask$ is memoryless and satisfies $\P^0_T=\punif$.
    \label{lem:memoryless}
\end{lemma}
With such a choice for $\P^0$, the optimal generator $Q^*$ also has a special structure:
\begin{lemma}
    The generator $Q^*$ corresponding to the optimal solution of \cref{eq:soc_prob} with $Q^0$ defined as the memoryless reference path measure \cref{eq:q0_mask} satisfies
    \begin{align}
    \label{eq:q_star_mask}
    Q^*_t(x,x^{d\gets n})= \gamma(t)\Pr_{X\sim\pi}(X^d=n|X^\um=x^\um)1_{x^d=\mask}.
    \end{align}
    \label{lem:q_star_mask}
\end{lemma}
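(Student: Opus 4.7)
My plan is to use the explicit formula \cref{eq:q_star_q_0}, $Q^*_t(x,y) = Q^0_t(x,y)\exp(V_t(y) - V_t(x))$, together with the closed form of $Q^0$ from \cref{eq:q0_mask}, and compute the value function $\psi_t(x) := \e^{V_t(x)}$ explicitly. From the standard verification/Feynman--Kac correspondence for the KL-type SOC problem \cref{eq:soc_prob}, $\psi_t(x) = \E_{X\sim\P^0}[\e^{r(X_T)} \mid X_t = x]$ with $r = -U - \log\pbase$. Since $Q^0_t(x,y)$ only supports single-token unmaskings $y = x^{d\gets n}$ with $x^d = \mask$, it suffices to compute the ratio $\psi_t(x^{d\gets n})/\psi_t(x)$ for such pairs.

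\textbf{Main step.} The key structural observation is that under \cref{eq:q0_mask} no transition ever re-masks a coordinate or changes an already-unmasked value, so the coordinates of $X$ evolve as $D$ independent single-particle chains and the unmasked part $x^\um$ of $X_t$ is frozen on $[t,T]$. Combined with \cref{lem:memoryless}, which ensures every coordinate is almost surely unmasked by time $T$, this pins down
\begin{equation*}
\P^0_{T\mid t}(y \mid x) \;=\; N^{-\abs{M(x)}}\, 1_{y^\um = x^\um}, \qquad M(x) := \cu{d : x^d = \mask}.
\end{equation*}
Using $\pbase = \punif$ so that $\e^{r(y)} = N^D\e^{-U(y)}$ for $y\in\cX_0$, this yields
\begin{equation*}
\psi_t(x) \;=\; N^{D - \abs{M(x)}} \sum_{y\in\cX_0 :\, y^\um = x^\um} \e^{-U(y)} \;=\; N^{D - \abs{M(x)}}\, Z\, \Pr_{X\sim\pi}\!\ro{X^\um = x^\um}.
\end{equation*}

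\textbf{Conclusion.} For $x^d = \mask$ and $y = x^{d\gets n}$, $\abs{M(y)} = \abs{M(x)} - 1$, so
\begin{equation*}
\frac{\psi_t(x^{d\gets n})}{\psi_t(x)} \;=\; N\cdot\frac{\Pr_{X\sim\pi}(X^d = n,\, X^\um = x^\um)}{\Pr_{X\sim\pi}(X^\um = x^\um)} \;=\; N\cdot\Pr_{X\sim\pi}\!\ro{X^d = n \mid X^\um = x^\um}.
\end{equation*}
Multiplying by $Q^0_t(x,x^{d\gets n}) = \frac{\gamma(t)}{N}1_{x^d = \mask}$ cancels the $N$ and produces the claimed expression; for every other transition $Q^0_t = 0$, hence $Q^*_t = 0$, consistent with \cref{eq:gen_pre}.

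\textbf{Main obstacle.} The only nontrivial step is justifying the product form of $\P^0_{T\mid t}(\cdot\mid x)$: it relies on (i) the one-way structure of \cref{eq:q0_mask} (unmasked tokens frozen, and each coordinate unmasked independently and to a uniform value), and (ii) the terminal-law guarantee from \cref{lem:memoryless} that no coordinate remains masked at time $T$. Once this is in place the remaining algebra is purely mechanical, and the cancellation of the $N$ factor is exactly what produces the masked-diffusion generator shape of \cref{eq:gen_pre} applied to $\pdata = \pi$.
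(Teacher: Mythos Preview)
Your proof is correct and follows essentially the same route as the paper: both invoke \cref{eq:q_star_q_0}, use the Feynman--Kac representation $\e^{V_t(x)}=\E_{\P^0}[\e^{r(X_T)}\mid X_t=x]$, identify $\P^0_{T\mid t}(\cdot\mid x)$ as uniform over completions consistent with $x^\um$, and then compute the ratio $\e^{V_t(x^{d\gets n})}/\e^{V_t(x)}=N\Pr_{X\sim\pi}(X^d=n\mid X^\um=x^\um)$ to cancel the $1/N$ in $Q^0$. The only cosmetic difference is that you package the intermediate expression as $N^{D-\abs{M(x)}}Z\Pr_{X\sim\pi}(X^\um=x^\um)$, whereas the paper keeps it as a raw sum of $\e^{r(\cdot)}$ and substitutes $\e^{r}=Z\pi/\punif$ only when forming the ratio; one minor caveat is that your indicator $1_{y^\um=x^\um}$ should be read as ``$y$ agrees with $x$ on the unmasked positions of $x$'' (since $y\in\cX_0$ has $y^\um=y$), which is exactly the paper's $\prod_{d:x^d\ne\mask}1_{y^d=x^d}$.
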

See \cref{app:theory_proofs} for the proof. This suggests that it suffices to parameterize $Q_t^u(x,x^{d\gets n}) = \gamma(t) s_{\theta}(x)_{d,n}1_{x^d=\mask}$. Here, $s_{\theta}:\cX\to\R^{D\times N}$ is parameterized to have non-negative entries and each row sums up to $1$, representing the one-dimensional marginals of the target distribution $\pi$ conditioned on unmasked entries of the input $x$. Moreover, such parameterization implies that the diagonal entries of $Q_t^u$ are
\begin{align*}
    \sum_{y \neq x} Q_t^u(x,y) = \sum_{d:x^d=\mask}\sum_n Q_t^u(x,x^{d\gets n}) = \gamma(t)\sum_{d:x^d=\mask}1=\gamma(t)|\{d:x^d=\mask\}|,
\end{align*}
where the first equality is due to \cref{eq:gen_pre} and the second equality is due to the parameterization of $Q_t^u$. Similarly, one can also verify that $\sum_{y \neq x} Q_t^0(x,y) =\gamma(t)|\{d:x^d=\mask\}|$. This turns out to significantly simplify the expression of the RN derivative, as will be shown later.

\begin{algorithm}[t]
\caption{Training of Masked Diffusion Neural Sampler (MDNS)}
\label{alg:mask}
\begin{algorithmic}[1]
\Require score model $s_{\theta}$, batch size $B$, training iterations $K$, reward $r:\cX_0 \to \R$, learning objective $\cF \in \{\cF_{\mathrm{RERF}}, \cF_{\mathrm{LV}}, \cF_{\mathrm{CE}}, \cF_{\mathrm{WDCE}}\}$, (num. replicates $R$, resample frequency $k$ for $\cF_\mathrm{WDCE}$).
\For{$\mathtt{step} = 1$ \textbf{to} $K$}
    \If{$\cF \in \{\cF_{\mathrm{RERF}}, \cF_{\mathrm{LV}}, \cF_{\mathrm{CE}}\}$} 
    \State $\{X^{(i)}, W^u(X^{(i)})\}_{1\le i\le B} = \texttt{Sample\_Trajectories}(B)$. \Comment{See \cref{alg:sample_traj} for details.}
    \State Compute $\cF$ with $\{X^{(i)}, W^u(X^{(i)})\}_{1\le i\le B}$. \Comment{See \cref{eq:obj_all}.}
    \ElsIf{$\cF = \cF_{\mathrm{WDCE}}$}
    \If{$\mathtt{step}~\operatorname{mod}~k = 0$} \Comment{Sample new trajectories every $k$ steps.}
        \State $\{X^{(i)}, W^u(X^{(i)})\}_{1\le i\le B} = \mathtt{Sample\_Trajectories}(B)$.
        \State Set replay buffer $\cB \gets \{X^{(i)}, W^u(X^{(i)})\}_{1\le i\le B} $.
    \EndIf
    \State $\{\Xt^{(i)}, W^u(\Xt^{(i)})\}_{1\le i\le BR} = \mathtt{Resample\_with\_Mask}(\cB;R)$. \Comment{See \cref{app:alg} for details.}
    \State Compute $\cF_\mathrm{WDCE}$ with $\{\Xt^{(i)}, W^u(\Xt^{(i)})\}_{1\le i\le BR}$.
    \EndIf
    \State Update the parameters $\theta$ based on the gradient $\nabla_\theta\cF$.
\EndFor
\Return trained score model $s_{\theta}$.
\end{algorithmic}
\end{algorithm}

With the results above, we use \cref{lem:ctmc_rnd} to derive the RN derivative between the optimal and the current path measures $\P^*$ and $\P^u$, the common term for computing $\cF_{\mathrm{RERF}}$, $\cF_{\mathrm{LV}}$, and $\cF_{\mathrm{CE}}$:
\begin{align}
    \log\de{\P^*}{\P^u}(X)&=r(X_T)-\log Z+\sum_{t:X_{t-}\ne X_t}\log\frac{Q^0_t}{Q^u_t}(X_{t-},X_t) +\int_0^T\sum_{y\ne X_t}(Q^u_t-Q^0_t)(X_t,y)\d t\nonumber\\
    &=r(X_T)+\sum_{t:X_{t-}\ne X_t}\log\frac{1/N}{s_\theta(X_{t-})_{d(t),X_t^{d(t)}}}-\log Z=:W^u(X)-\log Z,\nonumber\\
    \text{where}~W^u(X)&=r(X_T)+\sum_{t:X_{t-}\ne X_t}\log\frac{1/N}{s_\theta(X_{t-})_{d(t),X_t^{d(t)}}}.\label{eq:rnd_simplified}
\end{align}
Here, we assume that the jump from $X_{t-}$ to $X_t$ is at the $d(t)$-th position, and the total number of jumps is $D$. With \cref{eq:rnd_simplified}, the aforementioned training objectives \cref{eq:obj_rerf,eq:obj_lv,eq:obj_ce} are simplified to
\begin{align}
\cF_{\mathrm{RERF}} = \underset{X\sim\P^\ub}{\E}W^\ub(X) W^u (X), ~\cF_{\mathrm{LV}} = \underset{X\sim\P^\ub}{\var} W^u(X), ~\cF_{\mathrm{CE}} = \underset{X\sim\P^\ub}{\E} \frac1Z \e^{W^{\ub}(X)}W^{u}(X),
\label{eq:obj_all}
\end{align}
where we have removed terms related to $Z$ in $\cF_{\mathrm{RERF}}$ and $\cF_{\mathrm{LV}}$ without modifying the optimization landscape (in particular, $C$ is chosen as $\log Z$ in \cref{eq:obj_rerf}). For the CE loss, as in practice the normalizing constant $Z$ may be prohibitively large, removing $\frac1Z$ from the loss may lead to numerical instability. To avoid this, we propose to estimate $Z$ via the equality $Z=\E_{X\sim\P^\ub}\e^{W^\ub(X)}$ implied by \cref{eq:rnd_simplified}, which is equivalent to applying softmax to the weights $\{W^\ub(X^{(i)})\}_{1\le i\le B}$ in a batch.

\textbf{Scalable training via weighted denoising cross-entropy.}
While the learning objectives in \cref{eq:obj_all} successfully avoid backpropagation along all the states $X$
and are thus relatively memory efficient, their implementation still wastes much compute. Note that for computing $W^u(X)$, we need to call the model $s_\theta(\cdot)$ $D$ times, but each time only the $(d(t), X_t^{d(t)})$-th element of the $D\times N$ output matrix is used in optimization. What's worse, the three losses in \cref{eq:obj_all} require backpropagation through all of the $D$ gradient-tracking score outputs in \cref{eq:rnd_simplified}, which may be unaffordable due to the GPU memory constraints. To propose a more scalable loss for high-dimensional data, we start with a further simplification of $\cF_{\mathrm{CE}}$ by discarding the $u$-independent terms in $W^u(X)$:
\begin{align*}
    \min_u\E_{X\sim\P^\ub}\frac1Z\e^{W^\ub(X)}W^u(X)=\min_u\E_{X\sim\P^\ub}\frac1Z\e^{W^\ub(X)}\sum_{t:X_{t-}\ne X_t}-\log s_\theta(X_{t-})_{d(t),X_t^{d(t)}}.
\end{align*}
Here, the key idea is to treat i.i.d. samples from $\P^u$ as \textit{importance weighted} samples from $\P^*$. Instead of only learning the conditional distribution $s_\theta$ on a set of positions and values on the generation trajectory, we can actually forget about the trajectory and only focus on the achieved clean sample $X_{T}$. By remasking $X_{T}$ and computing the DCE loss in \cref{eq:loss_dce}, we arrive at the following \textbf{weighted denoising cross-entropy (WDCE) loss}:
\begin{equation}
    \cF_{\mathrm{WDCE}}(\P^u, \P^*):=\underset{X\sim\P^\ub}{\E}\sq{\frac1Z\e^{W^\ub(X)}\underset{\lambda\sim\unif(0,1)}{\E }\sq{w(\lambda)\underset{\mu_\lambda(\xt|X_T)}{\E}\sum_{d:\xt^d=\mask}-\log s_\theta(\xt)_{d,X_T^d}}},
    \label{eq:obj_wdce}
\end{equation}
where we estimate $Z$ by $\E_{X\sim\P^\ub}\e^{W^\ub(X)}$ as in $\cF_{\mathrm{CE}}$.
It is straightforward to note that \cref{eq:obj_wdce} is equivalent to the DCE loss \cref{eq:loss_dce} with $\pi$ being the data distribution $\pdata$, except that $X_{T}$ are now importance weighted instead of i.i.d. samples from $\pi$. $\cF_{\mathrm{WDCE}}$ is much more efficient than $\cF_{\mathrm{CE}}$ in that we can use all the output of the score model $s_\theta(\cdot)$ to compute the loss, instead of only one element. Moreover, for each pair of $(X_T,\e^{W^\ub(X)})$, we can sample \textit{multiple} (say, $R$) partially masked $\tilde{x}$ to compute the loss, so the expensive $O(D)$ computation of the RN derivative can be amortized. Inspired by recent works for reusing samples \cite{midgley2023flow,havens2025adjoint}, we can also create a \emph{replay buffer} $\cB$ that stores pairs of final samples and weights $(X_T,\e^{W^\ub(X)})$ for multiple steps of optimization for further reduction of the computational cost. Our algorithm is summarized in \cref{alg:mask}. 
\subsection{Theoretical Guarantees}
\label{sec:sampler_guarantee}
We can establish the following theoretical guarantees for our learning algorithms (proof in \cref{app:theory_proofs}):
\begin{proposition}[Guarantee for sampling]
Let $\psamp:=\P^u_T$ be the distribution of the samples generated from the learned sampler. Then, to ensure $\kl(\psamp\|\pi)\le\varepsilon^2$ (resp., $\kl(\pi\|\psamp)\le\varepsilon^2$), it suffices to train the sampler to reach $\kl(\P^u\|\P^*)\le\varepsilon^2$ (resp., $\kl(\P^*\|\P^u)\le\varepsilon^2$).
\label{lem:guarantee_samp}
\end{proposition}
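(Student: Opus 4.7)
The plan is to derive both sampling guarantees as a direct consequence of the data processing inequality for KL divergence, applied to the map $\xi\mapsto\xi_T$ that sends a trajectory to its endpoint.

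First I would verify that the optimal path measure has the correct terminal marginal, namely $\P^*_T=\pi$. This is essentially built into the definition \cref{eq:p_star}: writing $\P^*(\xi)=\P^0(\xi_{[0,T)}|\xi_T)\pi(\xi_T)$ and summing over $\xi_{[0,T)}$ leaves $\P^*_T(y)=\pi(y)\sum_{\xi_{[0,T)}}\P^0(\xi_{[0,T)}|y)=\pi(y)$. By construction, $\psamp=\P^u_T$ is the terminal marginal of the learned path measure.

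Next I would invoke the data processing inequality for KL divergence: for any measurable map $f$ between Polish spaces and any two probability measures $\mu,\nu$, one has $\kl(f_\#\mu\|f_\#\nu)\le\kl(\mu\|\nu)$. Applying this with $f:\xi\mapsto\xi_T$ (a measurable map from path space to $\cX$) yields
\begin{equation*}
\kl(\psamp\|\pi)=\kl(\P^u_T\|\P^*_T)\le\kl(\P^u\|\P^*),\qquad \kl(\pi\|\psamp)=\kl(\P^*_T\|\P^u_T)\le\kl(\P^*\|\P^u),
\end{equation*}
so a training error of $\varepsilon^2$ on the path-space KL in either direction immediately implies the corresponding marginal guarantee at time $T$.

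There is essentially no main obstacle; the only thing to be careful about is the direction of the KL divergence, which must be preserved on both sides of the inequality since the data processing inequality is not symmetric in its arguments. A short self-contained derivation of the data processing inequality (e.g.\ via the chain rule $\kl(\mu\|\nu)=\kl(f_\#\mu\|f_\#\nu)+\E_{y\sim f_\#\mu}\kl(\mu(\cdot|f=y)\|\nu(\cdot|f=y))\ge \kl(f_\#\mu\|f_\#\nu)$) can be included for completeness, but no path-measure-specific machinery beyond what is already developed in \cref{sec:prelim_ctmc} is needed.
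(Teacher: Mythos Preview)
Your proposal is correct and takes essentially the same approach as the paper: both invoke the data processing inequality for KL divergence under the endpoint map $\xi\mapsto\xi_T$, using $\P^*_T=\pi$ and $\P^u_T=\psamp$. The paper's proof is a one-line citation of this inequality, while you add the explicit verification that $\P^*_T=\pi$ and a chain-rule derivation of the inequality, but the core argument is identical.
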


\begin{proposition}[Guarantee for normalizing constant estimation]
$\Zh:=\e^{W^u(X)}$, $X\sim\P^u$ is an unbiased estimation of $Z$. To guarantee $\Pr\ro{\abs{\frac{\Zh}{Z}-1}\le\varepsilon}\ge\frac34$, it suffices to train the sampler to reach $\min\{\kl(\P^u\|\P^*),\kl(\P^*\|\P^u)\}\le\frac{\varepsilon^2}{2}$. One can boost the probability from $\frac34$ to $1-\zeta$ for any $\zeta\in\ro{0,\frac{1}{4}}$ by taking the median of $O\ro{\log\frac1\zeta}$ i.i.d. estimates.
\label{lem:guarantee_norm_const_est}
\end{proposition}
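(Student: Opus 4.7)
The plan is to address the three claims in the proposition separately, with the single-shot concentration being the non-trivial step. For the unbiasedness part, I would combine the RN derivative identity \cref{eq:rnd_simplified}, which gives $W^u(X) - \log Z = \log \tfrac{\d\P^*}{\d\P^u}(X)$, with the obvious identity $\E_{X\sim\P^u}\tfrac{\d\P^*}{\d\P^u}(X) = 1$. Rewriting $\widehat{Z} = e^{W^u(X)} = Z \cdot \tfrac{\d\P^*}{\d\P^u}(X)$ and taking expectations under $\P^u$ then immediately yields $\E\widehat{Z} = Z$. In the remainder of the argument I will work with the rescaled variable $Y \coloneqq \widehat{Z}/Z = \tfrac{\d\P^*}{\d\P^u}(X)$, which satisfies $\E_{\P^u}Y = 1$.

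For the single-shot concentration, I would first write $\E_{\P^u}|Y-1| = 2\,\tv(\P^u, \P^*)$ (this is the standard variational identity for total variation when one side has density $Y$ with respect to the other). Next, apply Pinsker's inequality, choosing the side of KL that is smaller, so that $\tv(\P^u,\P^*) \le \sqrt{\tfrac{1}{2}\min\{\kl(\P^u\|\P^*),\kl(\P^*\|\P^u)\}}$. Under the assumption that this minimum is at most $\varepsilon^2/2$, we obtain a TV bound of order $\varepsilon$, and a Markov inequality applied to $|Y-1|$ then yields $\Pr(|Y-1| > \varepsilon) \le 2\tv/\varepsilon$, which can be driven below $1/4$. (A parallel argument, which may give sharper constants, is Chebyshev against $(Y-1)^2$, noting that $\var_{\P^u}(Y) = \chi^2(\P^*\|\P^u)$; in the small-KL regime, $\chi^2 \approx 2\kl$ to leading order, which closes the loop.)

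For the boosting step, I would invoke the standard median-of-means argument. Let $\widehat{Z}_1, \dots, \widehat{Z}_m$ be i.i.d.\ copies of $\widehat{Z}$ and let $B_i = 1_{|\widehat{Z}_i/Z - 1| \le \varepsilon}$. By the previous paragraph, $\E B_i \ge 3/4$, so by Hoeffding's inequality $\Pr(\sum_i B_i < m/2) \le e^{-m/8}$. The event that the median of $\widehat{Z}_1,\dots,\widehat{Z}_m$ lies within the $\varepsilon$-window is implied by the event that at least $m/2$ of the $B_i$'s equal one, so taking $m = \lceil 8\log(1/\zeta)\rceil = O(\log(1/\zeta))$ suffices for failure probability at most $\zeta$.

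The main obstacle is pinning down the constants in the single-shot concentration. The Pinsker--Markov route I sketched gives the correct order in $\varepsilon$ but loses a numerical factor compared to the clean $\varepsilon^2/2$ threshold stated in the proposition; one way to recover it is to carry out the Chebyshev/$\chi^2$ argument and exploit the leading-order identity $\chi^2 = 2\kl + O(\kl^{3/2})$ valid in the small-divergence regime we care about, since the bound is only meant to be informative when the training loss is already small. Everything else in the proof is either a direct computation (unbiasedness) or an application of Chernoff/Hoeffding (boosting), both of which are mechanical.
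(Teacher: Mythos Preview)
Your proposal is correct and follows essentially the same route as the paper: unbiasedness via the identity $\widehat Z/Z=\de{\P^*}{\P^u}$, single-shot concentration via Markov on $|Y-1|$ followed by $\E_{\P^u}|Y-1|=2\tv(\P^u,\P^*)$ and Pinsker, and the median trick for boosting. Your observation that the Pinsker--Markov chain loses a constant relative to the stated $\varepsilon^2/2$ threshold is in fact well-founded---the paper's own computation writes the Markov bound as $\tv/(2\varepsilon)$ where it should be $2\tv/\varepsilon$ under their half-$L^1$ convention for TV, so the exact constant in the proposition is not actually recovered by their argument either; the Chebyshev/$\chi^2$ alternative you mention does not close this gap since only $\kl\le\chi^2$ holds in general, not the reverse.
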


\section{Experiments}
\label{sec:exp}
In this section, we experimentally validate our proposed frameworks by learning to sample from the Ising model and Potts model on 
square lattices.
At different temperatures, these models exhibit distinct behaviors, providing a rich ground for demonstrating the effectiveness of our algorithms. We use probabilistic metrics (KL divergence, TV distance, $\chi^2$ divergence, etc.) and observables (magnetization, 2-point correlation) originating from statistical physics to benchmark our learning results. We include both learning-based approaches such as LEAPS \cite{holderrieth2025leaps}, and learning-free approaches such as the Metropolis-Hastings (MH) algorithm as benchmarks. For fair comparisons, we run all baselines with a long enough but comparable compute time. To accurately estimate ground truth values for all the observables when an exact computation is intractable, we draw samples by running the Swendsen-Wang (SW) algorithm \cite{swendsen1986replica, swendsen1987nonuniversal} for a sufficiently long time. We include experiment details such as metric definitions and training hyperparameters in \cref{app:exp_ising,app:exp_potts}.

\subsection{Ising model on Square Lattice}
\label{sec:exp_ising}
We consider learning to sample from the Ising model on a square lattice with $L$ sites per dimension, $\Lambda=\{1,...,L\}^2$. The state space is $\cX_0=\{\pm1\}^\Lambda$, 
where $\pm1$ represent the two spin states. Given an interaction parameter $J\in\R$, an external magnetic field $h\in\R$, and an inverse temperature $\beta>0$, the probability distribution of any configuration $x\in\cX_0$ is given by
\begin{equation}
    \pi(x)=\frac{1}{Z}\e^{-\beta H(x)},~\text{where}~H(x)=-J\sum_{i\sim j}x^ix^j-h\sum_ix^i~\text{and}~Z=\sum_{x\in\cX_0}\e^{-\beta H(x)}.
    \label{eq:ising}
\end{equation}
In the above equation, $i\sim j$ means that $i,j\in\Lambda$ are adjacent on the lattice. For simplicity, we impose periodic boundary conditions in both the horizontal and vertical directions.

\textbf{A case study on the choice of $\cF$.} We first compare the effects of different learning objectives $\cF$ by learning to sample from a $4 \times 4$ high temperature Ising model with $J=1$, $h=0.1$, and $\betah=0.28$. The system size is picked so that the sampling problem is not trivial, but an exact computation of the partition function remains computationally tractable. The results are reported in \cref{tab:exp_ising_4_high},
and results at other temperatures are available in \cref{app:exp_ising}. For a fair comparison, we train for $1000$ steps with a batch size of $256$ for each loss, and use $ R=16$ resampling replicates for $\cF_{\mathrm{WDCE}}$ so that the effective batch size is the same across all objectives. We draw a sufficiently large number ($2^{20}$) of samples from the learned model to estimate all the reported metrics. We also report the average \textbf{effective sample size} (\textbf{ESS}, defined in \cref{eq:ess}) of the last $100$ steps to showcase the learning speed for each objective. We include a baseline produced by running the MH algorithm for sufficiently long as a reference. We can see that all four learning objectives can learn a good model that generates ground-truth-like samples. In all the evaluation metrics except the absolute error of $\log\Zh$, the effectiveness ranking is uniformly $\cF_\mathrm{LV}>\cF_\mathrm{WDCE}>\cF_\mathrm{RERF}>\cF_\mathrm{CE}$.

\begin{figure}[t]
    \centering
    \includegraphics[width=0.9\linewidth]{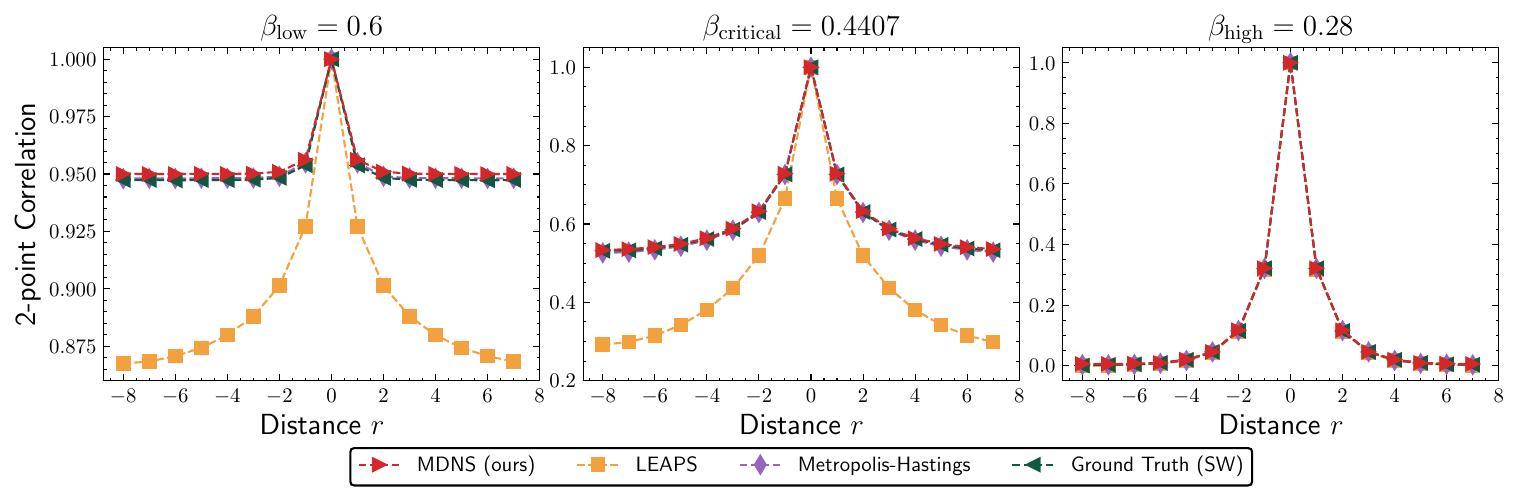}
    \caption{Average of 2-point correlation $C^{\mathrm{row}}(k, k + r)$ of samples from $16\times16$ Ising model. }
    \label{fig:exp_ising16_2ptcorr}
\end{figure}

\begin{table}[t]
\centering
\caption{Results for learning $16\times16$ Ising model at different temperatures, best in \textbf{bold}. Mag. and Corr. represent the absolute error of the magnetization and 2-point correlation to ground truth values.}
\label{tab:exp_ising_16}
\resizebox{\linewidth}{!}{
\begin{tabular}{cccccccccc}
\toprule
Temperature                        & \multicolumn{3}{c}{$\betal=0.6$}                        & \multicolumn{3}{c}{$\betac=0.4407$} &   \multicolumn{3}{c}{$\betah=0.28$}                      \\ 
\midrule
Metrics & Mag. $\downarrow$ & Corr. $\downarrow$  & ESS $\uparrow$ & Mag. $\downarrow$ & Corr. $\downarrow$ & ESS $\uparrow$ & Mag. $\downarrow$ & Corr. $\downarrow$  & ESS $\uparrow$ \\
\midrule
MDNS (ours)  &  $\mathbf{9.9e{-3}}$  &   $2.4\e{-3}$      & $\mathbf{0.981}$                             &   $\mathbf{3.7e{-3}}$  & $\mathbf{2.0e{-3}}$  &    $\mathbf{0.933}$   &     $8.5\e{-3}$    & $\mathbf{1.0e{-3}}$ & $0.962$ \\
\midrule
LEAPS                          &  $2.4\e{-2}$  &     $5.8\e{-1}$    &    $0.261$                          &  $7.4\e{-3}$   &  $1.6\e{-1}$  &  $0.384$      &   $7.4\e{-3}$      &   $1.6\e{-3}$ &   $\mathbf{0.987}$                   \\
\midrule
Baseline (MH)                           &  $1.9\e{-2}$  &    $\mathbf{7.7e{-4}}$     &   /                          &   $4.6\e{-3}$  &  $2.5\e{-3}$  &       /  &    $\mathbf{6.1e{-3}}$     & $1.1\e{-3}$ &  / \\
\bottomrule
\end{tabular}}
\end{table}

\begin{table}[!h]
\centering
\caption{Results for learning $4\times4$ Ising model with $J=1$, $h=0.1$, and $\betah=0.28$, best in \textbf{bold}.}
\label{tab:exp_ising_4_high}
\resizebox{\linewidth}{!}{
\begin{tabular}{ccccccc}
\toprule
Method                & ESS $\uparrow$    & $\tv(\phsamp,\pi)$ $\downarrow$ & $\kl(\phsamp\|\pi)$ $\downarrow$ & $\chi^2(\phsamp\|\pi)$ $\downarrow$ & $\widehat{\kl}(\P^u|\P^*)$ $\downarrow$ & Abs. err. of $\log\Zh$ $\downarrow$ \\ \midrule
$\cF_{\mathrm{RERF}}$ & $0.9621$          & $0.0799$                        & $0.0380$                        & $0.0845$                            & $0.0188$                                & $\mathbf{0.00003}$                  \\ \midrule
$\cF_{\mathrm{LV}}$   & $\mathbf{0.9713}$ & $\mathbf{0.0748}$               & $\mathbf{0.0348}$               & $\mathbf{0.0714}$                   & $\mathbf{0.0141}$                       & $0.00046$                           \\ \midrule
$\cF_{\mathrm{CE}}$   & $0.9513$          & $0.0833$                        & $0.0393$                        & $0.0903$                            & $0.0248$                                & $0.00099$                           \\ \midrule
$\cF_{\mathrm{WDCE}}$ & $0.9644$          & $0.0799$                        & $0.0382$                        & $0.0868$                            & $0.0177$                                & $0.00030$                           \\ \midrule
Baseline (MH)         & /                 & $0.0667$                        & $0.0325$                        & $0.0628$                            & /                                       & /                                   \\ \bottomrule
\end{tabular}
}
\end{table}

\textbf{Scaling to higher dimensions and lower temperatures.} We also run on significantly harder settings by increasing the system size to $16 \times 16$ (so the cardinality of the state space is $2^{256}$). We keep $J = 1, h=0$ and vary the inverse temperature, ranging from $\betah=0.28$, $\betac=\log(1 + \sqrt{2})/2 = 0.4407$, to $\betal=0.6$. The behaviors of the Ising model are known to be distinct at these temperatures by the phase transition theory \cite{kramers1941statistics,onsager1944crystal}. For the purpose of efficient training, we use $\cF_{\mathrm{WDCE}}$ as learning objectives and train on each problem for $50\mathrm{k}$ steps. In \cref{tab:exp_ising_16}, we report the ESS and the \textbf{absolute error of magnetization} and \textbf{2-point correlation} (respectively defined in \cref{eq:ising_mag_err_def,eq:ising_corr_err_def}) to the estimated ground truth. We also plot the estimated average 2-point correlation (defined in \eqref{eq:ising_row_col_corr}) between a varying distance $r$ in \cref{fig:exp_ising16_2ptcorr}, and defer further results to \cref{app:exp_ising16_eval}. Our learned distribution accurately matches the theoretically predicted trend of correlation functions, which exponentially decays to the non-zero spontaneous magnetization at $\betal$, polynomial decays at $\betac$ and exponential decays to $0$ at $\betah$. As shown in the table and figure, MDNS outperforms LEAPS by a large margin across almost all metrics. MDNS accurately learns the high-dimensional distribution across all temperatures, as evidenced by low observables errors and high ESS values, whereas LEAPS fails at $\betac$ and $\betal$ despite long training.

\textbf{Warm-up for lower temperatures.} Instead of only training from scratch, our training includes a \textbf{warm-up phase}. When targeting hard distributions such as $\betac$ and $\betal$, to help the model better locate the modes of the target distribution, we start the training by fitting easier ones such as $\betah$ for a short ($20\mathrm{k}$ steps in this example) period of time. An ablation study to demonstrate the importance of this proposed technique is presented in \cref{app:exp_ising4_eval}.
A more systematic study on the warm-up is deferred to future work \cite{guo2025proximal}.

\textbf{Preconditioning.} In learning diffusion samplers for a continuous distribution $\nu\propto\e^{-V}$ on $\R^d$, one typically leverages the score information $\nabla\log\nu$ in the neural network architecture to guide the sampling process for faster convergence and better sampling quality, known as preconditioning. We also explore similar techniques to improve the training of MDNS, see \cref{app:exp_ising_precond} for details.

\subsection{Potts model on Square Lattice}
\label{sec:exp_potts}
\begin{figure}[t]
    \centering
    \includegraphics[width=0.9\linewidth]{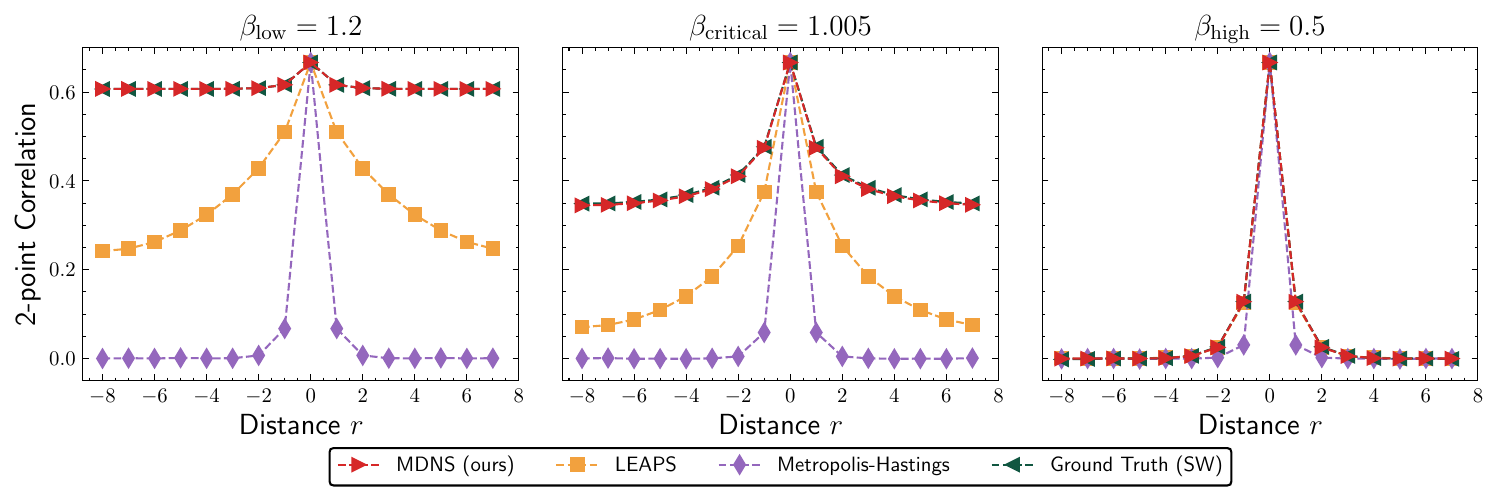}
    \caption{Average of 2-point correlation $C^{\mathrm{row}}(k, k+ r)$ of samples from $16\times16$ Potts model.}
    \label{fig:exp_potts16_2ptcorr}
\end{figure}

\begin{table}[t]
\centering
\caption{Results for learning $16\times16$ Potts model at different temperatures, best in \textbf{bold}. Mag. and Corr. represent absolute errors of magnetization and 2-point correlation to ground truth values.}
\label{tab:exp_potts_16}
\resizebox{\linewidth}{!}{
\begin{tabular}{cccccccccc}
\toprule
Temperature                     & \multicolumn{3}{c}{$\betal=1.2$}                        & \multicolumn{3}{c}{$\betac=1.005$} &   \multicolumn{3}{c}{$\betah=0.5$}                      \\ 
\midrule
Metrics & Mag. $\downarrow$ & Corr. $\downarrow$  & ESS $\uparrow$ & Mag. $\downarrow$ & Corr. $\downarrow$ & ESS $\uparrow$ & Mag. $\downarrow$ & Corr. $\downarrow$  & ESS $\uparrow$ \\
\midrule
MDNS (ours)                  &  $\mathbf{1.3e{-3}}$  &   $\mathbf{8.8e{-5}}$   &   $\mathbf{0.933}$   &  $\mathbf{4.3e{-3}}$   &  $\mathbf{2.9e{-3}}$  &    $\mathbf{0.875}$     &    $\mathbf{2.2e{-3}}$     &  $\mathbf{5.8e{-4}}$ & $0.983$  \\
\midrule
LEAPS                          & $2.9\e{-1}$   &    $2.5\e{-1}$     &  $0.012$     &   $2.7\e{-1}$     &  $2.0\e{-1}$  &   $0.004$      &   $2.9\e{-3}$   &    $1.2\e{-3}$            & $\mathbf{0.991}$ \\
\midrule
Baseline (MH)                          &  $7.4\e{-1}$  &     $5.6\e{-1}$    &   /    &  $5.2\e{-1}$   &  $3.5\e{-1}$  &       /  &    $3.5\e{-2}$     &  $1.6\e{-2}$ &  / \\
\bottomrule
\end{tabular}
}
\end{table}

We now consider a harder problem known as the standard Potts model, which has $q(\ge2)$ spin states on a square lattice with $L$ sites per dimension, $\Lambda=\{1,...,L\}^2$, i.e., the state space is $\cX_0=\{1,2,...,q\}^\Lambda$. Given an interaction parameter $J\in\R$ and an inverse temperature $\beta>0$, the probability distribution of any configuration $x\in\cX_0$ is given by
\begin{equation}
    \pi(x)=\frac{1}{Z}\e^{-\beta H(x)},~\text{where}~H(x)=-J\sum_{i\sim j}1_{x^i=x^j}~\text{and}~Z=\sum_{x\in\cX_0}\e^{-\beta H(x)}.
    \label{eq:potts}
\end{equation}
In the above equation, $i\sim j$ also means that $i,j\in\Lambda$ are adjacent on the lattice, and we impose the same periodic boundary conditions. As a generalization of the Ising model, the Potts model also exhibits diverse behaviors across different temperatures, and the critical inverse temperature is known to be $\betac=\log(1+\sqrt{q})$ \cite{beffara2012self}. In the following experiment, we consider the system size to be $L=16$, fix $q=3$ and $J=1$, and vary the inverse temperature $\beta$, ranging from $\betah=0.5$, $\betac=\log(1+\sqrt3)=1.005$, to $\betal=1.2$.

Similar to the training for the Ising model, we use $\cF_{\mathrm{WDCE}}$ and train on each temperature for $100\mathrm{k}$ steps, among which $30\mathrm{k}$ steps are in the warm-up phase for training $\betac$ and $\betal$. We report quantitative metrics (defined in \cref{eq:potts_corr_err_def,eq:potts_mag_err_def}) in \cref{tab:exp_potts_16} and similarly plot the estimated average 2-point correlation (defined in \eqref{eq:potts_row_col_corr} between a varying distance $r$ in \cref{fig:exp_potts16_2ptcorr}. More results can be found in \cref{app:exp_potts}. Again, our approach recovers a distribution with a correlation function whose decay patterns are well aligned with the ground truth, whereas other baselines fail to do so. It is worth noting that the MH algorithm cannot mix even after a continuous simulation of more than $20$ hours, while it remains a successful approach for all Ising model experiments. Such a difference likely originates from the exponential increase in the state space cardinality $|\cX_0|$ despite the system size $L$ remaining the same. For example, a $4 \times 4$ Ising model has around $60\mathrm{k}$ distinct states, while a $4\times 4$ Potts model with $q = 3$ has $40\mathrm{M}$ distinct states, which clearly highlights the increased difficulty of Potts model sampling. Nevertheless, our approach succeeds with a moderate increase in training iterations. Moreover, the evaluation shows that MDNS performs best across almost all metrics and temperatures, suggesting that our framework and learning objectives are highly scalable.

\section{Conclusion, Limitations and Future Directions}
\label{sec:conc}
This paper introduces Masked Diffusion Neural Sampler (MDNS), a novel framework for training discrete neural samplers based on stochastic optimal control and masked diffusion models. While MDNS has proven effective for distributions arising in statistical physics, its performance on other families of discrete distributions, such as those on graphs or related to combinatorial optimization, is unknown.
We also propose that the framework can be extended to fine-tune pretrained discrete diffusion models given a (possibly non-differentiable) reward function, which is left for future work \cite{zhu2025enhancing}.
Theoretically, we conjecture that our interpolation between the masked and the target distribution possesses superior properties compared to the geometric annealing $(\pi_\eta\propto\e^{-\eta U})_{\eta\in[0,1]}$ used in LEAPS \cite{holderrieth2025leaps}. A rigorous theoretical analysis of annealing paths in discrete spaces, potentially building upon insights in continuous spaces \cite{guo2025provable,guo2025complexity}, remains a key area for future investigation.

\begin{ack}
    WG and MT thank Peter Holderrieth and Michael Albergo for insightful discussions on the paper \cite{holderrieth2025leaps} during ICLR 2025. YZ and MT are grateful for partial support by NSF Grants DMS-1847802, DMS-2513699, DOE Grants NA0004261, SC0026274, and Richard Duke Fellowship. WG, JC, and YC acknowledge support from NSF Grants ECCS-1942523, DMS-2206576, and CMMI-2450378.
\end{ack}

\newpage
\printbibliography

\newpage
\appendix

\paragraph{Broader impacts.} This work focuses on foundational research in sampling algorithms for discrete distributions where the unnormalized probability mass function is known. Advancements in this area have the potential to accelerate scientific discovery and improve solutions to complex problems in fields like statistical physics and machine learning. Given the specific scope of our proposed method, we do not foresee direct negative societal impacts stemming from its application.

\paragraph{Organization of the appendix.} \cref{app:rel_work} includes a detailed discussion of the related literature. \cref{app:alg} contains the pseudo-code for our proposed algorithm. We review the relevant theory of CTMC and SOC in \cref{app:theory}, and provide all proofs omitted in the main text. In \cref{app:exp_ising,app:exp_potts}, we present detailed experimental setups and additional results for learning the Ising model and Potts model, respectively,  including ablation studies and further visualizations. Finally, \cref{app:unif} discusses the extension of our SOC framework to the training of uniform diffusion neural samplers (UDNS).

\section{Related Works}
\label{app:rel_work}
\paragraph{Neural samplers.} Recently, a growing trend of research directions has been focusing on training neural samplers \cite{noe2019boltzmann, he2025no}, where one amortizes the sampling procedure through learning a neural network. Most of the work has focused on learning distributions over the continuous Euclidean space $\R^d$. One of the main approaches is to learn a trainable drift term that drives an SDE to approximate the time-reversal of a pre-selected noising process that converts any target distribution $\pi$ to a noise distribution. This includes methods like Path Integral Sampler (PIS, \cite{zhang2022path}), Neural Schr\"odinger-F\"ollmer Flows (NSFS, \cite{vargas2023bayesian}), Denoising Diffusion Samplers (DDS, \cite{vargas2023denoising}), time-reversed DIffusion Sampler (DIS, \cite{berner2022an}), iterated Denoising Energy Matching (iDEM, \cite{akhoundsadegh2024iterated}), Diffusion-PINN Sampler \cite{shi2024diffusion}, Adjoint Sampling \cite{havens2025adjoint,liu2025adjoint, choi2025non}, etc. Another popular approach trains the process to transport from a prior distribution $\pi_0$ to $\pi$ along a sequence of marginal distributions whose unnormalized densities are all available, such as the geometric interpolation $\pi_t\propto\pi_0^{1-t}\pi_1^t$. Methods of this type include the paper \cite{mate2023learning}, Controlled Monte Carlo Diffusion (CMCD, \cite{vargas2024transport}),  Liouville Flow Importance Sampler (LFIS \cite{tian2024liouville}), Non-Equilibrium Transport Sampler (NETS, \cite{ albergo2025nets}), Sequential Controlled Langevin Diffusions (SCLD, \cite{chen2025sequential}), the paper \cite{chemseddine2025neural}, underdamped diffusion bridges \cite{blessing2025underdamped}, Accelerated Parallel Tempering (APT, \cite{zhang2025accelerated}), Progessive Tempering with Diffusion (PTSD, \cite{rissanen2025progressive}), etc. There are also works motivated by annealed importance sampling \cite{neal2001annealed} and Jarzynski equality \cite{jarzynski1997nonequilibrium,vaikuntanathan2008escorted}, including Monte Carlo Diffusion (MCD, \cite{doucet2022score}) and Langevin Diffusion Variational Inference (LDVI, \cite{geffner2023langevin}). Despite the progress, these methods are restricted to the setting of $\R^d$ and cannot be easily extended to sampling discrete distributions. This distinguishes our proposed MDNS from all the aforementioned approaches.

In terms of methodology, MDNS is most relevant to PIS \cite{zhang2022path}. Despite the difference in the underlying state space of the target distributions, PIS resembles our work in terms of method at a high level, since it approaches the sampling problem in $\R^d$ also from an SOC perspective. However, PIS training does not suffer from the non-differentiability of underlying sampling dynamics, which is a serious problem in discrete state spaces that we need to address by proposing new learning objectives that do not require trajectory differentiability. 

In terms of task, MDNS is most connected to LEAPS \cite{holderrieth2025leaps}, which focuses on the same problem of training discrete neural samplers using CTMC. We also note the concurrent work \cite{ou2025discrete}, which appeared after our submission and proposed a framework, DNFS, similar to LEAPS. Unlike our framework, LEAPS takes a measure transport perspective and trains the neural sampler to follow a sequence of pre-defined discrete distributions. However, LEAPS requires a special inductive bias called locally equivariant networks to efficiently evaluate the training objectives, which greatly limits the algorithm's design flexibility. In contrast, our proposed MDNS does not impose constraints on the choice of score model backbone. Empirically, we also find that escorted-transport-based approaches like LEAPS tend to be ineffective when learning multimodal, high-dimensional distributions, while MDNS remains competitive.

\paragraph{Discrete diffusion models.} Diffusion models \cite{sohl2015deep, ho2020denoising,song2021denoising,song2021scorebased} have been achieving state-of-the-art performances on generative modeling of various data modalities \cite{watson2023novo, zhu2025trivialized, esser2024scaling, jin2025pyramidal, zhu2025diffusion, chi2023diffusion, he2024zeroth,rojas2025diffuse, chen2025solving,ren2025driftlite}. As a natural generalization of diffusion models to discrete state space, discrete diffusion models \cite{austin2021structured, campbell2022continuous, chen2023analog, sun2023score, lou2024discrete, gat2024discrete, shaul2025flow} have gradually attracted the attention of the research community as a competent method for generative modeling of general discrete sequence data. Discrete diffusion models have seen successful applications in numerous tasks, including image synthesis \cite{chang2022maskgit, bai2025meissonic, shi2025muddit}, text generation \cite{nie2025large, gong2025scaling, zheng2024a, arriola2025block}, protein \cite{gruver2023protein, alamdari2023protein, hayes2025simulating, wang2024diffusion, chen2025multi, tang2025peptune}, graph \cite{xu2024discrete, li2025layerdag}, combinatorial optimization \cite{sanokowski2024a,sanokowski2025scalable} etc. Motivated by the huge empirical success of discrete diffusion models in practice, researchers have also worked on the theory of discrete diffusion to understand the key behind its success \cite{benton2024denoising, chen2025convergence, ren2025how, feng2025theoretical, kim2025train, ren2025unified, he2025exactly}. An extensive portion of the literature is also devoted to the development of techniques to enhance further the effectiveness of discrete diffusion models, such as guidance \cite{schiff2025simple, guo2024plug, nisonoff2025unlocking, rojas2025theory}, distillation \cite{zhu2025di, deschenaux2025beyond}, training \cite{zhang2025target, liu2025discrete}, planning \cite{park2025jump, liu2025think}, and inference-time scaling \cite{wang2025remasking,tang2025tr2}.

Masked discrete diffusion model \cite{sahoo2024simple, ou2025your, shi2024simplified, zheng2025masked}, as a particularly effective variant of discrete diffusion, is most related to our work. Masked diffusion models enjoy many favorable theoretical properties, such as a special structure of score functions that enables a simplified, variance-reduced training objective. Masked diffusion also has a natural connection to any-order autoregressive models and can be \textit{precisely} simulated with a fixed number of inference steps. The training of MDNS also benefits from these additional structures, making it more effective than other existing approaches. However, what we propose is indeed a general framework and can be applied to other discrete diffusion models as well, such as uniform discrete diffusion. We defer the discussion of such extension to \cref{app:unif}. 

\paragraph{Optimal control of stochastic dynamics.} The literature of solving SOC problems has a rich history, dating back to the work of Richard Bellman \cite{bellman1966dynamic}, which introduces the Hamilton-Jacobi-Bellman (HJB) equation to characterize the optimal control through PDE. To efficiently solve SOC problems in high dimensions, neural networks are first introduced into the solving process by \cite{e2017deep, han2018solving} to alleviate the curse of dimensionality suffered by traditional approaches. Later, various learning objectives have been proposed to train the neural network for solving the SOC problems \cite{domingoenrich2025adjoint, hua2024simulation}. We refer interested readers to \cite{nusken2021solving} and \cite{domingoenrich2024taxonomy} for a detailed review of the properties of different objectives.

Although all the works above focus on solving SOC problems in $\R^d$ (i.e., controlling SDEs instead of CTMCs), MDNS is still tightly connected to this literature due to our SOC formulation of the discrete sampling problem. Moreover, in our main training framework, we took a path measure perspective for designing the learning objectives, which generalizes the perspective used in \cite{nusken2021solving} to a discrete state space. However, there is a notable difference between the SOC for CTMC (considered in this work) and the SOC for SDEs (widely studied in the literature) due to the non-differentiable nature of CTMCs, which further limits the feasible choice of learning objectives. These differences separate our paper from the works mentioned above.

\section{Details of Algorithms}
\label{app:alg}
In \cref{alg:mask}, \texttt{Resample\_with\_Mask} means sample random variables $\{\lambda^{(i,r)}\}_{1\le i\le B,1\le r\le R}\iid\unif(0,1)$, and for each $i$ and $r$, obtain $X^{(i,r)}$ by randomly masking each entry of $X^{(i)}$ with probability $\lambda^{(i,r)}$. \texttt{Sample\_Trajectories} is defined as in \cref{alg:sample_traj}, which is also the inference algorithm for MDNS.

\begin{algorithm}[h]
\caption{\texttt{Sample\_Trajectories}: Sample trajectories and compute weights}
\label{alg:sample_traj}
\begin{algorithmic}[1]
\Require score model $s_{\theta}$, reward function $r:\cX_0\to\R$, batch size $B$.
\State Initialize fully masked sequences $\{X^{(i)}=(\mask,...,\mask)\}_{1\le i\le B}$ and weights $\{W^{(i)}=0\}_{1\le i\le B}$.
\State Sample $B$ i.i.d. permutations of $\{1,...,D\}$: $\{\Pi^{(i)}=(\Pi^{(i)}_1,...,\Pi^{(i)}_D)\}_{1\le i\le B}$.
\For{$d = 1$ \textbf{to} $D$}
    \State Call the score model and get all the scores $\{s_\theta(X^{(i)})\}_{1\le i\le B}$.
    \State For each $1\le i\le B$, sample a random integer $n^{(i)}$ in $\{1,...,N\}$ following the probability distribution $\Pr(n^{(i)}=n)=s_\theta(X^{(i)})_{\Pi^{(i)}_d,n}$, and update the $\Pi^{(i)}_d$-th entry of $X^{(i)}$ as $n^{(i)}$.
    \State For each $1\le i\le B$, update weights $W^{(i)}\gets W^{(i)}+\log\ro{\frac{1}{N}\left/s_\theta(X^{(i)})_{\Pi^{(i)}_d,n^{(i)}}\right.}$.
\EndFor
\State For each $1\le i\le B$, update weights with the final reward: $W^{(i)}\gets W^{(i)}+r(X^{(i)})$.

\noindent\Return pairs of sample and weights $\{X^{(i)},W^u(X^{(i)}):=W^{(i)}\}_{1\le i\le B}$.
\end{algorithmic}
\end{algorithm}

\section{Theory of Continuous-time Markov Chain and Stochastic Optimal Control} 
\label{app:theory}
\subsection{Continuous-time Markov Chain}
\label{app:theory_ctmc}

We refer readers to \cite{campbell2022continuous,lou2024discrete,sun2023score,lou2024discrete,chen2025convergence,ren2025how,ren2025fast} for the general theory of CTMC. Below, we present several key lemmas that will be used in our paper.

\begin{lemma}
Let $X$ be a CTMC with generator $Q$, and let $p_t$ denote the probability distribution of $X_t$, i.e., $p_t(\cdot)=\Pr(X_t=\cdot)$. Then $p$ satisfies the following \textbf{Kolmogorov forward equation}:
\begin{equation}
    \partial_tp_t(x)=\sum_y Q_t(y,x)p_t(y)=\sum_{y\ne x}(Q_t(y,x)p_t(y)-Q_t(x,y)p_t(x)),~\forall x.
    \label{eq:kolfwd}
\end{equation}
\label{lem:kolfwd}

Moreover, the solution $p$ to \cref{eq:kolfwd} given boundary condition at either $0$ or $T$ is unique when $[0,T]\ni t\mapsto Q_t\in\R^{\cX\times\cX}$ is a continuous function.
\end{lemma}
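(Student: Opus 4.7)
The plan is to derive the forward equation directly from the definition of the generator in \cref{eq:def_gen_mat} via the Chapman-Kolmogorov (total probability) identity, and then to obtain uniqueness by recognizing \cref{eq:kolfwd} as a linear ODE on the finite-dimensional space $\R^{|\cX|}$.

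First I would fix $x \in \cX$ and $t \in [0,T)$, and expand the one-step transition: by the definition of the generator,
\begin{equation*}
\Pr(X_{t+\Delta t}=x \mid X_t=y) = 1_{x=y} + Q_t(y,x)\Delta t + o(\Delta t)
\end{equation*}
as $\Delta t \to 0$, for each fixed $y \in \cX$. Then I would apply the Chapman-Kolmogorov relation
\begin{equation*}
p_{t+\Delta t}(x) = \sum_{y \in \cX} \Pr(X_{t+\Delta t}=x \mid X_t = y)\, p_t(y),
\end{equation*}
substitute the expansion, and use the finiteness of $\cX$ to interchange the sum with the $o(\Delta t)$ estimate (the sum is finite, so $\sum_y o(\Delta t) \cdot p_t(y) = o(\Delta t)$ without further justification). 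Rearranging and dividing by $\Delta t$ gives
\begin{equation*}
\frac{p_{t+\Delta t}(x)-p_t(x)}{\Delta t} = \sum_{y \in \cX} Q_t(y,x)p_t(y) + o(1),
\end{equation*}
and taking $\Delta t \to 0$ yields the first equality in \cref{eq:kolfwd}. The second equality is an algebraic rewriting: split off the $y=x$ term and use the identity $Q_t(x,x) = -\sum_{y \neq x} Q_t(x,y)$ from the generator definition.

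For uniqueness, I would view \cref{eq:kolfwd} as a linear ODE system in the finite-dimensional vector $p_t \in \R^{|\cX|}$, of the form $\dot p_t = p_t Q_t$ in row-vector convention. Under the assumption that $t \mapsto Q_t$ is continuous on the compact interval $[0,T]$, the matrix norm $\|Q_t\|$ is bounded by some $M < \infty$, so the right-hand side $(t,p) \mapsto p Q_t$ is continuous in $t$ and globally Lipschitz in $p$ with constant $M$. Standard Picard-Lindel\"of theory then gives global existence and uniqueness of the solution on $[0,T]$ for any boundary condition prescribed at either endpoint; for the endpoint $T$, one simply runs the ODE backward in time, which remains well-posed by the same Lipschitz estimate.

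The main obstacle is essentially bookkeeping rather than anything deep: one needs to be careful that the $o(\Delta t)$ terms in the generator expansion are uniform in $y$ over the (finite) state space, so that they can be summed against $p_t(y)$ without issue. Because $\cX$ is finite, this uniformity is automatic, so no measure-theoretic subtleties arise and the argument reduces to elementary calculus plus standard linear ODE theory.
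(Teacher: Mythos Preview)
Your proposal is correct and follows essentially the same approach as the paper: expand the one-step transition via the generator definition, apply total probability, take $\Delta t\to 0$ to obtain the first equality, rewrite using $Q_t(x,x)=-\sum_{y\ne x}Q_t(x,y)$, and then invoke linear ODE uniqueness on $\R^{|\cX|}$. You give a bit more detail on the uniqueness step (explicit Lipschitz bound and Picard--Lindel\"of), but the argument is the same in substance.
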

\begin{proof}
By \cref{eq:def_gen_mat}, we have
\begin{align*}
    p_{t+\Delta t}(x)&=\sum_y \Pr(X_{t+\Delta t}=x|X_t=y)p_t(y)=\sum_y\ro{1_{y=x}+\Delta tQ_t(y,x)+O(\Delta t^2)}p_t(y)\\
    &=p_t(x)+\Delta t\sum_yQ_t(y,x)p_t(y)+O(\Delta t^2).
\end{align*}
Therefore, by taking the limit $\Delta t\to0$, we have
\begin{align*}
    \partial_tp_t(x)&=\sum_yQ_t(y,x)p_t(y)=\sum_{y\ne x}Q_t(y,x)p_t(y)+Q_t(x,x)p_t(x)\\
    &=\sum_{y\ne x}Q_t(y,x)p_t(y)-\sum_{y\ne x}Q_t(x,y)p_t(x).
\end{align*}

The uniqueness of the solution follows from the uniqueness of the solution to the linear ODE, by equivalently writing \cref{eq:kolfwd} as $\partial_t\bp_t=Q_t\tp\bp_t$, $t\in[0,T]$, where $\bp_t=(p_t(x):x\in\cX)$ are column vectors in $\R^{|\cX|}$. 
\end{proof}

\begin{lemma}
    Let $X$ be a CTMC with generator $Q$. For any bounded test function $\phi:\cX\to\R$, define $\phi_t(\cdot):=\E[\phi(X_T)|X_t=\cdot]$. Then $\phi_t$ satisfies the following \textbf{Kolmogorov backward equation}:
    \begin{equation}
        -\partial_t\phi_t(x)=\sum_y\phi_t(y)Q_t(x,y)=\sum_{y\ne x}(\phi_t(y)-\phi_t(x))Q_t(x,y),~\phi_T(x)=\phi(x),~\forall x\in\cX.
        \label{eq:kolbwd}
    \end{equation}
    Moreover, the solution $\phi$ to \cref{eq:kolbwd} is unique when $[0,T]\ni t\mapsto Q_t\in\R^{\cX\times\cX}$ is a continuous function.
    \label{lem:kolbwd}
\end{lemma}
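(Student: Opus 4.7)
The plan is to derive the Kolmogorov backward equation directly from the tower property of conditional expectation combined with the definition of the generator in \cref{eq:def_gen_mat}, mirroring the computation in the proof of \cref{lem:kolfwd} but conditioning on the intermediate state $X_{t+\Delta t}$ rather than marginalizing it.

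First, I would fix $x\in\cX$ and $t\in[0,T]$, and use the Markov property to write
\[
\phi_t(x)=\E[\phi(X_T)\mid X_t=x]=\sum_y\Pr(X_{t+\Delta t}=y\mid X_t=x)\,\E[\phi(X_T)\mid X_{t+\Delta t}=y]=\sum_y\Pr(X_{t+\Delta t}=y\mid X_t=x)\,\phi_{t+\Delta t}(y).
\]
Next, I would plug in the local expansion $\Pr(X_{t+\Delta t}=y\mid X_t=x)=1_{x=y}+\Delta t\,Q_t(x,y)+O(\Delta t^2)$ furnished by \cref{eq:def_gen_mat}, which yields
\[
\phi_t(x)=\phi_{t+\Delta t}(x)+\Delta t\sum_y Q_t(x,y)\phi_{t+\Delta t}(y)+O(\Delta t^2),
\]
using boundedness of $\phi$ (hence of the $\phi_{t+\Delta t}(y)$ uniformly in $y$) to justify absorbing the remainder uniformly. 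Rearranging and sending $\Delta t\to 0$ gives $-\partial_t\phi_t(x)=\sum_y Q_t(x,y)\phi_t(y)$, which is the first equality in \cref{eq:kolbwd}. The second equality then follows from the identity $Q_t(x,x)=-\sum_{y\ne x}Q_t(x,y)$, which lets me rewrite $\sum_y Q_t(x,y)\phi_t(y)=\sum_{y\ne x}Q_t(x,y)(\phi_t(y)-\phi_t(x))$. The terminal condition $\phi_T(x)=\phi(x)$ is immediate from the definition.

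For uniqueness, I would view \cref{eq:kolbwd} as a terminal-value problem for the linear ODE $-\dot{\bphi}_t=Q_t\bphi_t$ on $\R^{|\cX|}$, where $\bphi_t=(\phi_t(x):x\in\cX)$. Since $\cX$ is finite and $t\mapsto Q_t$ is continuous on $[0,T]$, the matrix-valued coefficient is bounded and continuous, so standard Picard–Lindelöf theory (or equivalently, the time-ordered exponential) gives existence and uniqueness of the solution on $[0,T]$ for any prescribed terminal value $\bphi_T$; time reversal reduces this to the standard forward-in-time initial-value result. The main obstacle is essentially bookkeeping: one has to verify that the $O(\Delta t^2)$ term is truly uniform in $x$ (which uses finiteness of $\cX$ and boundedness of $\phi$), and that continuity of $Q_t$ in $t$ transfers to continuity of $\phi_t$ in $t$ so that the limit $\phi_{t+\Delta t}(y)\to\phi_t(y)$ is valid inside the sum, but neither point is a genuine technical obstruction in the finite-state setting.
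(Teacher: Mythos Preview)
Your proposal is correct and matches the paper's proof essentially step for step: both use the tower/Markov property to write $\phi_t(x)=\sum_y\Pr(X_{t+\Delta t}=y\mid X_t=x)\,\phi_{t+\Delta t}(y)$, expand the transition probability via \cref{eq:def_gen_mat}, pass to the limit, and then invoke linear ODE uniqueness for the vector form $-\dot{\bphi}_t=Q_t\bphi_t$. If anything, you are slightly more careful than the paper in flagging the uniformity of the $O(\Delta t^2)$ remainder and the continuity of $t\mapsto\phi_t$.
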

\begin{proof}
By \cref{eq:def_gen_mat}, we have
\begin{align*}
    \phi_t(x) &= \E[\E[\phi(X_T)|X_{t+\Delta t}]|X_t=x]=\E[\phi_{t+\Delta t}(X_{t+\Delta t})|X_t=x] \\
    &=\sum_y\phi_{t+\Delta t}(y)(1_{x=y}+\Delta t Q_t(x,y)+O(\Delta t^2)) \\
    &=\phi_{t+\Delta t}(x)+\Delta t\sum_y\phi_{t+\Delta t}(y)Q_t(x,y)+O(\Delta t^2).
\end{align*}
Hence, by taking the limit $\Delta t\to 0$, we have
\begin{align*}
    -\partial_t\phi_t(x) &= \sum_y\phi_t(y)Q_t(x,y)=\sum_{y\ne x}\phi_t(y)Q_t(x,y)+\phi_t(x)Q_t(x,x)=\sum_{y\ne x}(\phi_t(y)-\phi_t(x))Q_t(x,y).
\end{align*}

The uniqueness of the solution also follows from the uniqueness of the solution to the linear ODE, by equivalently writing \cref{eq:kolbwd} as $-\partial_t\bphi_t=Q_t\bphi_t$, $t\in[0,T]$; $\bphi_T=\bphi$, where $\bphi_t=(\phi_t(x):x\in\cX)$ and $\bphi=(\phi(x):x\in\cX)$ are column vectors in $\R^{|\cX|}$. 
\end{proof}

\paragraph{Proof of \cref{lem:ctmc_rnd}.}
\begin{proof}
Let $\Delta t=\frac{T}{K}$ and $t_k=k\Delta t$. We have
\begin{align*}
    \log\de{\P^2}{\P^1}(\xi)&=\log\de{\mu_2}{\mu_1}(\xi_0)+\sum_{k=0}^{K-1}\log\frac{\P^2_{t_{k+1}|t_k}(\xi_{t_{k+1}}|\xi_{t_k})}{\P^1_{t_{k+1}|t_k}(\xi_{t_{k+1}}|\xi_{t_k})}+O(\Delta t) \\
    & =\log\de{\mu_2}{\mu_1}(\xi_0)+\sum_{k=0}^{K-1}\ro{\log\frac{1_{\xi_{t_{k+1}}=\xi_{t_k}}+\Delta tQ^2_{t_k}(\xi_{t_{k+1}}|\xi_{t_k})}{1_{\xi_{t_{k+1}}=\xi_{t_k}}+\Delta tQ^1_{t_k}(\xi_{t_{k+1}}|\xi_{t_k})}+O(\Delta t^2)}+O(\Delta t)
\end{align*}
By the definition of the generator. If $\xi_{t_{k+1}}\ne\xi_{t_k}$, then 
$$\log\frac{1_{\xi_{t_{k+1}}=\xi_{t_k}}+\Delta tQ^2_{t_k}(\xi_{t_{k+1}}|\xi_{t_k})}{1_{\xi_{t_{k+1}}=\xi_{t_k}}+\Delta tQ^1_{t_k}(\xi_{t_{k+1}}|\xi_{t_k})}=\log\frac{Q^2_{t_k}(\xi_{t_k},\xi_{t_{k+1}})}{Q^1_{t_k}(\xi_{t_k},\xi_{t_{k+1}})}.$$
If otherwise, by Taylor expansion,
\begin{align*}
    \log\frac{1_{\xi_{t_{k+1}}=\xi_{t_k}}+\Delta tQ^2_{t_k}(\xi_{t_{k+1}}|\xi_{t_k})}{1_{\xi_{t_{k+1}}=\xi_{t_k}}+\Delta tQ^1_{t_k}(\xi_{t_{k+1}}|\xi_{t_k})}&=\Delta t(Q^2_{t_k}(\xi_{t_k},\xi_{t_k})-Q^1_{t_k}(\xi_{t_k},\xi_{t_k}))+O(\Delta t^2)\\
    &=\Delta t\sum_{y\ne\xi_k}(Q^1_{t_k}-Q^2_{t_k})(\xi_{t_k},y)+O(\Delta t^2).
\end{align*}
Hence, taking the limit $K\to\infty$, we have
\begin{align*}
    \log\de{\P^2}{\P^1}(\xi)&=\log\de{\mu_2}{\mu_1}(\xi_0)+\sum_{k:\xi_{t_{k+1}}\ne\xi_{t_k}}\log\frac{Q^2_{t_k}(\xi_{t_k},\xi_{t_{k+1}})}{Q^1_{t_k}(\xi_{t_k},\xi_{t_{k+1}})}\\
    &+\Delta t\sum_{k:\xi_{t_{k+1}}=\xi_{t_k}}\sum_{y\ne\xi_k}(Q^1_{t_k}-Q^2_{t_k})(\xi_{t_k},y)+O(\Delta t)\\
    &\to\log\de{\mu_2}{\mu_1}(\xi_0)+\sum_{t:\xi_{t-}\ne\xi_t}\log\frac{Q^2_t(\xi_{t-},\xi_t)}{Q^1_t(\xi_{t-},\xi_t)}+\int_0^T\sum_{y\ne\xi_t}(Q^1_t(\xi_t,y)-Q^2_t(\xi_t,y))\d t.
\end{align*}
\end{proof}

\begin{corollary}
Given two CTMCs with generators $Q^1,Q^2$ and initial distributions $\mu_1,\mu_2$ on $\cX$, let $\P^1,\P^2$ be the associated path measures. Then the KL divergence between these two path measures can be written as
\begin{equation*}
    \kl(\P^2\|\P^1)=\kl(\mu_2\|\mu_1)+\E_{\xi\sim\P^2}\int_0^T\sum_{y\ne\xi_t}\ro{Q^2_t\log\frac{Q^2_t}{Q^1_t}+Q^1_t-Q^2_t}(\xi_t,y)\d t.
\end{equation*}
\label{lem:ctmc_kl}
\end{corollary}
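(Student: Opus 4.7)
The plan is to apply \cref{lem:rnd} directly to the definition $\kl(\P^2\|\P^1) = \E_{\xi\sim\P^2}\log\de{\P^2}{\P^1}(\xi)$, and then process the three resulting terms one by one. The initial-distribution term gives $\E_{\xi\sim\P^2}\log\de{\mu_2}{\mu_1}(\xi_0) = \kl(\mu_2\|\mu_1)$ since $\xi_0\sim\mu_2$ under $\P^2$. The integral term $\E_{\xi\sim\P^2}\int_0^T\sum_{y\ne\xi_t}(Q^1_t-Q^2_t)(\xi_t,y)\d t$ already matches part of the claimed expression, so no further work is needed there.

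The main step is the jump-sum term $\E_{\xi\sim\P^2}\sum_{t:\xi_{t-}\ne\xi_t}\log\frac{Q^2_t(\xi_{t-},\xi_t)}{Q^1_t(\xi_{t-},\xi_t)}$. I would convert it to an integral via the standard compensator identity for pure-jump Markov processes: for any predictable, sufficiently integrable $f_t(x,y)$,
\begin{equation*}
    \E_{\xi\sim\P^2}\sum_{t:\xi_{t-}\ne\xi_t}f_t(\xi_{t-},\xi_t) = \E_{\xi\sim\P^2}\int_0^T\sum_{y\ne\xi_t}Q^2_t(\xi_t,y)f_t(\xi_t,y)\d t.
\end{equation*}
One way to justify this is the limiting / time-discretization argument already employed in the proof of \cref{lem:rnd}: partition $[0,T]$ into steps of size $\Delta t=T/K$, write the sum as $\sum_k f_{t_k}(\xi_{t_k},\xi_{t_{k+1}})1_{\xi_{t_k}\ne\xi_{t_{k+1}}}$, and use $\Pr(\xi_{t_{k+1}}=y\mid\xi_{t_k}=x)=\Delta t\,Q^2_{t_k}(x,y)+O(\Delta t^2)$ for $y\ne x$ to obtain $\E[f_{t_k}(\xi_{t_k},\xi_{t_{k+1}})1_{\xi_{t_k}\ne\xi_{t_{k+1}}}\mid\xi_{t_k}]=\Delta t\sum_{y\ne\xi_{t_k}}Q^2_{t_k}(\xi_{t_k},y)f_{t_k}(\xi_{t_k},y)+O(\Delta t^2)$, then sum over $k$ and pass to the limit. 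Alternatively, one can invoke Dynkin's formula or the martingale characterization of the compensator of the jump measure of a CTMC.

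Setting $f_t(x,y)=\log\frac{Q^2_t(x,y)}{Q^1_t(x,y)}$ in the compensator identity converts the jump sum into $\E_{\xi\sim\P^2}\int_0^T\sum_{y\ne\xi_t}Q^2_t(\xi_t,y)\log\frac{Q^2_t(\xi_t,y)}{Q^1_t(\xi_t,y)}\d t$, which combines with the previously obtained integral term to yield exactly the bracketed integrand $Q^2_t\log\frac{Q^2_t}{Q^1_t}+Q^1_t-Q^2_t$. Summing the three contributions gives the claim.

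The main obstacle is really only the compensator identity and its integrability prerequisites: one needs the expected number of jumps on $[0,T]$ to be finite (which follows from the finiteness of $\cX$ and boundedness of the generators assumed implicitly throughout) and $\log(Q^2/Q^1)$ to be finite along actual trajectories, i.e., $\P^2$-absolute continuity with respect to $\P^1$ so that $\kl(\P^2\|\P^1)$ is well-defined; under these mild conditions, which are standard in the finite-state CTMC setting considered here, the argument goes through without issue.
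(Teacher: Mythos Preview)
Your proposal is correct and follows essentially the same route as the paper: take $\E_{\P^2}$ of the RN-derivative formula in \cref{lem:rnd}, handle the initial and integral terms trivially, and convert the jump sum into an integral via the compensator identity, justified by the same time-discretization argument used in the proof of \cref{lem:rnd}. The paper carries out exactly this discretization explicitly rather than also citing Dynkin's formula as an alternative, but the substance is identical.
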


\begin{proof}
It suffices to take expectation w.r.t. $\P^2$ on both sides of \cref{eq:ctmc_rnd}. For the first term on the r.h.s., $\E_{\xi\sim\P^2}\log\de{\mu_2}{\mu_1}(\xi_0)=\E_{\mu_2(\xi_0)}\log\de{\mu_2}{\mu_1}(\xi_0)=\kl(\mu_2\|\mu_1)$; for the third term, the expectation is straightforward. To compute the expectation of the second term under $\P^2$, we start with the discrete-time version and take limit when $K\to\infty$:
\begin{align*}
    &\E_{\xi\sim\P^2}\sq{\sum_{k:\xi_{t_{k+1}}\ne\xi_{t_k}}\log\frac{Q^2_{t_k}(\xi_{t_k},\xi_{t_{k+1}})}{Q^1_{t_k}(\xi_{t_k},\xi_{t_{k+1}})}}=\E_{\xi\sim\P^2}\sq{\sum_{k=0}^{K-1}\log\frac{Q^2_{t_k}(\xi_{t_k},\xi_{t_{k+1}})}{Q^1_{t_k}(\xi_{t_k},\xi_{t_{k+1}})}1_{\xi_{t_{k+1}}\ne\xi_{t_k}}}\\
    &=\sum_{k=0}^{K-1}\E_{\P^2_{t_k}(\xi_{t_k})\P^2_{t_{k+1}|t_k}(\xi_{t_{k+1}}|\xi_{t_k})}\sq{\log\frac{Q^2_{t_k}(\xi_{t_k},\xi_{t_{k+1}})}{Q^1_{t_k}(\xi_{t_k},\xi_{t_{k+1}})}1_{\xi_{t_{k+1}}\ne\xi_{t_k}}}\\
    &=\sum_{k=0}^{K-1}\E_{\P^2_{t_k}(\xi_{t_k})}\sum_{y\ne\xi_{t_k}}\P^2_{t_{k+1}|t_k}(y|\xi_{t_k})\log\frac{Q^2_{t_k}(\xi_{t_k},y)}{Q^1_{t_k}(\xi_{t_k},y)}\\
    &=\sum_{k=0}^{K-1}\E_{\P^2_{t_k}(\xi_{t_k})}\sum_{y\ne\xi_{t_k}}\ro{\Delta tQ^2_{t_k}(\xi_{t_k},y)\sq{\log\frac{Q^2_{t_k}(\xi_{t_k},y)}{Q^1_{t_k}(\xi_{t_k},y)}}+O(\Delta t^2)}\\
    &\to\int_0^T\E_{\P^2_t(\xi_t)}\sum_{y\ne\xi_t}Q^2_t\log\frac{Q^2_t}{Q^1_t}(\xi_t,y)\d t=\E_{\xi\sim\P^2}\int_0^T\sum_{y\ne\xi_t}Q^2_t\log\frac{Q^2_t}{Q^1_t}(\xi_t,y)\d t
\end{align*}
Thus, the proof is complete.
\end{proof}

\subsection{Stochastic Optimal Control}
\label{app:theory_soc}
Here, we give proof of some key lemmas in \cref{sec:sampler_prob_setting} concerning the SOC for CTMC. See also \cite{wang2025finetuning} for a review of results.

The proof of \cref{eq:q_star_q_0} can be found in proving the following lemma.
\begin{lemma}
    The value function satisfies the following \textbf{Hamiltonian-Jacobi-Bellman (HJB) equation}:
    \begin{equation}
        \partial_tV_t(x)=\sum_{y\ne x}Q^0_t(x,y)(1-\e^{V_t(y)-V_t(x)})\iff\partial_t\e^{V_t(x)}=\sum_{y\ne x}Q^0_t(x,y)(\e^{V_t(x)}-\e^{V_t(y)}).
        \label{eq:hjb}
    \end{equation}
    \label{lem:hjb}
\end{lemma}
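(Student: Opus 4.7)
The plan is to derive \cref{eq:hjb} via the standard dynamic programming principle (DPP) applied to the definition in \cref{eq:value-func}. For a small increment $\Delta t>0$, Bellman's optimality principle gives
\begin{equation*}
-V_t(x)=\inf_u\Bigl\{\Delta t\sum_{y\ne x}\bigl(Q^u_t\log\tfrac{Q^u_t}{Q^0_t}-Q^u_t+Q^0_t\bigr)(x,y)+\E\bigl[-V_{t+\Delta t}(X_{t+\Delta t})\mid X_t=x\bigr]\Bigr\}+o(\Delta t),
\end{equation*}
where the integral from $t$ to $t+\Delta t$ has been approximated to first order by the integrand at $X_t=x$. For the conditional expectation, I would expand using the generator definition \cref{eq:def_gen_mat} exactly as in the proof of \cref{lem:kolbwd}, obtaining $\E[-V_{t+\Delta t}(X_{t+\Delta t})\mid X_t=x]=-V_{t+\Delta t}(x)-\Delta t\sum_{y\ne x}Q^u_t(x,y)(V_{t+\Delta t}(y)-V_{t+\Delta t}(x))+o(\Delta t)$.

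Substituting and rearranging yields
\begin{equation*}
V_{t+\Delta t}(x)-V_t(x)=\inf_u\,\Delta t\sum_{y\ne x}\Bigl[Q^u_t\bigl(\log\tfrac{Q^u_t}{Q^0_t}-1+V_{t+\Delta t}(x)-V_{t+\Delta t}(y)\bigr)+Q^0_t\Bigr](x,y)+o(\Delta t).
\end{equation*}
The summand decouples across $y$ and admits a pointwise minimization in $q:=Q^u_t(x,y)\ge 0$ for each $y\ne x$. Writing $q_0:=Q^0_t(x,y)$ and $\Delta V:=V_{t+\Delta t}(x)-V_{t+\Delta t}(y)$, the scalar function $q\mapsto q(\log(q/q_0)-1+\Delta V)+q_0$ is strictly convex on $q>0$; the first-order condition $\log(q/q_0)+\Delta V=0$ yields the minimizer $q^*=q_0\,e^{-\Delta V}$, which is precisely the optimal generator of \cref{eq:q_star_q_0}, and the minimum value simplifies to $q_0-q^*=q_0\bigl(1-e^{V_{t+\Delta t}(y)-V_{t+\Delta t}(x)}\bigr)$. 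Summing over $y\ne x$, dividing by $\Delta t$, and letting $\Delta t\to 0$ delivers the first form of \cref{eq:hjb}; the multiplicative form is then immediate from the chain rule $\partial_t e^{V_t(x)}=e^{V_t(x)}\partial_t V_t(x)$.

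The main obstacle in turning this sketch into a fully rigorous proof is justifying the DPP together with the interchange of $\inf_u$ and the $\Delta t\to 0$ limit. Because $\cX$ is finite and any optimal control must satisfy $Q^u_t(x,y)=0$ whenever $Q^0_t(x,y)=0$ (else the KL-type running cost is $+\infty$), the feasible $q$'s lie in a bounded set on which the integrand depends continuously on $(t,V)$; consequently the measurable-selection and verification arguments reduce to elementary scalar calculus and no serious functional-analytic machinery is needed.
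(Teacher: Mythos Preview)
Your proposal is correct and follows essentially the same approach as the paper: the dynamic programming principle, expansion of the conditional expectation via the generator as in \cref{lem:kolbwd}, pointwise minimization over $Q^u_t(x,y)$ yielding both \cref{eq:q_star_q_0} and the first form of \cref{eq:hjb}, and the chain rule for the multiplicative form. Your write-up is slightly more explicit about the convexity and first-order condition in the scalar minimization, and about the regularity caveats, but the argument is the same.
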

\begin{proof}
By the dynamic programming principle, we have
\begin{align*}
    -V_t(x)&=\inf_u\E_{X\sim\P^u}\sq{\ro{\int_t^{t+\Delta t}+\int_{t+\Delta t}^T}\sum_{y\ne X_s}\star_s(X_s,y)\d s-r(X_T)\middle|X_t=x}\\
    &=\Delta t\inf_u\sum_{y\ne x}\star_t(x,y)+O(\Delta t^2)+\inf_u\E_{X\sim\P^u}[-V_{t+\Delta t}(X_{t+\Delta t})|X_t=x],
\end{align*}
where $\star_\cdot=Q^u_\cdot\log\frac{Q^u_\cdot}{Q^0_\cdot}-Q^u_\cdot+Q^0_\cdot$. We decompose the last term as
\begin{align*}
    &\inf_u\E_{X\sim\P^u}[-V_{t+\Delta t}(X_{t+\Delta t})|X_t=x]\\
    &=\inf_u\sq{-\sum_yV_{t+\Delta t}(y)\ro{1_{x=y}+\Delta tQ^u_t(x,y)+O(\Delta t^2)}}\\
    &=\inf_u\sq{-V_{t+\Delta t}(x)-\Delta t\sum_{y\ne x}V_{t+\Delta t}(y)Q^u_t(x,y)+\Delta t\sum_{y\ne x}V_{t+\Delta t}(x)Q^u_t(x,y)+O(\Delta t^2)}\\
    &=-V_{t+\Delta t}(x)+\Delta t\inf_u\sq{\sum_{y\ne x}Q^u_t(x,y)(V_{t+\Delta t}(x)-V_{t+\Delta t}(y))}+O(\Delta t^2).
\end{align*}
Hence, by taking the limit $\Delta t\to0$, we have
\begin{equation}
    \partial_tV_t(x)=\inf_u\sq{\sum_{y\ne x}\ro{Q^u_t\log\frac{Q^u_t}{Q^0_t}-Q^u_t+Q^0_t}(x,y)+(V_t(x)-V_t(y))Q^u_t(x,y)}.
    \label{eq:hjb_intermediate}
\end{equation}
For each pair of $x\ne y$, by optimizing the r.h.s. of \cref{eq:hjb_intermediate} with respect to $Q^u_t(x,y)$, we have
\begin{align*}
    Q^*_t(x,y)&=\argmin_{Q^u_t(x,y)\ge0}\sq{\ro{Q^u_t\log\frac{Q^u_t}{Q^0_t}-Q^u_t+Q^0_t}(x,y)+(V_t(x)-V_t(y))Q^u_t(x,y)}\\
    &=Q^0_t(x,y)\e^{V_t(y)-V_t(x)},
\end{align*}
and plugging this back into \cref{eq:hjb_intermediate} gives the first equality in \cref{lem:hjb}. The second equality is an immediate consequence of the first one.
\end{proof}

\begin{lemma}
    Assume that $t\mapsto Q^0_t$ is continuous. Then the following \textbf{Feynman-Kac formula} holds: $\e^{V_t(x)}=\E_{\P^0}[\e^{r(X_T)}|X_t=x]$.
    \label{lem:fk}
\end{lemma}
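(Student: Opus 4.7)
The plan is to show both sides of the identity satisfy the same linear terminal-value problem and invoke uniqueness.

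First, I would define the right-hand side as $\phi_t(x) := \E_{\P^0}[\e^{r(X_T)}|X_t=x]$. Applying Lemma \ref{lem:kolbwd} with the bounded test function $\phi(\cdot) = \e^{r(\cdot)}$ (boundedness holds since $\cX$ is finite), we get that $\phi_t$ satisfies the Kolmogorov backward equation
\begin{equation*}
    -\partial_t\phi_t(x)=\sum_{y\ne x}(\phi_t(y)-\phi_t(x))Q^0_t(x,y),\quad \phi_T(x)=\e^{r(x)}.
\end{equation*}

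Next, I would show that $\psi_t(x) := \e^{V_t(x)}$ satisfies exactly the same equation. The HJB equation from Lemma \ref{lem:hjb} rewrites immediately as
\begin{equation*}
    -\partial_t\e^{V_t(x)}=\sum_{y\ne x}(\e^{V_t(y)}-\e^{V_t(x)})Q^0_t(x,y),
\end{equation*}
which is precisely the same linear system. For the terminal condition, I would read off from the definition of the value function in \cref{eq:value-func} that when $t=T$, the integral on the right vanishes and the infimum becomes trivial, yielding $-V_T(x)=-r(x)$, hence $\psi_T(x)=\e^{r(x)}$.

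Finally, I would invoke the uniqueness statement in Lemma \ref{lem:kolbwd}: since $t\mapsto Q^0_t$ is assumed continuous, solutions to the Kolmogorov backward equation with given terminal data are unique. Therefore $\psi_t \equiv \phi_t$, which is the claimed identity. The only step that requires any care is checking that $\e^{V_t(\cdot)}$ is genuinely a solution of the backward equation on the same footing as $\phi_t$ (so that uniqueness applies); this is immediate from Lemma \ref{lem:hjb} since the finiteness of $\cX$ guarantees $V_t$ and hence $\e^{V_t}$ are bounded and differentiable in $t$, so no real obstacle arises.
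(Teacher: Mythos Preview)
Your proposal is correct and is essentially the same argument as the paper's, which simply observes that the second form of the HJB equation \cref{eq:hjb} coincides with the Kolmogorov backward equation of \cref{lem:kolbwd} for the test function $\e^{r(\cdot)}$ and then invokes uniqueness. You have just made the terminal-condition check and the uniqueness appeal more explicit.
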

\begin{proof}
This result follows immediately from the second form of the HJB equation \cref{eq:hjb} and the Kolmogorov backward equation \cref{lem:kolbwd}. 
\end{proof}

\begin{lemma}
    Assume that $t\mapsto Q^0_t$ is continuous. Then the optimal marginal distribution $\P^*_t$ satisfies $\P^*_t(x)=\frac{1}{Z}\P^0_t(x)\e^{V_t(x)}$, where $Z=\E_{\P^0_T}\e^r$.
    \label{lem:p_star_t}
\end{lemma}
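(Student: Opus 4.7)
The plan is to use the path-measure Radon-Nikod\'ym derivative established earlier, namely $\de{\P^*}{\P^0}(\xi)=\frac{1}{Z}\e^{r(\xi_T)}$, together with the Feynman-Kac representation of the value function from \cref{lem:fk}. The underlying intuition is that $\P^*$ is obtained from $\P^0$ by a terminal tilt $\frac{1}{Z}\e^{r(X_T)}$; conditioning this tilt on the state at time $t$ should exactly produce the exponential-of-value-function factor $\e^{V_t(x)}$ that appears in the claim.

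Concretely, I would compute the marginal $\P^*_t$ by testing against an arbitrary function $\phi:\cX\to\R$. First, change measure to $\P^0$:
\begin{equation*}
    \E_{\P^*}\phi(X_t)=\E_{\P^0}\sq{\phi(X_t)\de{\P^*}{\P^0}(X)}=\frac{1}{Z}\E_{\P^0}\sq{\phi(X_t)\e^{r(X_T)}}.
\end{equation*}
Next, use the tower property to condition on $X_t$ and apply the Markov property of $\P^0$:
\begin{equation*}
    \E_{\P^*}\phi(X_t)=\frac{1}{Z}\E_{\P^0}\sq{\phi(X_t)\E_{\P^0}\sq{\e^{r(X_T)}\middle|X_t}}=\frac{1}{Z}\E_{\P^0}\sq{\phi(X_t)\e^{V_t(X_t)}},
\end{equation*}
where the last equality invokes \cref{lem:fk}. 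Rewriting this as a sum over $\cX$ and matching coefficients of the arbitrary $\phi$ yields $\P^*_t(x)=\frac{1}{Z}\P^0_t(x)\e^{V_t(x)}$, as claimed. As a consistency check, the memoryless property of $\P^0$ gives $\e^{V_0(x)}=\E_{\P^0}[\e^{r(X_T)}|X_0=x]=\E_{\P^0_T}\e^r=Z$, so $\P^*_0=\pinit$, matching the boundary condition at $t=0$.

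I expect no serious obstacle: the argument is essentially a one-line change of measure followed by conditioning, and continuity of $t\mapsto Q^0_t$ is only needed to justify \cref{lem:fk}. The only subtlety worth flagging is the use of the Markov property to replace conditioning on $\mathcal{F}_t$ by conditioning on $X_t$ alone; this is immediate for CTMCs, so it should not require a separate argument. An alternative route I would consider for self-containedness is to define $\tilde{p}_t(x):=\frac{1}{Z}\P^0_t(x)\e^{V_t(x)}$, differentiate in $t$ using \cref{eq:kolfwd} for $\P^0_t$ and \cref{eq:hjb} for $\e^{V_t(x)}$, and verify that $\tilde{p}$ satisfies the Kolmogorov forward equation for $Q^*$ from \cref{eq:q_star_q_0}; uniqueness in \cref{lem:kolfwd} then identifies $\tilde{p}_t=\P^*_t$. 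The Feynman-Kac route is shorter, so I would present that as the main proof and mention the PDE verification only if the referee wants a more direct check.
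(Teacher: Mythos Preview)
Your alternative route---define $\tilde p_t(x):=\frac{1}{Z}\P^0_t(x)\e^{V_t(x)}$, differentiate using \cref{eq:kolfwd} and \cref{eq:hjb}, verify the Kolmogorov forward equation for $Q^*$, and invoke uniqueness from \cref{lem:kolfwd}---is precisely the paper's proof. The paper matches the boundary at $t=T$, where $\tilde p_T=\frac{1}{Z}\pbase\,\e^{r}=\pi=\P^*_T$.

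Your \emph{main} route (change of measure plus \cref{lem:fk}) is mathematically sound but runs into a logical-ordering issue in this paper. The identity $\de{\P^*}{\P^0}(\xi)=\frac{1}{Z}\e^{r(\xi_T)}$ that you call ``established earlier'' is only \emph{stated} in the main text; its proof is \cref{lem:p_star_p_0}, which appears \emph{after} \cref{lem:p_star_t} and explicitly invokes it to evaluate the initial-condition term $\log\de{\P^*_0}{\P^0_0}(\xi_0)=V_0(\xi_0)-\log Z$. So presenting the Feynman--Kac argument as the primary proof here would be circular as the appendix is currently arranged. The circularity can be broken---one can prove \cref{lem:p_star_p_0} first by using $\P^*_0=\pinit$ directly (the SOC constraint) together with the memoryless property to get $V_0\equiv\log Z$---but that requires restructuring the sequence of lemmas. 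If you keep the paper's order, lead with the PDE verification and drop the change-of-measure argument, or at most mention it as a sanity check that becomes available once \cref{lem:p_star_p_0} is in hand.
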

\begin{proof}
Let $h_t(x):=\frac{1}{Z}\P^0_t(x)\e^{V_t(x)}$, and obviously $h_T=\P^*_T$. Recall from \cref{lem:kolfwd} that
$$\partial_t\P^0_t(x)=\sum_{y\ne x}(Q^0_t(y,x)\P^0_t(y)-Q^0_t(x,y)\P^0_t(x)).$$
Also, from \cref{lem:hjb},
$$\partial_t\e^{V_t(x)}=\sum_{y\ne x}Q^0_t(x,y)(\e^{V_t(x)}-\e^{V_t(y)}).$$
Multiplying the first equation by $\e^{V_t(x)}$ and the second by $\P^0_t(x)$, we can easily derive
$$\partial_t h_t(x) = \sum_{y\ne x}(Q^0_t(y,x)h_t(y)-Q^0_t(x,y)h_t(x)),$$
which is the Kolmogorov forward equation for $\P^*$ driven by generator $Q^*$. By the uniqueness result, we conclude that $h_t(x)=\P^*_t(x)$.
\end{proof}

\begin{remark}
    Note that $\P^*_0(x)=\frac{1}{Z}\P^0_0(x)\e^{V_0(x)}=\frac{1}{Z}\pinit(x)\e^{V_0(x)}$. Thus, $\P^*_0=\pinit$ if.f. $V_0(\cdot)$ is a constant, which can be guaranteed when $X_0$ and $X_T$ are independent under $\P^0$ due to the Feynman-Kac formula. Otherwise, $\P^*_0\ne\pinit$ leads to a contradiction, which means there is no solution to the SOC problem \cref{eq:soc_prob}.  
\end{remark}

\begin{lemma}
    The following relation between the reference and optimal path measures holds: $\de{\P^*}{\P^0}(\xi)=\frac{1}{Z}\e^{r(\xi_T)}$ for any $\xi$, where $Z=\E_{\P^0_T}\e^r$.
    \label{lem:p_star_p_0}
\end{lemma}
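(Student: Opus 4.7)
The plan is to compute $\log \tfrac{d\P^*}{d\P^0}(\xi)$ via the general CTMC Radon--Nikod\'ym formula \cref{lem:ctmc_rnd}, and then use the Hamilton--Jacobi--Bellman equation \cref{eq:hjb} together with the Feynman--Kac formula \cref{lem:fk} to collapse everything into a boundary term. Concretely, apply \cref{lem:ctmc_rnd} with $\P^1 = \P^0$ and $\P^2 = \P^*$; since $\P^0$ is assumed memoryless, the Feynman--Kac formula gives $e^{V_0(x)} = \E_{\P^0}[e^{r(X_T)}\mid X_0 = x] = \E_{\P^0_T}e^r = Z$, so $V_0 \equiv \log Z$ is constant and \cref{lem:p_star_t} yields $\P^*_0 = \pinit = \P^0_0$. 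Thus the initial-distribution term in \cref{eq:ctmc_rnd} vanishes.

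Next, substitute the optimal generator identity $Q^*_t(x,y) = Q^0_t(x,y)\,e^{V_t(y)-V_t(x)}$ from \cref{eq:q_star_q_0}. The jump term in \cref{eq:ctmc_rnd} becomes
\begin{equation*}
\sum_{t:\xi_{t-}\ne\xi_t}\log\frac{Q^*_t(\xi_{t-},\xi_t)}{Q^0_t(\xi_{t-},\xi_t)} = \sum_{t:\xi_{t-}\ne\xi_t}\bigl(V_t(\xi_t) - V_t(\xi_{t-})\bigr),
\end{equation*}
while the integral term, using $Q^0_t - Q^*_t = Q^0_t(1 - e^{V_t(y)-V_t(x)})$ and the HJB equation \cref{eq:hjb}, equals
\begin{equation*}
\int_0^T\sum_{y\ne\xi_t}Q^0_t(\xi_t,y)\bigl(1 - e^{V_t(y)-V_t(\xi_t)}\bigr)\,dt = \int_0^T \partial_t V_t(\xi_t)\,dt.
\end{equation*}

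Finally, observe that since $t \mapsto \xi_t$ is piecewise constant and c\`adl\`ag, the total variation of $t \mapsto V_t(\xi_t)$ decomposes as
\begin{equation*}
V_T(\xi_T) - V_0(\xi_0) = \int_0^T \partial_t V_t(\xi_t)\,dt + \sum_{t:\xi_{t-}\ne\xi_t}\bigl(V_t(\xi_t) - V_t(\xi_{t-})\bigr).
\end{equation*}
Combining the three ingredients and using the terminal condition $V_T = r$ (from setting $t = T$ in \cref{eq:value-func}) together with $V_0 \equiv \log Z$, we obtain $\log\tfrac{d\P^*}{d\P^0}(\xi) = r(\xi_T) - \log Z$, i.e. $\tfrac{d\P^*}{d\P^0}(\xi) = \tfrac{1}{Z}e^{r(\xi_T)}$.

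The only non-routine step is justifying that the memoryless assumption forces $V_0$ to be constant and equal to $\log Z$, so that the initial-distribution contribution in \cref{lem:ctmc_rnd} vanishes; everything else is a direct chain of substitutions using HJB and the telescoping identity for the piecewise-constant path. If one wanted a fully path-theoretic argument without invoking \cref{lem:p_star_t}, one could alternatively verify that the right-hand side $\tfrac{1}{Z}e^{r(\xi_T)}$ defines a consistent Radon--Nikod\'ym derivative (expectation one under $\P^0$) and that the induced reweighted path measure has generator $Q^0 e^{V_\cdot(y)-V_\cdot(x)}$ by a direct conditioning computation, but the HJB-based route above is shorter.
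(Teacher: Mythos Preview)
Your proposal is correct and follows essentially the same route as the paper: apply \cref{lem:ctmc_rnd}, substitute $Q^*$ via \cref{eq:q_star_q_0}, use the HJB equation \cref{eq:hjb} to turn the integral term into $\int_0^T\partial_t V_t(\xi_t)\,\d t$, and telescope $V_T(\xi_T)-V_0(\xi_0)$ along the piecewise-constant path. The only cosmetic difference is that the paper writes the initial-distribution term as $\log\tfrac{\d\P^*_0}{\d\P^0_0}(\xi_0)=V_0(\xi_0)-\log Z$ via \cref{lem:p_star_t} (so that $V_0(\xi_0)$ cancels telescopically without needing it to be constant), whereas you first invoke memorylessness and Feynman--Kac to force $V_0\equiv\log Z$ and make that term vanish outright.
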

\begin{proof}
By \cref{eq:ctmc_rnd}, \cref{eq:q_star_q_0}, and \cref{lem:p_star_t},
\begin{align*}
    \log\de{\P^*}{\P^0}(\xi)&=\log\de{\P^*_0}{\P^0_0}(\xi_0)+\sum_{t:\xi_{t-}\ne\xi_t}\log\frac{Q^*_t(\xi_{t_-},\xi_t)}{Q^0_t(\xi_{t_-},\xi_t)}+\int_0^T\sum_{y\ne\xi_t}(Q^0_t(\xi_t,y)-Q^*_t(\xi_t,y))\d t\\
    &=V_0(\xi_0)-\log Z+\sum_{t:\xi_{t-}\ne\xi_t}(V_t(\xi_t)-V_t(\xi_{t-}))+\int_0^T\sum_{y\ne\xi_t}Q^0_t(\xi_t,y)(1-\e^{V_t(y)-V_t(\xi_t)})\d t.
\end{align*}
Since $\xi_\cdot$ is piecewise constant and c\`adl\`ag, $t\mapsto V_t(x)$ is continuous for all $x$, suppose the jump times are $0<t_1<\cdots<t_n<T$, and let $t_0=0$, $t_{n+1}=T$, we can write
\begin{align*}
    V_T(\xi_T)-V_0(\xi_0)&=\sum_{i=0}^n(V_{t_{i+1}}(\xi_{t_i})-V_{t_i}(\xi_{t_i}))+\sum_{i=1}^n(V_{t_i}(\xi_{t_i})-V_{t_i}(\xi_{t_{i-1}}))\\
    &=\sum_{i=0}^n\int_{t_i}^{t_{i+1}}\partial_tV_t(\xi_{t_i})\d t+\sum_{t:\xi_{t-}\ne\xi_t}(V_t(\xi_t)-V_t(\xi_{t-}))\\
    &=\int_0^T\partial_tV_t(\xi_t)\d t+\sum_{t:\xi_{t-}\ne\xi_t}(V_t(\xi_t)-V_t(\xi_{t-})).
\end{align*}
On the other hand, by \cref{eq:q_star_q_0} and \cref{eq:hjb},
\begin{align*}
    \int_0^T\sum_{y\ne\xi_t}(Q^0_t(\xi_t,y)-Q^*_t(\xi_t,y))\d t&=\int_0^T\sum_{y\ne\xi_t}Q^0_t(\xi_t,y)(1-\e^{V_t(y)-V_t(\xi_t)})\d t=\int_0^T\partial_tV_t(\xi_t)\d t.
\end{align*}
Finally, as $V_T=r$, the proof is completed.
\end{proof}

\subsection{Omitted Proofs in the Main Text}
\label{app:theory_proofs}
\paragraph{Proof of \cref{eq:obj_rerf}.}
\begin{proof}
Throughout the proof, we assume regularity conditions that guarantee the validity of swapping the position between the integral and the derivative. We also assume $\P^u$ and $\P^*$ are mutually absolutely continuous. We have
\begin{align}
    &\nabla_\theta\kl(\P^u\|\P^*)=\nabla_\theta\E_{\P^\ub}\de{\P^u}{\P^\ub}\log\de{\P^u}{\P^*}=\E_{\P^\ub}\nabla_\theta\de{\P^u}{\P^\ub}\log\de{\P^u}{\P^*}\nonumber\\
    &=\E_{\P^\ub}\sq{\nabla_\theta\de{\P^u}{\P^\ub}\cdot\log\de{\P^u}{\P^*}+\de{\P^u}{\P^\ub}\nabla_\theta\log\de{\P^u}{\P^*}}~(\textit{\small Chain rule for derivative})\nonumber\\
    &=\E_{\P^\ub}\sq{\de{\P^u}{\P^\ub}\nabla_\theta\log\de{\P^u}{\P^\ub}\cdot\log\de{\P^u}{\P^*}+\de{\P^u}{\P^\ub}\nabla_\theta\log\de{\P^u}{\P^*}}\nonumber~(\textit{\small Gradient of $\log$})\\
    &=\E_{\P^\ub}\sq{\nabla_\theta\log\de{\P^u}{\P^\ub}\cdot\log\de{\P^\ub}{\P^*}+\nabla_\theta\log\de{\P^u}{\P^*}}~(\textit{\small When not taking gradient, $\de{\P^u}{\P^\ub}\equiv1$, $\de{\P^u}{\P^*}=\de{\P^\ub}{\P^*}$})\nonumber\\
    &=\E_{\P^\ub}\sq{\nabla_\theta\log\de{\P^u}{\P^*}\ro{\log\de{\P^\ub}{\P^*}+1}}~(\textit{\small $\nabla_\theta\log\de{\P^u}{\P^*}=\nabla_\theta\log\de{\P^u}{\P^\ub}$}).\label{eq:rerf_identity_last_step}
\end{align}

The second term is actually zero:
\begin{align*}
    \E_{\P^\ub}\nabla_\theta\log\de{\P^u}{\P^*}&=\E_{\P^\ub}\frac{\nabla_\theta(\d\P^u/\d\P^*)}{\d\P^u/\d\P^*}=\E_{\P^\ub}\frac{\nabla_\theta(\d\P^u/\d\P^*)}{\d\P^\ub/\d\P^*}~(\textit{\small When not taking gradient, $\de{\P^u}{\P^*}=\de{\P^\ub}{\P^*}$})\\
    &=\E_{\P^*}\nabla_\theta\de{\P^u}{\P^*}=\nabla_\theta\E_{\P^*}\de{\P^u}{\P^*}=\nabla_\theta1=0.
\end{align*}

Therefore, the $1$ in \cref{eq:rerf_identity_last_step} can be replaced by any real number $C$. Swapping the positions of $\E_{\P^\ub}$ and $\nabla_\theta$ completes the proof.
\end{proof}

\paragraph{Proof of \cref{lem:memoryless}.}
\begin{proof}
Note that under the generator $Q_t$ defined as \cref{eq:q0_mask}, it is obvious that each token evolves independently. Therefore, it suffices to consider the transition probability of a single token, $Q^\tok=(Q^\tok_t\in\R^{(N+1)\times(N+1)})_{t\in[0,T]}$.

Let the mask token $\mask$ be $N+1$. By definition, $Q^\tok_t$ can be expressed as $Q^\tok_t=\gamma(t)uv\tp$, where $u=(0,...,0,1)\tp$ and $v=\ro{\frac{1}{N},...,\frac{1}{N},-1}\tp$ are both $(N+1)$-dimensional vectors. Thus, by the Kolmogorov forward equation (\cref{lem:kolfwd}), the transition probability matrix from $s$ to $t$ is given by $\exp\ro{\ro{\int_s^t\gamma(r)\d r}uv\tp}$. To compute the matrix exponential, note that $(uv\tp)^k=u(v\tp u)^{k-1}v\tp=(-1)^{k-1}uv\tp$, and hence
\begin{align*}
    \e^{\lambda (uv\tp)}&=I+\sum_{k=1}^\infty\frac{\lambda^k}{k!}(uv\tp)^k=I+\sum_{k=1}^\infty\frac{\lambda^k}{k!}(-1)^{k-1}uv\tp\\
    &=I-\sum_{k=1}^\infty\frac{(-\lambda)^k}{k!}uv\tp=I+(1-\e^{-\lambda})uv\tp.    
\end{align*}
Therefore, we have $\P^0_{T|0}(n|\mask)=\frac{1}{N}\ro{1-\exp\ro{\int_0^T\gamma(r)\d r}}$ for $n\in\{1,...,N\}$. Since the initial sample is completely masked, $\int_0^T\gamma(r)\d r=\infty$ guarantees $\P^0_T=\punif$. Moreover, as the initial distribution is a point mass, it is obvious that $X_0$ and $X_T$ are independent, i.e., $\P^0$ is memoryless.
\end{proof}

\paragraph{Proof of \cref{lem:q_star_mask}.}
\begin{proof}
From \cref{eq:q_star_q_0,eq:q0_mask}, we only need to consider the values of $Q^*_t(x,x^{d\gets n})$ for $x^d=\mask$, as otherwise the value is $0$.

We first compute the conditional distribution of $X_T$ given $X_t=x\in\cX$ under $\P^0$. We know from the proof of \cref{lem:memoryless} that each token unmasks into $\{1,...,N\}$ uniformly and independently, and remains unchanged as long as it is unmasked. Thus, the conditional distribution of $X_T$ given $X_t=x$ is the uniform distribution over the subset of $\cX_0$ whose tokens at the unmasked positions of $x$ are the same as $x$, i.e.,
$$\P^0_{T|t}(\xt|x)=\frac{1}{N^{|\{d:x^d=\mask\}|}}\prod_{d:x^d\ne\mask}1_{\xt^d=x^d}.$$

Hence, by \cref{lem:fk}, we have
$$\e^{V_t(x)}=\frac{1}{N^{|\{d:x^d=\mask\}|}}\sum_{\xt\in\cX_0:~\xt^d=x^d,~\forall d:x^d\ne\mask}\e^{r(\xt)}.$$
Notably, $V_t(x)$ is independent of $t$. Now, suppose $x^i=\mask$, we have
\begin{align*}
    \frac{\e^{V_t(x^{i\gets n})}}{\e^{V_t(x)}}&=\frac{\frac{1}{N^{|\{d:(x^{i\gets n})^d=\mask\}|}}\sum_{\xt\in\cX_0:~\xt^d=(x^{i\gets n})^d,~\forall d:(x^{i\gets n})^d\ne\mask}\e^{r(\xt)}}{\frac{1}{N^{|\{d:x^d=\mask\}|}}\sum_{\xt\in\cX_0:~\xt^d=x^d,~\forall d:x^d\ne\mask}\e^{r(\xt)}}\\
    &=N\frac{\sum_{\xt\in\cX_0:~\xt^d=x^d,~\forall d:x^d\ne\mask;~\xt^i=n}\pi(\xt)}{\sum_{\xt\in\cX_0:~\xt^d=x^d,~\forall d:x^d\ne\mask}\pi(\xt)}~(\textit{\small Since $\e^{r(x)}=\frac{Z\pi(x)}{\punif(x)}$})\\
    &=N\Pr_{X\sim\pi}(X^i=n|X^\um=x^\um),
\end{align*}
which, together with \cref{eq:q0_mask}, completes the proof.
\end{proof}

\paragraph{Proof of \cref{lem:guarantee_samp}.}
\begin{proof}
The claim follows from the data processing inequality $\kl(p\|q)\ge\kl(T_\sharp p\|T_\sharp q)$, where $T$ is an arbitrary measurable mapping and $T_\sharp p$ is the law of $T(X)$ given $X\sim p$. 
\end{proof}

\paragraph{Proof of \cref{lem:guarantee_norm_const_est}}
\begin{proof}
The identity \cref{eq:rnd_simplified} directly implies that $Z=\E_{X\sim\P^u}\e^{W^u(X)}$, and hence $\Zh$ is an unbiased estimation of $Z$. By the Markov inequality,
\begin{align*}
    &\Pr\ro{\abs{\frac{\Zh}{Z}-1}\ge\varepsilon}=\P^u\ro{\abs{\de{\P^*}{\P^u}-1}\ge\varepsilon}\le\frac{1}{\varepsilon}\E_{\P^u}\abs{\de{\P^*}{\P^u}-1}\\
    &=\frac{1}{\varepsilon}\int\abs{\de{\P^*}{\P^u}-1}\de{\P^u}{\Q}1_{\de{\P^u}{\Q}>0}\d\Q~(\textit{\small$\forall\Q$ that dominates both $\P^*$ and $\P^u$, e.g., $\frac12(\P^*+\P^u$)})\\
    &\le\frac{1}{\varepsilon}\int\abs{\de{\P^u}{\Q}-\de{\P^*}{\Q}}\d\Q~(\textit{\small Product rule of RN derivatives and $1_A\le1,~\forall A$})\\
    &=\frac{\tv(\P^u,\P^*)}{2\varepsilon}~(\textit{\small By definition of TV distance}).
\end{align*}
By the Pinsker's inequality $\kl(p\|q)\ge2\tv(p,q)^2$, the probability can thus be bounded by $\frac14$. The last claim follows from the \textbf{median trick}, see \cite[Lem. 29]{guo2025complexity}.
\end{proof}

\section{Experimental Details and Additional Results: Learning Ising model}
\label{app:exp_ising}

\subsection{General Training Hyperparameters and Model Architecture}
\label{app:exp_ising_train}

\paragraph{Model backbone.} We use vision transformers (ViT, \cite{dosovitskiy2021an}) to serve as the backbone for the discrete diffusion model. In particular, we use the DeiT (Data-efficient image Transformers) framework \cite{touvron2021training} with 2-dimensional rotary position embedding \cite{heo2025rotary}, which better captures the 2-dimensional spatial structure of the Ising model.

For $L=4$, we use $2$ blocks with a $32$-dimensional hidden space and $4$ attention heads, and the whole model contains $26\mathrm{k}$ parameters. 
For $L=16$, we use $6$ blocks with a $64$-dimensional embedding space and $4$ attention heads, and the whole model contains $318\mathrm{k}$ parameters.

\paragraph{Training.}
Among all the training tasks, we choose the batch size as $256$, and use the AdamW optimizer \cite{loshchilov2018decoupled} with a constant learning rate of $0.001$. We always use exponential moving average (EMA) to stabilize the training, with a decay rate of $0.9999$. All experiments of learning Ising model are trained on an NVIDIA RTX A6000. The training of $L=4$ target distributions takes around 10 minutes while the training of $L=16$ target distributions takes around 20 hours (or equivalently, 4 hours on an NVIDIA RTX A100). For $16\times16$ Ising model, we use $\cF_{\mathrm{WDCE}}$ with a resampling frequency $k = 10$ and replicates $R=8$ for a total of $50\mathrm{k}$ iterations, among which $20\mathrm{k}$ is warm-up training.

\paragraph{Generating baseline and ground truth samples.} For the learning-based baseline, we train LEAPS on $16 \times 16$ Ising model for up to $150\mathrm{k}$ steps for each temperature, which is comparable to, if not more than, our requirement on this task, to ensure a fair comparison. We also run the Metropolis-Hastings (MH) algorithm for a sufficiently long time to serve as a baseline. For $L=4$ (resp., $L=16$), we use a batch size $1024$, and warm up the algorithm with $2^{10}$ (resp., $2^{20}$) burn-in iterations. After that, we collect the samples every $2^{10}$ (resp., $2^{16}$) steps to ensure sufficient mixing, and collect for $2^{10}$ rounds, so the final number of samples is $2^{20}$. For $L = 16$, we run the Swendsen-Wang (SW) algorithm on each temperature to generate examples accurately distributed as the ground truth. We use a batch size of $128$, and warm up the algorithm with $2^{13}$ burn-in iterations. After that, we collect samples every $128$ steps to ensure sufficient mixing of the chain, and collect for $32$ rounds to gather a total of $2^{12}$ samples. For $L=16$, the MH sampling takes around 3 hours while the SW sampling takes around 2 hours on a CPU.

\subsection{Evaluation and Additional Results on $4\times4$ Ising model}
\label{app:exp_ising4_eval}

\subsubsection{Definition and Discussion on the Effective Sample Size}
The effective sample size is a metric commonly used to evaluate the sampling quality and does \textit{not} rely on the normalized target probability mass function or ground truth samples.

As in earlier works (e.g., \cite{zhang2022path,albergo2025nets,holderrieth2025leaps}), given any control $u$, suppose we have $M$ trajectories $X^{(1)},...,X^{(M)}\iid\P^u$, we can associate each $X^{(i)}$ with weight 
$$\omega(X^{(i)})=\frac{\e^{W^u(X^{(i)})}}{\sum_{j=1}^M \e^{W^u(X^{(j)})}},$$
so that the weighted empirical distribution $\sum_{i=1}^M\omega(X^{(i)})\delta_{X^{(i)}}$ serves as a consistent approximation of $\P^*$ as $M\to\infty$. We define the (normalized) \textbf{effective sample size (ESS)} as 
\begin{equation}
    \ess\coloneqq \ro{M\sum_{i=1}^M \omega^2(X^{(i)})}^{-1}\in\sq{\frac{1}{M},1}.
    \label{eq:ess}        
\end{equation}
A higher ESS typically means better sampling quality, but this is not always true as ESS is difficult to detect whether the samples miss a mode in the target distribution. For instance, suppose $\P^*$ is a uniform distribution on two points $\{a,b\}$ and $\P^u$ is a delta distribution on $a$, then sampling from $\P^u$ would always output $a$, and thus $\omega(X^{(i)}=a)\equiv\frac{1}{M}$, resulting in $\ess=1$. This phenomenon is due to the fact $\P^*$ is not dominated by (i.e., absolutely continuous with respect to) $\P^u$. In fact, by the strong law of large numbers, one can show that
\begin{align*}
    \ess&=\frac{\ro{\frac{1}{M}\sum_{i=1}^M\de{\P^*}{\P^u}(X^{(i)})}^2}{\frac{1}{M}\sum_{i=1}^M\ro{\de{\P^*}{\P^u}(X^{(i)})}^2}\asto\frac{\ro{\E_{\P^u}\de{\P^*}{\P^u}}^2}{\E_{\P^u}\ro{\de{\P^*}{\P^u}}^2}.
\end{align*}
The r.h.s. is $\frac{1}{1+\chi^2(\P^*\|\P^u)}$ if $\P^*$ is dominated by $\P^u$, which aligns with the definition of ESS in \cite{zhang2022path}. But if the learned path measure $\P^u$ only covers one of the high-probability regions in $\P^*$ and misses the others, then $\P^*$ may not be dominated by $\P^u$ and ESS does not reveal the correct sampling quality. Therefore, only relying on ESS as the evaluation metric may be problematic.

\subsubsection{Evaluation of the $4\times4$ Learned Models.}
For each step during training, after doing a gradient update to the model and updating the model parameters stored in EMA, we evaluate the current model by sampling using the parameters stored in EMA and computing the weights $W^u$ along the sampled trajectories. The batch size for sampling is $256$. The ESS reported in \cref{tab:exp_ising_4_high} are the average ESS of the last $100$ steps.

As $|\cX_0|=2^{L^2}=65536$, the partition function $Z$ can be computed explicitly, and thus we can obtain the whole probability distribution $\pi$. We use the random order autoregressive sampler to sample from the learned model, and then compute the empirical distribution of the samples $\phsamp$. Recall that for two categorical distributions $p$ and $q$ on $\cX_0$, the total variation (TV) distance is defined as $\tv(p,q):=\frac{1}{2}\sum_{x\in\cX_0}|p(x)-q(x)|$, the KL divergence is defined as $\kl(p\|q):=\sum_{x\in\cX_0:~q(x)>0}p(x)\log\frac{p(x)}{q(x)}$, and the $\chi^2$ divergence is defined as $\chi^2(p\|q):=\sum_{x\in\cX_0:~q(x)>0}\frac{p(x)^2}{q(x)}-1=\sum_{x\in\cX_0:~q(x)>0}\frac{(p(x)-q(x))^2}{q(x)}$. Note that in order to make $\kl(p\|q)$ and $\chi^2(p\|q)$ divergences well-defined, we require $p$ to be dominated by $q$, i.e., $q(x)>0$ for all $x\in\cX_0$ such that $p(x)>0$, or equivalently, $q(x)=0\implies p(x)=0$. Therefore, we do not choose $\kl(\pi\|\phsamp)$ and $\chi^2(\pi\|\phsamp)$ as the evaluation metrics.

As the partition function $Z$ can be computed explicitly, we can approximate the relative-entropy between the paths, $\kl(\P^u\|\P^*)=\log Z-\E_{\P^u}W^u(X)$, by the empirical means of the weights $W^u(X)$ for $X\sim\P^u$. Finally, as is discussed in \cref{lem:guarantee_norm_const_est}, an unbiased estimation of $Z$ can be obtained by $\Zh=\e^{W^u(X)}$, $X\sim\P^u$.

\subsubsection{Results of Learning the $4\times 4$ Ising model at Other Temperatures}
We also provide results for learning distributions at critical ($\betac=0.4407$) and low ($\betal=0.6$) temperatures in \cref{tab:exp_ising_4_crit,tab:exp_ising_4_low}. The same model structure, training configurations, and evaluation methods as discussed above are used, except that for learning the distribution at $\betac$, we train the model from scratch with $2000$ steps, and for learning the distribution at $\betal$, we train the model for $1000$ steps starting from the $1000$-step checkpoint for learning the high-temperature distribution with the LV loss. All four learning objectives are capable of training the model to generate high-quality samples compared with the baseline (MH algorithm).

\begin{table}[h]
\centering
\small
\caption{Results for learning $4\times4$ Ising model with $J=1$, $h=0.1$ and $\betac=0.4407$, best in \textbf{bold}.}
\label{tab:exp_ising_4_crit}
\resizebox{\linewidth}{!}{
\begin{tabular}{ccccccc}
\toprule
Method        & ESS $\uparrow$    & $\tv(\phsamp,\pi)$ $\downarrow$ & $\kl(\phsamp\|\pi)$ $\downarrow$ & $\chi^2(\phsamp\|\pi)$ $\downarrow$ & $\widehat{\kl}(\P^u|\P^*)$ $\downarrow$ & Abs. err. of $\log\Zh$ $\downarrow$ \\ \midrule
RERF          & $0.8480$          & $0.0841$                        & $0.0521$                        & $0.1691$                            & $0.0565$                                & $0.00150$                           \\ \midrule
LV            & $\mathbf{0.9809}$ & $\mathbf{0.0301}$               & $\mathbf{0.0222}$               & $\mathbf{0.0830}$                   & $\mathbf{0.0083}$                       & $0.00106$                           \\ \midrule
CE            & $0.9545$          & $0.0454$                        & $0.0327$                        & $0.1824$                            & $0.0259$                                & $0.00175$                           \\ \midrule
WDCE          & $0.9644$          & $0.0789$                        & $0.0375$                        & $0.0839$                            & $0.0177$                                & $\mathbf{0.00010}$                  \\ \midrule
Baseline (MH) & /                 & $0.0223$                        & $0.0193$                        & $0.0615$                            & /                                       & /                                   \\ \bottomrule 
\end{tabular}
}
\end{table}

\begin{table}[h]
\centering
\caption{Results for learning $4\times4$ Ising model with $J=1$, $h=0.1$ and $\betal=0.6$ (with ablation study of the effectiveness of warm-up in training), best in \textbf{bold}.}
\label{tab:exp_ising_4_low}
\resizebox{\linewidth}{!}{
\begin{tabular}{cccccccc}
\toprule
Method                & Use warm-up & ESS $\uparrow$    & $\tv(\phsamp,\pi)$ $\downarrow$ & $\kl(\phsamp\|\pi)$ $\downarrow$ & $\chi^2(\phsamp\|\pi)$ $\downarrow$ & $\widehat{\kl}(\P^u|\P^*)$ $\downarrow$ & Abs. err. of $\log\Zh$ $\downarrow$ \\ \midrule
\multirow{2}{*}{RERF} & \ding{51}     & $0.9196$          & $0.0320$                        & $0.0200$                        & $0.4071$                            & $0.0263$                                & $0.00788$                           \\ \cmidrule{2-8} 
                      & \ding{55}     & $0.9276$          & $0.8692$                        & $2.0487$                        & $6.7966$                            & $0.0352$                                & $2.00995$                           \\ \midrule
\multirow{2}{*}{LV}   & \ding{51}     & $0.9722$          & $0.0177$                        & $\mathbf{0.0098}$               & $\mathbf{0.1864}$                   & $0.0092$                                & $\mathbf{0.00257}$                  \\ \cmidrule{2-8} 
                      & \ding{55}     & $0.9594$          & $0.1515$                        & $0.1619$                        & $0.1933$                            & $0.0164$                                & $0.13500$                           \\ \midrule
\multirow{2}{*}{CE}   & \ding{51}     & $0.9855$          & $\mathbf{0.0147}$               & $0.0138$                        & $2.8388$                            & $0.0098$                                & $0.00259$                           \\ \cmidrule{2-8} 
                      & \ding{55}     & $0.9694$          & $0.0159$                        & $0.0287$                        & $60.6136$                           & $0.0259$                                & $0.00887$                           \\ \midrule
\multirow{2}{*}{WDCE} & \ding{51}     & $0.9465$          & $0.0418$                        & $0.0282$                        & $1.6582$                            & $0.0336$                                & $0.00373$                           \\ \cmidrule{2-8} 
                      & \ding{55}     & $\mathbf{0.9927}$ & $0.1365$                        & $0.1505$                        & $2.4919$                            & $\mathbf{0.0057}$                       & $0.13001$                           \\ \midrule
Baseline (MH)         &             & /                 & $0.0068$                        & $0.0044$                        & $0.0418$                            & /                                       & /                                   \\ \bottomrule
\end{tabular}
}
\end{table}

\subsubsection{Ablation Study of the Warm-up Training Strategy}
As an ablation study of the effectiveness of the warm-up strategy, in \cref{tab:exp_ising_4_low} we also present the results of the model trained from scratch for learning the distribution at $\betal$, and we can see that warm-up generally improves the training quality. Note that at low temperature, high ESS does not necessarily indicate good sampling quality, as the target distribution is highly concentrated on two modes: in fact, the all-positive configuration has probability $0.7530$, the all-negative one has probability $0.1104$, and all the remaining $2^{16}-2=65534$ configurations occupy the rest portion $0.1366$. As a result, if the samples are all concentrated on the all-positive configuration and do not cover the other mode, then the ESS would be close to $1$, yet the overall distribution is far from the target. Thus, we need to rely on a diversified set of evaluation metrics to comprehensively evaluate the learned model, such as the TV distance, KL divergence, and $\chi^{2}$ divergence reported in \cref{tab:exp_ising_4_crit,tab:exp_ising_4_low} as these metrics in general do not suffer from the aforementioned problem.

Additionally, it is worth mentioning that this warm-up training strategy is significantly different from the warm-up used in LEAPS. The code implementation of LEAPS also has a warm-up stage in the first $20\mathrm{k}$ steps during the training, where they gradually increase $t_{\mathrm{final}}(k)$ from $0$ to $1$ while the step $k$ increases from $0$ to the maximum warm-up steps. During the warm-up phase, the PINN objective of LEAPS is evaluated on time up to $t_{\mathrm{final}}(k)$. Unlike LEAPS, we do not use partial trajectories for training during the warm-up phase, which is a major difference.

\subsubsection{Ablation Study of the Choice of Number of Replicates $R$ in $\cF_{\mathrm{WDCE}}$}

\begin{figure}[h]
    \centering
    \includegraphics[width=\linewidth]{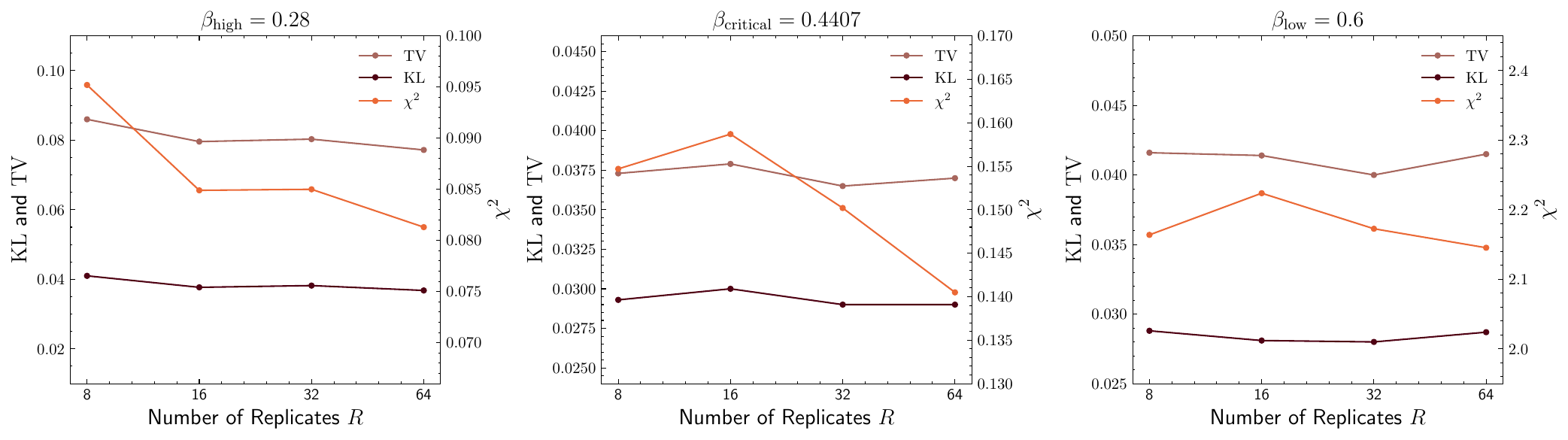}
    \caption{Visualization of learning performance across the number of replicates $R$ for learning $4\times4$ Ising model with $J=1$ and $h=0.1$ using the WDCE loss. The metrics reported are $\tv(\phsamp,\pi)$, $\kl(\phsamp\|\pi)$, and $\chi^2(\phsamp\|\pi)$.}
    \label{fig:exp_ising4_wdce_abl}
\end{figure}

We provide an ablation study of the number of replicates $R$ in the WDCE loss in \cref{fig:exp_ising4_wdce_abl}, showing that the performance of training is insensitive to $R$. Therefore, we only choose a relatively small number $R=8$ in learning the $16\times16$ distributions to guarantee the efficiency as well as the efficacy of the algorithm.

\subsection{Evaluation and Additional Results on the $16\times16$ Ising model}
\label{app:exp_ising16_eval}

\subsubsection{Evaluation of the $16\times16$ Learned Models}

As the cardinality of the state space is much larger ($2^{L^2=256}\approx10^{77}$), we cannot exactly compute the whole probability distribution. Instead, we report the values and errors for observables well-studied in statistical physics such as magnetization and 2-point correlation. For any probability distribution $\nu$ on $\{\pm1\}^\Lambda$ (we recall that $\Lambda=\{1,...,L\}^2$), these observables can be defined as follows.

\paragraph{Magnetization.} The magnetization of a state $i\in\Lambda$ under $\nu$ is defined as $M_\nu(i):=\E_{\nu(x)}x^i$. The average magnetization of all states is defined as $M_\nu:=\E_{\nu(x)}\sq{\frac{1}{L^2}\sum_ix^i}$. For our target distribution $\pi$ in \cref{eq:ising}, $M_\pi(i)=0$ for all $i\in\Lambda$ since $H(x)=H(-x)$ when $h=0$. 

Moreover, we can define \textbf{row-wise magnetization} (i.e., along the $x$-axis) $M^{\mathrm{row}}_{\nu}$ or \textbf{column-wise magnetization} (i.e., along the $y$-axis) $M^{\mathrm{col}}_{\nu}$ by summing up the magnetization of states on the subset of a specific row or column on the square lattice. They are formally defined as,
\begin{align}
\label{eq:ising_row_col_mag}
    M^{\mathrm{row}}_{\nu}(k) = \sum_{i \in \mathrm{row}(k)} M_{\nu}(x^i),~ M^{\mathrm{col}}_{\nu}(k) = \sum_{i \in \mathrm{col}(k)} M_{\nu}(x^i), \quad \forall  k \in \cu{-\floor{\frac{L}{2}},...,0,...,\floor{\frac{L}{2}}}.
\end{align}
Based on these observables, we define the \textbf{absolute magnetization error} (abbreviated as Mag. in \cref{tab:exp_ising_16}) of the learned distribution $\nu$ compared with the target distribution $\pi$ as
\begin{align}
    \label{eq:ising_mag_err_def}
   \frac{1}{2L} \sum_{k \in \cu{-\floor{\frac{L}{2}},...,0,...,\floor{\frac{L}{2}}}} \abs{ M^{\mathrm{row}}_{\nu}(k) - M^{\mathrm{row}}_{\pi}(k)} + \abs{ M^{\mathrm{col}}_{\nu}(k) - M^{\mathrm{col}}_{\pi}(k)}.
\end{align}

\paragraph{2-Point Correlation.} The 2-point correlation of two states $i,j\in\Lambda$ under $\nu$ is defined as $C_\nu(i,j)=\E_{\nu(x)}[x^ix^j]-\E_{\nu(x)}[x^i]\E_{\nu(x)}[x^j]$. For our target distribution $\pi$ in \cref{eq:ising}, $C_\pi(i,j)=\E_{\pi(x)}[x^ix^j]$ due to the zero magnetization.

The average magnetization of all states differing with vector $r\in\cu{-\floor{\frac{L}{2}},...,0,...,\floor{\frac{L}{2}}}^2$ is defined as 
$$C_\nu(r)=\frac{1}{L^2}\ro{\sum_i\E_{\nu(x)}[x^ix^{i+r}]-\E_{\nu(x)}[x^i]\E_{\nu(x)}[x^{i+r}]},$$
where the addition is performed element-wise under modulo $L$ due to the periodic boundary condition. %

Similar to the magnetization, we can define the \textbf{row-wise correlation} (i.e., along the $x$-axis) and \textbf{column-wise correlation} (i.e., along the $y$-axis) by summing up the 2-point correlations between states on the subset of specific pairs of rows or columns. They are formally defined as
\begin{align}
\label{eq:ising_row_col_corr}
C^{\mathrm{row}}_{\nu}(k, l) = \sum_{\substack{i \in \mathrm{row}(k),~j \in \mathrm{row}(l),\\ i,j \text{ same col}}} C_{\nu}(i,j), \quad C^{\mathrm{col}}_{\nu}(k, l) = \sum_{\substack{i \in \mathrm{col}(k),~j \in \mathrm{col}(l),\\ i,j \text{ same row}}} C_{\nu}(i,j),
\end{align}
where $k,l \in \cu{-\floor{\frac{L}{2}},...,0,...,\floor{\frac{L}{2}}}$. Based on these observables, we can also define the \textbf{absolute 2-point correlation error} (abbreviated as Corr. in \cref{tab:exp_ising_16}) for the learned distribution $\nu$ compared with the target distribution $\pi$ by
\begin{align}
\label{eq:ising_corr_err_def}
   \frac{1}{L^{2}} \sum_{(k,l)} \abs{ C^{\mathrm{row}}_{\nu}(k, l) - C^{\mathrm{row}}_{\pi}(k,l)} + \abs{ C^{\mathrm{col}}_{\nu}(k,l) - C^{\mathrm{col}}_{\pi}(k,l)}.
\end{align}

\begin{figure}[!h]
\centering
\begin{subfigure}{.3\textwidth}
    \centering
    \includegraphics[width=\linewidth]{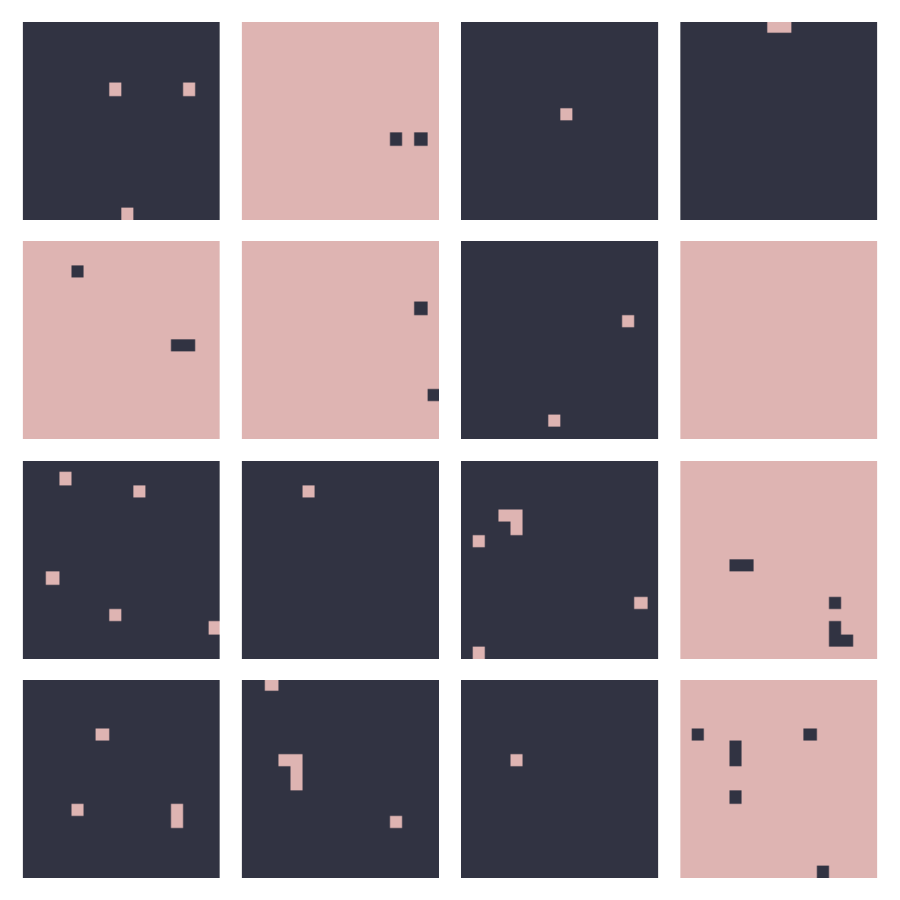}
    \caption{MDNS (ours)}
\end{subfigure}%
\begin{subfigure}{.3\textwidth}
    \centering
    \includegraphics[width=\linewidth]{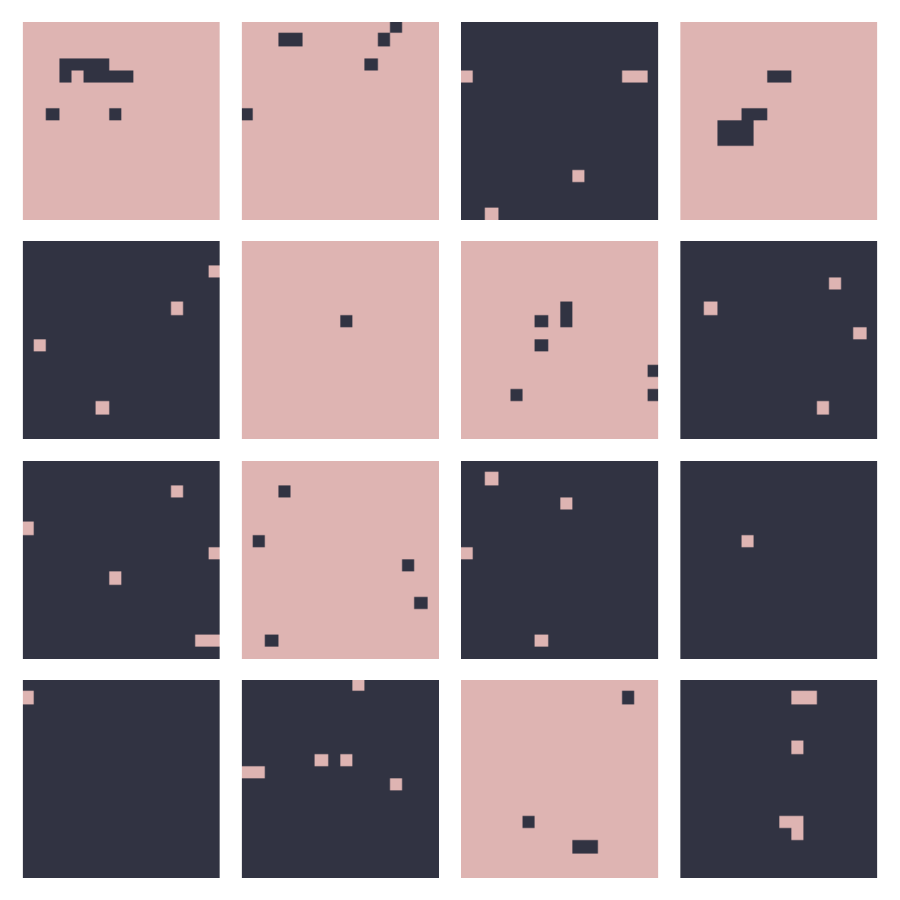}
    \caption{LEAPS}
\end{subfigure}%
\begin{subfigure}{.3\textwidth}
    \centering
    \includegraphics[width=\linewidth]{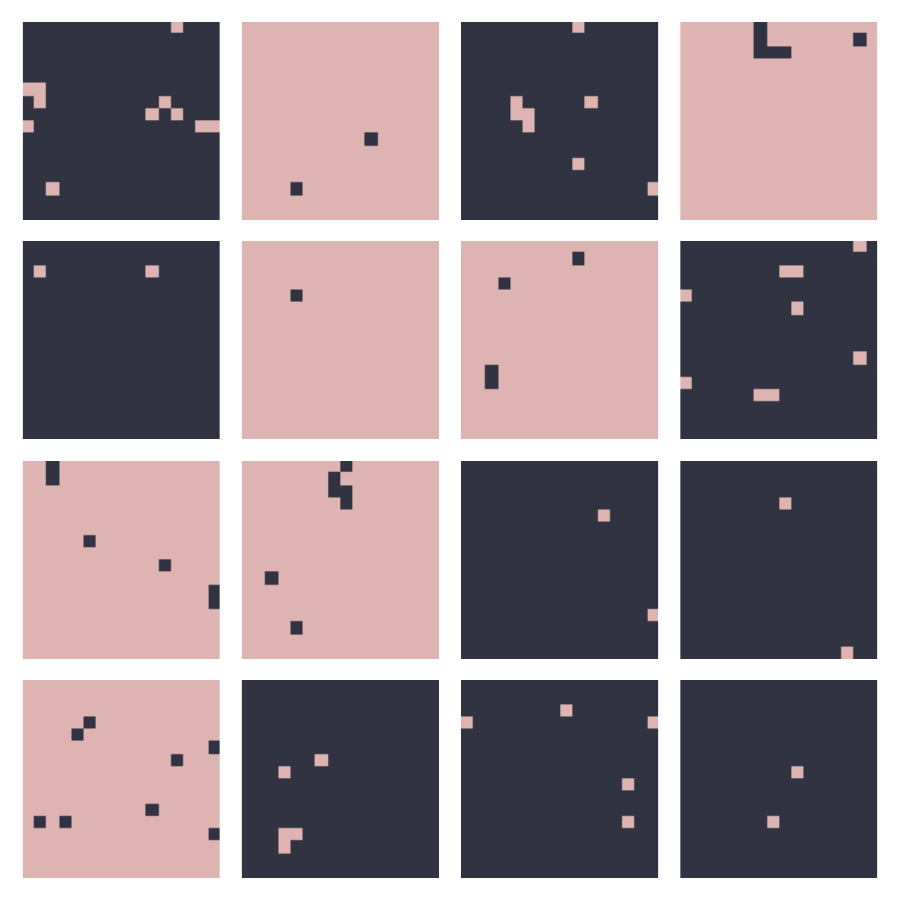}
    \caption{Ground Truth}
\end{subfigure}
\caption{Visualization of non-cherry-picked samples from the learned $16\times16$ Ising model with $J=1$, $h=0$, and $\betal=0.6$. (a) MDNS. (b) LEAPS. (c) Ground Truth (simulated with SW algorithm).}
\label{fig:ising16_vis_low}
\end{figure}

\begin{figure}[!h]
\centering
\begin{subfigure}{.3\textwidth}
    \centering
    \includegraphics[width=\linewidth]{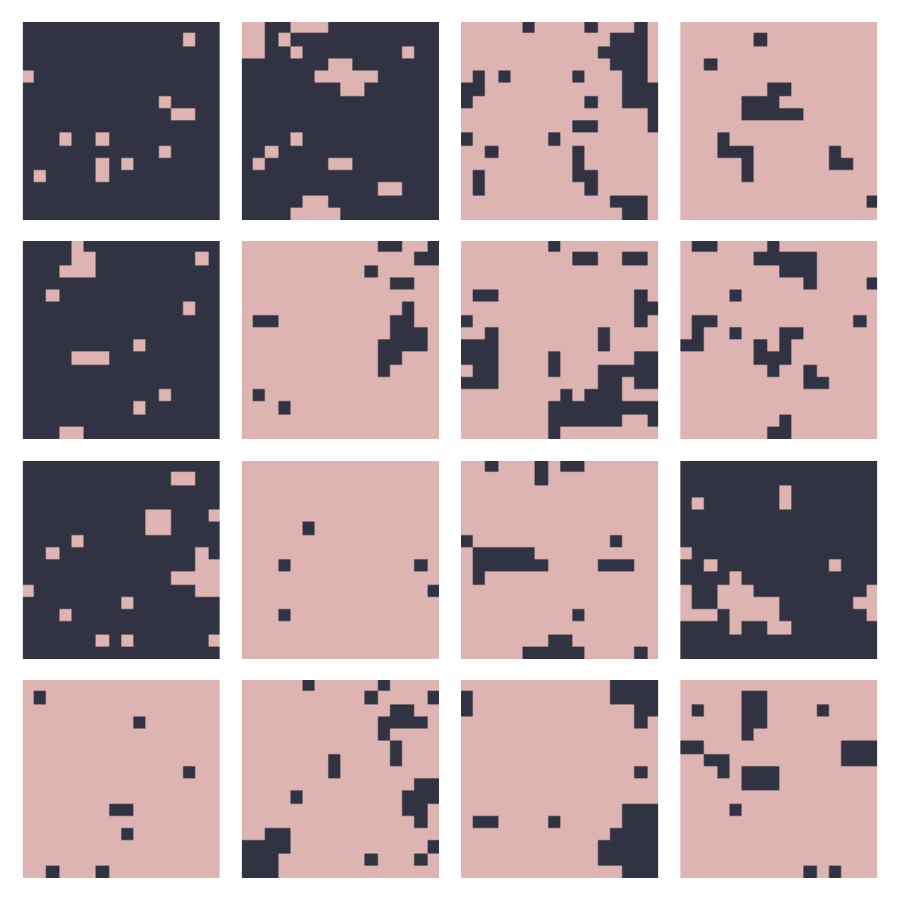}
    \caption{MDNS (ours)}
\end{subfigure}%
\begin{subfigure}{.3\textwidth}
    \centering
    \includegraphics[width=\linewidth]{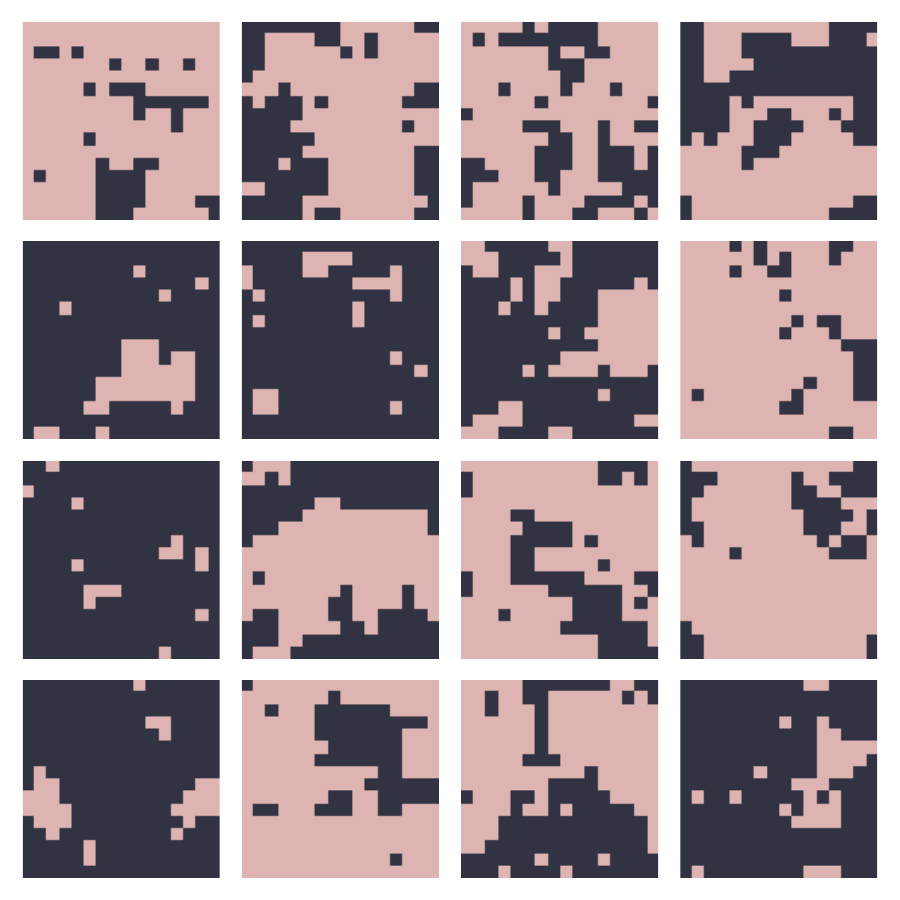}
    \caption{LEAPS}
\end{subfigure}%
\begin{subfigure}{.3\textwidth}
    \centering
    \includegraphics[width=\linewidth]{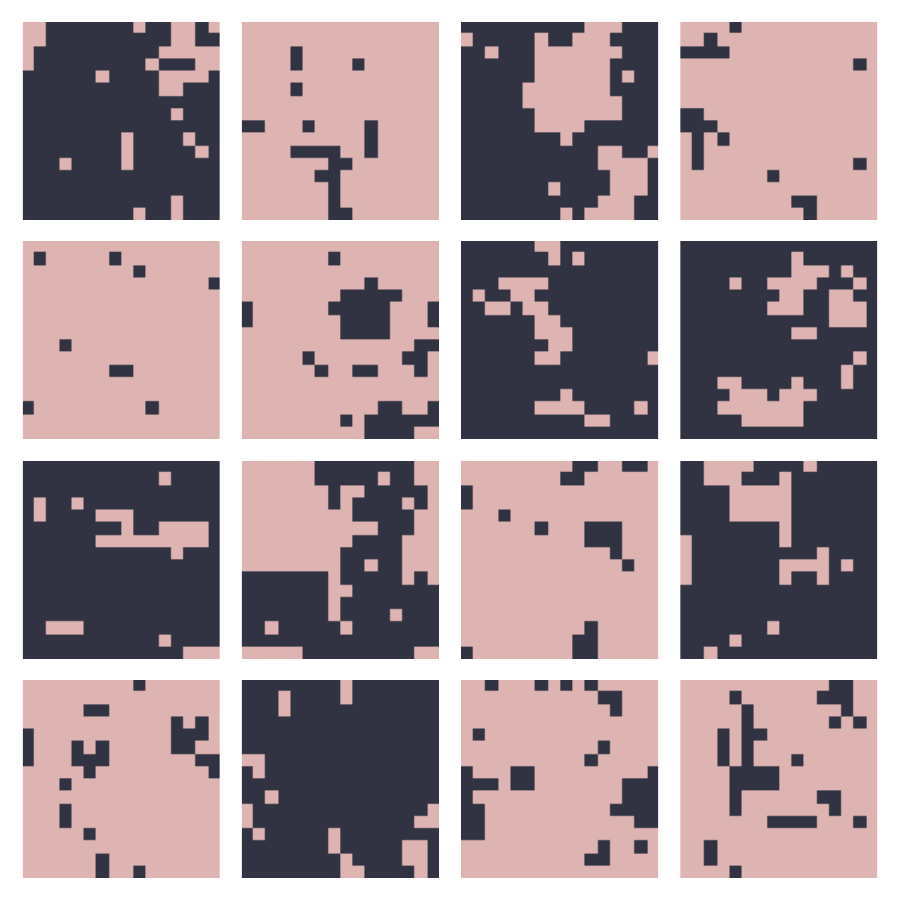}
    \caption{Ground Truth}
\end{subfigure}
\caption{Visualization of non-cherry-picked samples from the learned $16\times16$ Ising model with $J=1$, $h=0$, and $\betac=0.4407$. (a) MDNS. (b) LEAPS. (c) Ground Truth (simulated with SW algorithm).}
\label{fig:ising16_vis_crit}
\end{figure}

\begin{figure}[!h]
\centering
\begin{subfigure}{.3\textwidth}
    \centering
    \includegraphics[width=\linewidth]{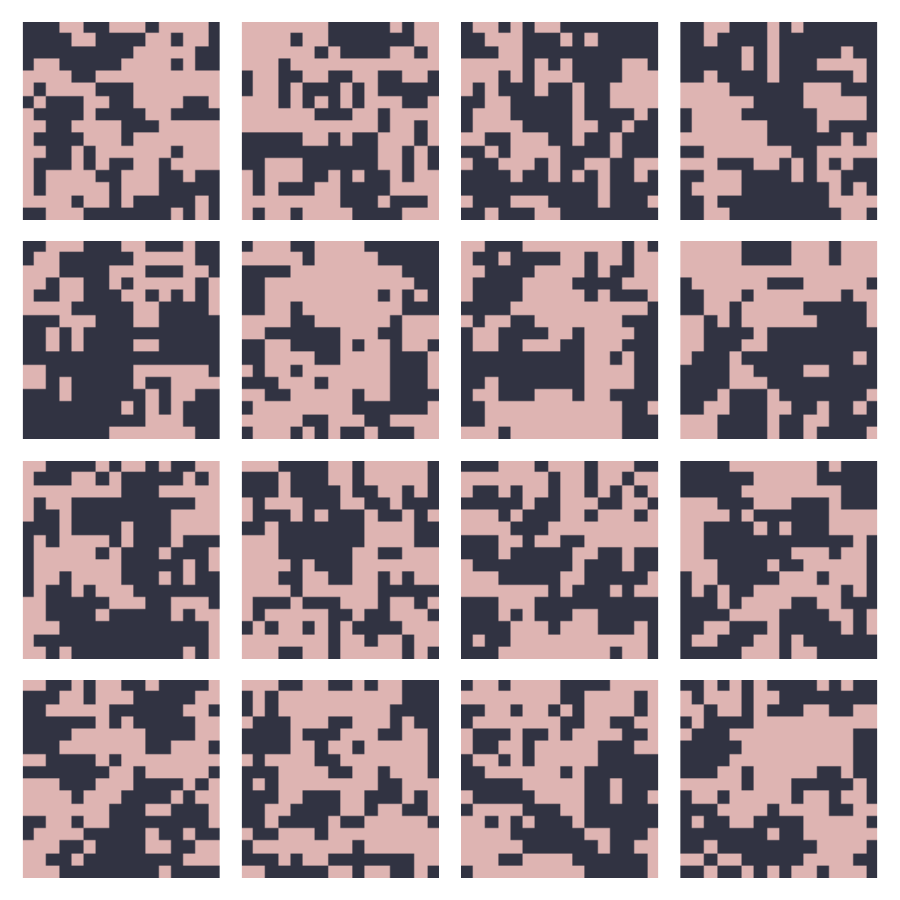}
    \caption{MDNS (ours)}
\end{subfigure}%
\begin{subfigure}{.3\textwidth}
    \centering
    \includegraphics[width=\linewidth]{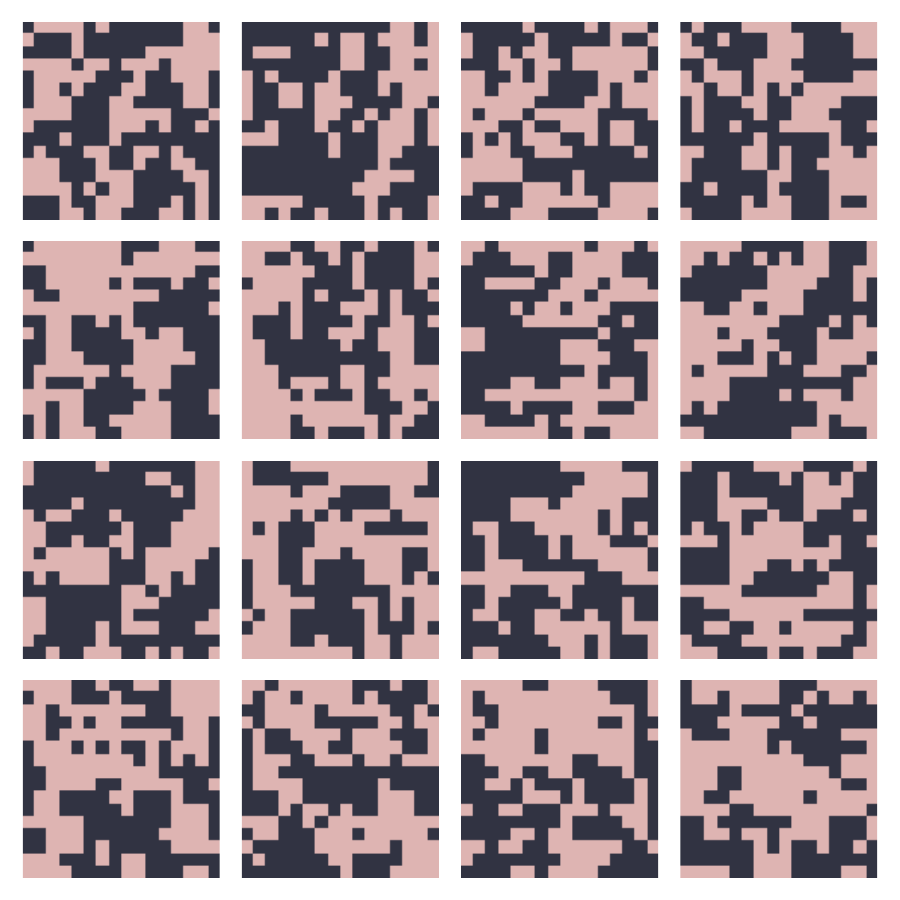}
    \caption{LEAPS}
\end{subfigure}%
\begin{subfigure}{.3\textwidth}
    \centering
    \includegraphics[width=\linewidth]{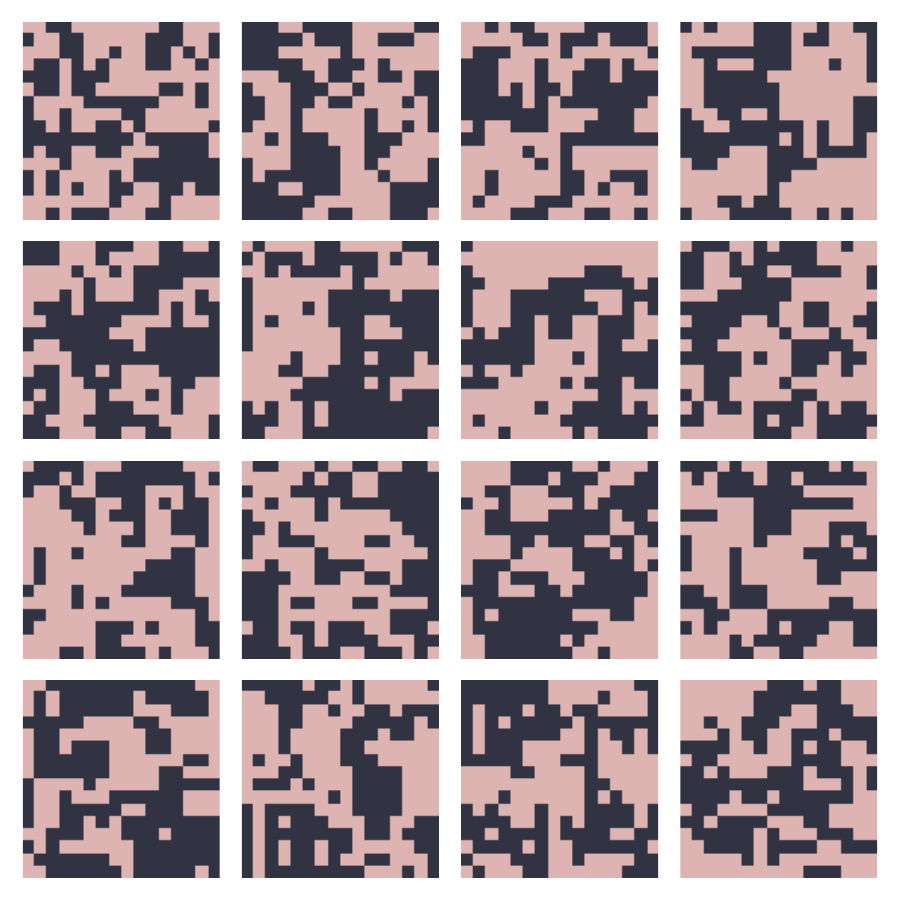}
    \caption{Ground Truth}
\end{subfigure}
\caption{Visualization of non-cherry-picked samples from the learned $16\times16$ Ising model with $J=1$, $h=0$, and $\betah=0.28$. (a) MDNS. (b) LEAPS. (c) Ground Truth (simulated with SW algorithm).}
\label{fig:ising16_vis_high}
\end{figure}

\subsubsection{Training Curves for Learning the $16 \times 16$ Ising Model}
The training ESS curves of the score models for learning the $16\times16$ Ising model are provided in \cref{fig:exp_ising16_ess}, where the models for $\betac$ and $\betal$ are initialized at the $20\mathrm{k}$-step checkpoint of the model trained for $\betah$ to implement the warm-up training strategy. Due to the gigantic mismatch between the reward functions used in the warm-up phase and the formal training phase, the ESS experiences a sudden drop to near $0$ at iteration $20\mathrm{k}$ for $\betal$ and $\betac$.

\begin{figure}[h]
    \centering
    \includegraphics[width=0.6\linewidth]{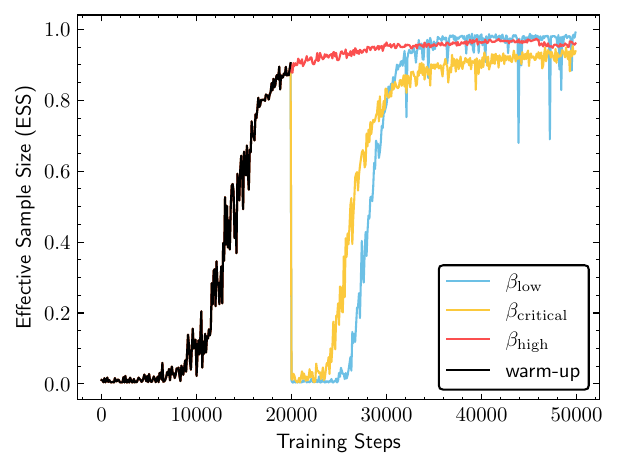}
    \caption{Training curves of the effective sample size (ESS) for learning $16\times16$ Ising model at different temperatures. A closer value to $1$ generally indicates a better performance.}
    \label{fig:exp_ising16_ess}
\end{figure}

\subsubsection{Additional Results for Learning $16 \times 16$ Ising Model}
To provide a more comprehensive evaluation of our learned model for $16 \times 16$ Ising model, we visualize the generated samples in \cref{fig:ising16_vis_low,fig:ising16_vis_crit,fig:ising16_vis_high}. From the plots, we can see the high fidelity of our generated samples as they follow a statistically similar pattern to the ground truth samples produced by running the SW algorithm. We also plot the 2-point correlation function along the $y$-axis in \cref{fig:ising16_vis_corr_y}. Additionally, we visualize a distribution of absolute error of the estimated $M^{\mathrm{row}}$ and $M^{\mathrm{col}}$ to ground truth along each column or row of the square lattice in \cref{fig:ising16_vis_mag}. All of these results demonstrate a superior performance of our proposed MDNS compared to other benchmarks.

\begin{figure}[t]
    \centering
    \includegraphics[width=1\linewidth]{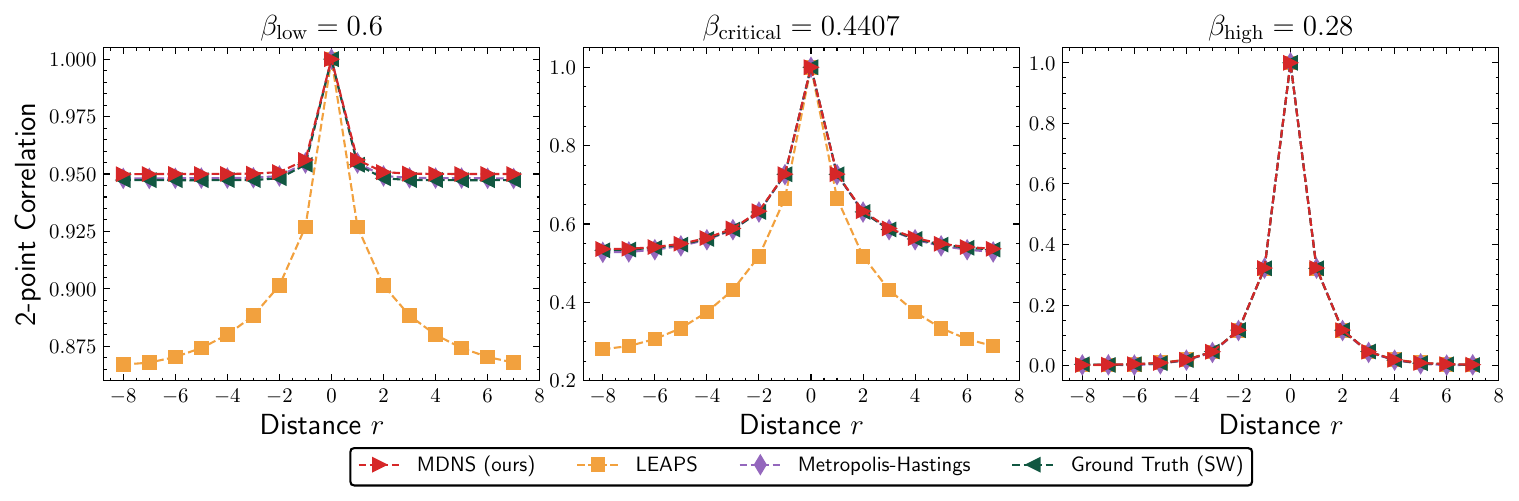}
    \caption{Average of 2-point correlation $C^{\mathrm{col}}(k,  k + r)$ of samples from $16\times16$ Ising model.}
    \label{fig:ising16_vis_corr_y}
\end{figure}

\begin{figure}[h]
    \centering
    \includegraphics[width=1\linewidth]{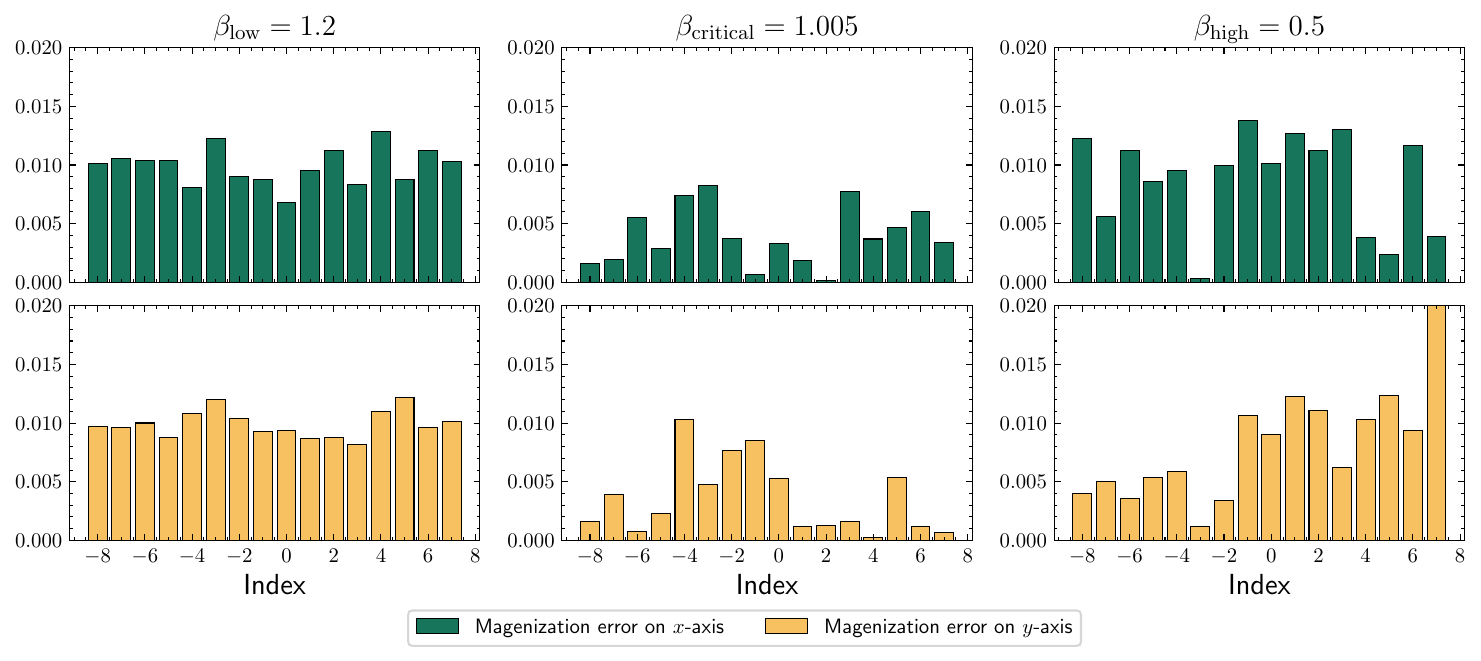}
    \caption{Distribution of the the absolute error $M^{\mathrm{row}}(k)$ and $M^{\mathrm{col}}(k)$ to the ground values against index $k$ for $16\times16$ Ising model.}
    \label{fig:ising16_vis_mag}
\end{figure}

\subsection{Effects of Preconditioning in MDNS Training}
\label{app:exp_ising_precond}
In learning a continuous neural sampler to sample from a target distribution $\nu\propto\e^{-V}$ on $\R^d$, typical approaches such as PIS \cite{zhang2022path} and DDS \cite{vargas2023denoising} leverage the score information $\nabla\log\pi$ in the neural network, which is known as \textbf{preconditioning}. Efficient preconditioning is shown to facilitate the convergence of training and achieve smaller sampling errors \cite{he2025no}. In our experiments, we already achieve good performance \textit{without} any information of the target distribution when parameterizing the score model; here, we explore a way to do preconditioning for learning to sample from the Ising model, and compare its effectiveness during training.

Recall that in the target distribution of the Ising model \cref{eq:ising}, the distribution of $x^i$ conditional on all the remaining positions $x^{-i}:=(x^j:j\ne i)$ can be computed as follows:
\begin{align*}
    \pi(x^i|x^{-i})&\propto_{x^i}\e^{\beta J\ro{\sum_{j:j\sim i}x^j}x^i+\beta hx^i},\\
    \implies \pi(x^i|x^{-i})&=\frac{\e^{\beta J\ro{\sum_{j:j\sim i}x^j}x^i+\beta hx^i}}{\e^{\beta J\sum_{j:j\sim i}x^j+\beta h}+\e^{-\beta J\sum_{j:j\sim i}x^j-\beta h}},~x^i\in\{\pm1\}.
\end{align*}

As the model needs to learn the conditional distribution of $x^i$ given a \textit{partially masked} $x^{-i}$, we can naturally treat the mask value as $0$ and use the above formula to approximate the conditional distribution. Specifically, given a partially masked $x\in\cX=\{\pm1,\mask=0\}^\Lambda$ with $x^i=\mask$, we use the following form of preconditioning:
\begin{align*}
&\Pr_{X\sim\pi}(X^i=n|X^\um=x^\um)\approx s_\theta(x)_{i,n}\\
&:=\mathtt{softmax}(\Phi_\theta(x)+P(x),~\mathtt{dim}=-1)_{i,n},~i\in\Lambda,~n\in\{\pm1\},
\end{align*}
where $\Phi_\theta:\cX\to\R^{(L\times L)\times 2}$ is a free-form neural network and the precondition matrix $P:\cX\to\R^{(L\times L)\times 2}$ is defined as $P(x)_{i,-1}=\log\pi(x^i\gets-1|x^{-i})$ and $P(x)_{i,1}=\log\pi(x^i\gets1|x^{-i})$. 

\begin{figure}
    \centering
    \includegraphics[width=0.9\linewidth]{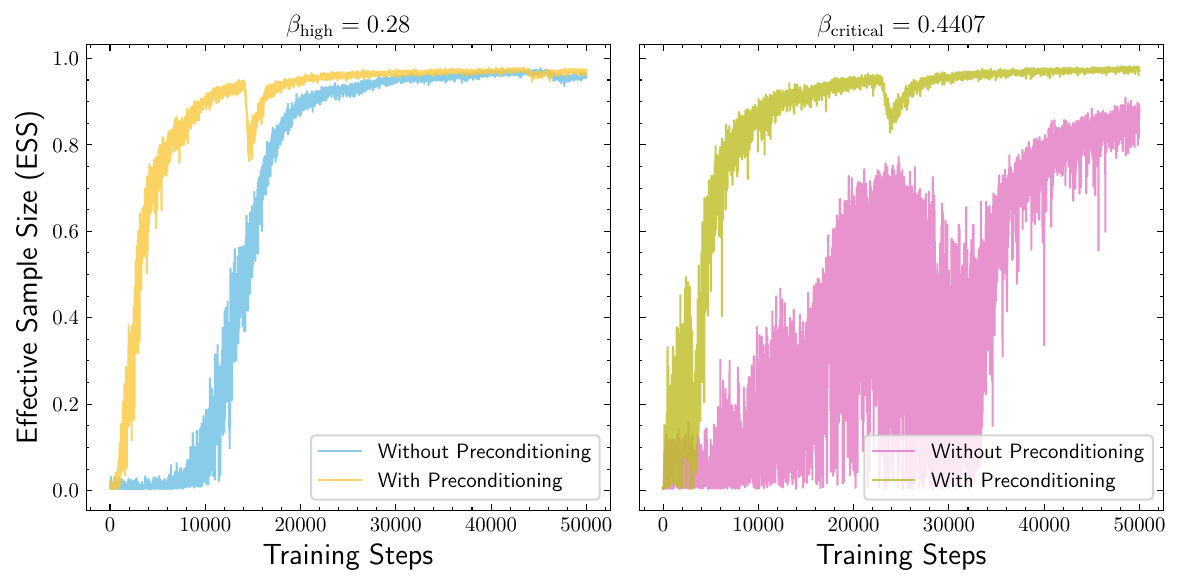}
    \caption{Comparison of learning a $16\times16$ Ising model with $J=1$ and $h=0$ using the WDCE loss, with and without preconditioning.}
    \label{fig:exp_ising16_precond}
\end{figure}

In \cref{fig:exp_ising16_precond}, we train models from scratch to learn from $16\times16$ Ising model under $\betah$ and $\betac$ using the WDCE loss, both with and without preconditioning. It is obvious that applying preconditioning facilitates the convergence of the training in terms of the ESS. However, unlike in the case on $\R^d$ where we can directly leverage the score information $\nabla\log\pi$, the above example heavily replies on the availability of the closed-form solution of the conditional distributions $\{\pi(x^i|x^{-i}),\forall i\in\Lambda\}$, which seriously limits the applicability of this preconditioning method. The study of preconditioning methodologies that are capable of dealing with target distributions whose conditional distributions are unavailable is left for future work.

\subsection{Failure of DRAKES in Learning Neural Samplers.}
\label{app:exp_ising_drakes}
Finally, we argue that while DRAKES \cite{wang2025finetuning} might be useful for fine-tuning a pretrained masked discrete diffusion model to maximize a certain reward function, it is not suitable for learning a diffusion sampler due to the error of approximating states using the Gumbel softmax trick.

The training dynamics of the $4\times4$ Ising model with $J=1$, $h=0.1$, $\betah=0.28$ using DRAKES are presented in \cref{fig:exp_ising4_drakes_temp1,fig:exp_ising4_drakes_temp0.1}, where we use the same model structure, training configurations, and evaluation methods as in the experiment of \cref{tab:exp_ising_4_high}, except that we now use a smaller learning rate of $0.0001$ with gradient clipping for numerical stability and train for $3000$ steps. We test two different Gumbel softmax temperatures, $1$ and $0.1$, to show the possible effect of the Gumbel softmax temperature. As DRAKES leverages the Euler sampler \cite{lou2024discrete,ou2025your} in training, we use $32(=2L^2)$ steps to generate each sample and let the gradient back-propagate over the whole trajectory with all $32$ intermediate states without truncation. As shown in the figures, while the loss decreases and reward (defined by $-\betah H(x)$ in \cref{eq:ising}) increases during training, indicating that the model is learning, the ESS remains at a low level, which means the learned model is not able to sample from the correct target distribution.

\begin{figure}[h]
    \centering
    \includegraphics[width=0.9\textwidth]{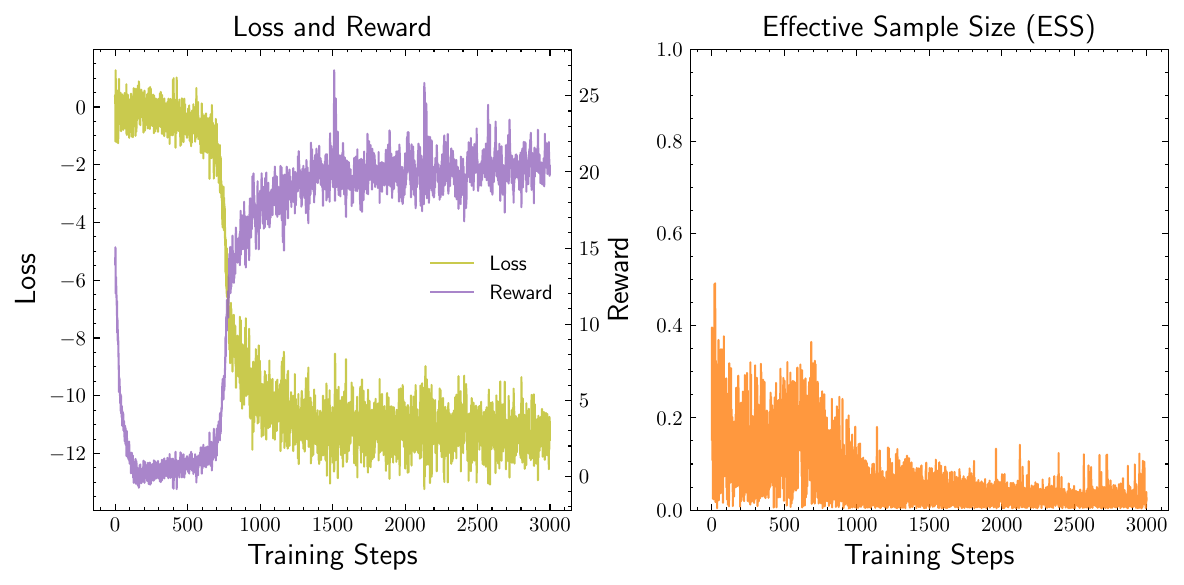}
    \caption{Learning $4\times4$ Ising model with $J=1$, $h=0.1$, and $\betah=0.28$ via DRAKES (with Gumbel softmax temperature $1$). Reward corresponds to $-\betah H(x)$.}
    \label{fig:exp_ising4_drakes_temp1}
\end{figure}

\begin{figure}[h]
    \centering
    \includegraphics[width=0.9\textwidth]{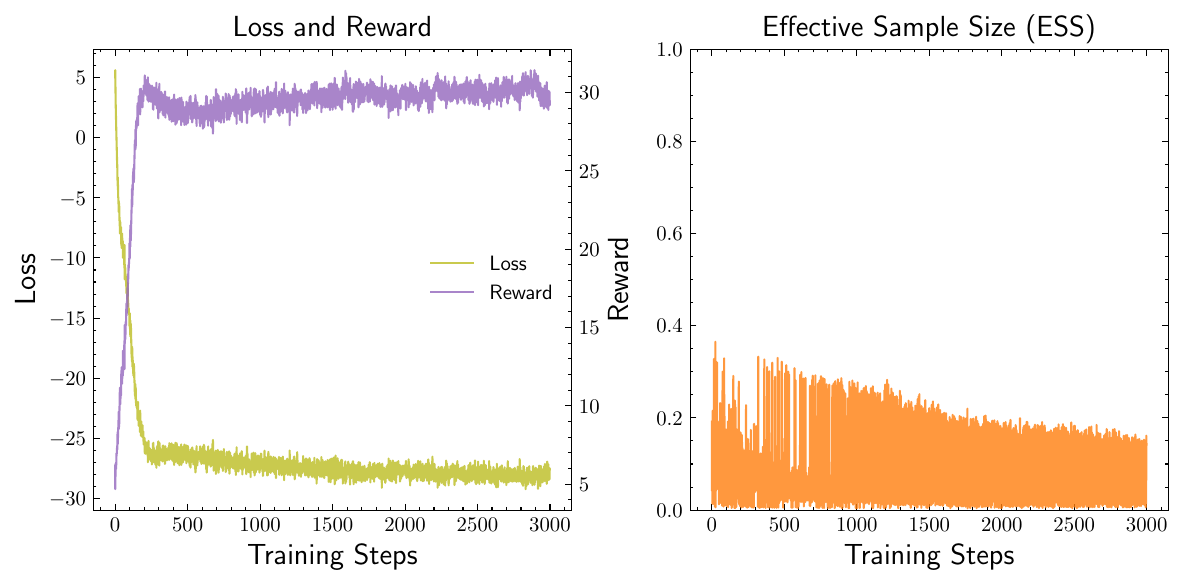}
    \caption{Learning $4\times4$ Ising model with $J=1$, $h=0.1$, and $\betah=0.28$ via DRAKES (with Gumbel softmax temperature $0.1$). Reward corresponds to $-\betah H(x)$.}
    \label{fig:exp_ising4_drakes_temp0.1}
\end{figure}

\section{Experimental Details and Additional Results: Learning Potts model}
\label{app:exp_potts}

\subsection{General Training Hyperparameters and Model Architecture }

\paragraph{Model backbone.} Similar to the experiments on Ising model, we also use ViT with 2-dimensional rotary position embedding to serve as the backbone for the score model. To ensure enough model representation power, we adopt a slightly larger model with a $128$-dimensional embedding space, $4$ blocks and $4$ attention heads, which sum up to $829\mathrm{k}$ trainable parameters in total.

\paragraph{Training.}
Among all the training tasks for Potts model, we choose the batch size as $256$, and use the AdamW optimizer \cite{loshchilov2018decoupled} with a constant learning rate of $5\e{-4}$. Like in the experiment of Ising model, we also use EMA to stabilize the training, with a decay rate of $0.9999$. All experiments of learning the Potts model are run on an NVIDIA RTX A100. For $16\times16$ Potts model of all three temperatures, we use $\cF_{\mathrm{WDCE}}$ with a resampling frequency $k = 10$ and replicates $R=8$ for a total of $100\mathrm{k}$ iterations, among which $30\mathrm{k}$ is warm-up training. The total training time is about 20 hours.

\paragraph{Generating baseline and ground truth samples.} For the learning-based baseline, we train LEAPS on $16 \times 16$ the Potts model for up to $100\mathrm{k}$ steps for each temperature, which is comparable to, if not more than, our requirement on this task, to ensure a fair comparison. We also run the MH algorithm for a sufficiently long time to serve as a baseline. For $L = 16$, we use a batch size $32$, and warm up the algorithm with $2^{27}$ burn-in iterations. After that, we collect the samples every $2^{16}$ steps to ensure sufficient mixing, and collect for $128$ rounds, so the final number of samples is $2^{12}$. Note that for the Potts model with $L = 16$, the MH algorithm is not capable of sampling from the correct distribution even at the high temperature, and thus we have to resort to the SW algorithm to generate ground truth samples. We use a batch size $128$, and warm up the algorithm with $2^{16}$ burn-in iterations. After that, we collect samples every $128$ steps to ensure sufficient mixing of the chain, and collect for $32$ rounds to gather a final number of total $2^{12}$ samples. The MH and SW sampling takes about the same time as the Ising model on a CPU.

\subsection{Evaluation and Additional Results on $16 \times 16$ Potts model}

\begin{figure}[t]
    \centering
    \includegraphics[width=1\linewidth]{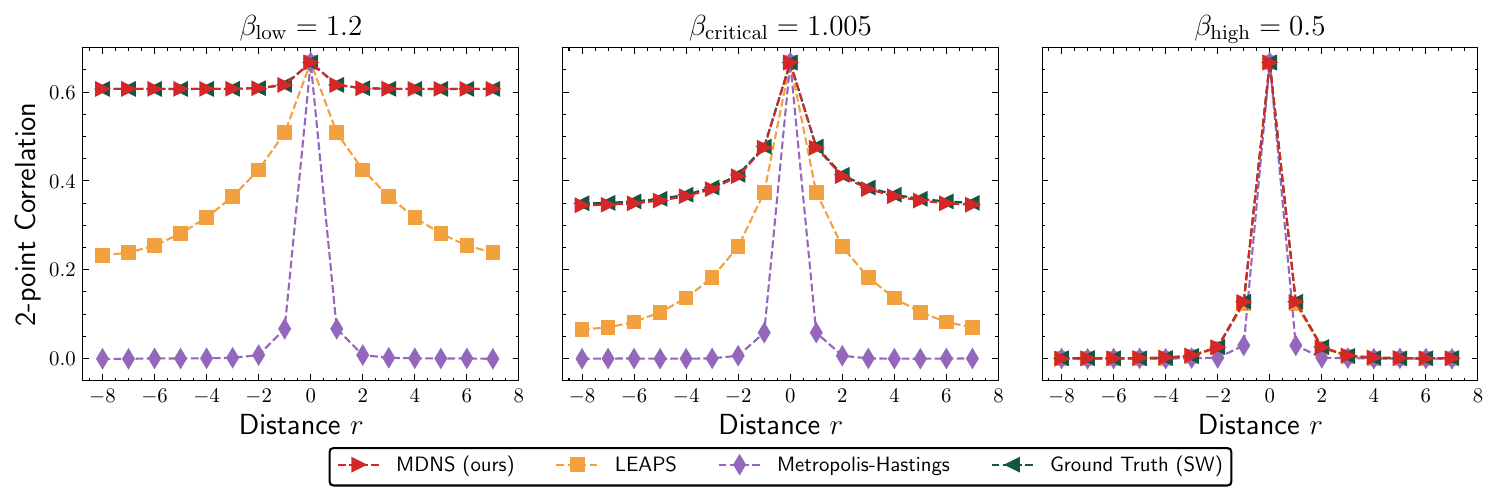}
    \caption{Average of 2-point correlation $C^{\mathrm{col}}(k,  k + r)$ of samples from $16\times16$ Potts model.}
    \label{fig:potts_16_vis_corr_y}
\end{figure}

\begin{figure}[t]
    \centering
    \includegraphics[width=1\linewidth]{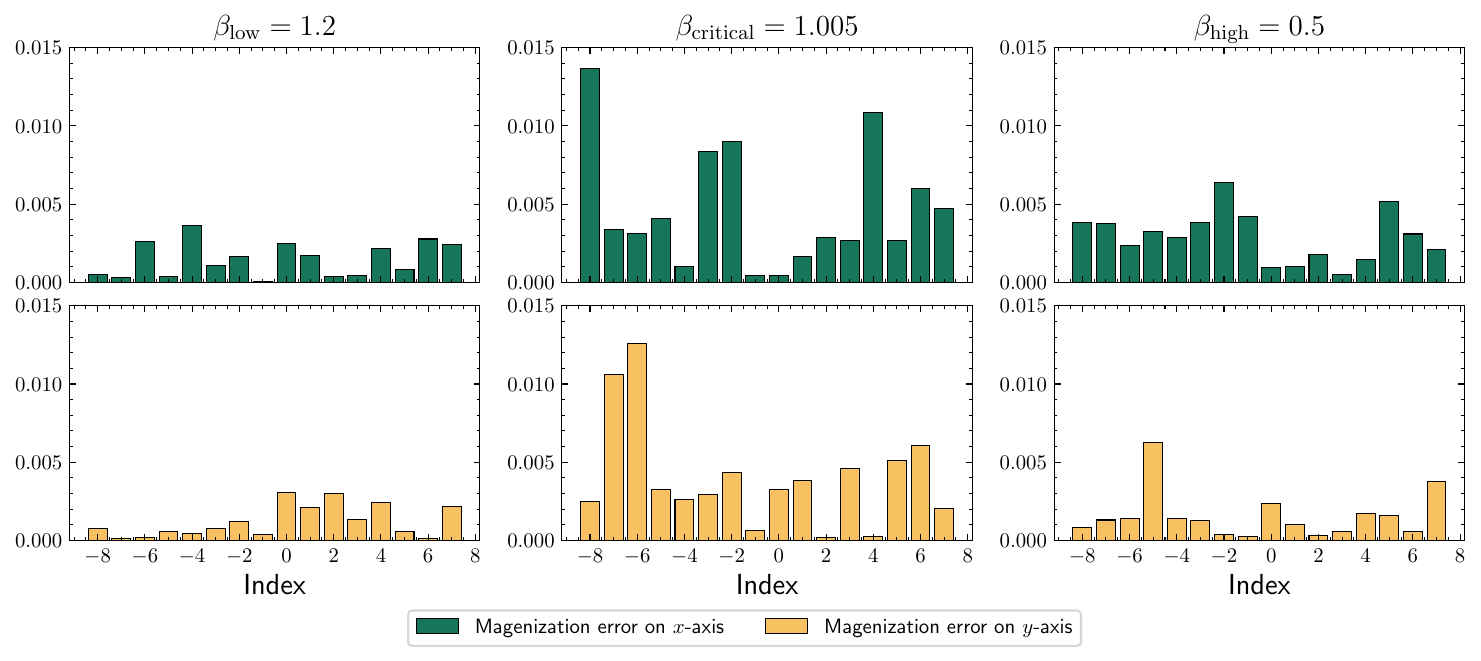}
    \caption{Distribution of the the absolute error $M^{\mathrm{row}}(k)$ and $M^{\mathrm{col}}(k)$ to the ground values against index $k$ for $16\times16$ Potts model.}
    \label{fig:potts_vis_mag}
\end{figure}

Similar to the case of $16 \times 16$ Ising model, a $16 \times 16$ Potts model has an intractably large cardinality of the state space $3^{L^2 = 256} \approx 10^{122}$, so we cannot compute the exact distribution and thus rely on values of observables in statistical physics to evaluate the performance. As the Potts model is a generalization of the Ising model, statistical physics observables such as magnetization and 2-point correlation are also well defined, with different definitions and formulae but share a similar idea in spirit.

For any probability distribution $\nu$ on $\{1,...,q\}^\Lambda$ (we recall that $\Lambda=\{1,...,L\}^2$),

\paragraph{Magnetization. } The \textbf{magnetization} of a state $i\in\Lambda$ under $\nu$ given $n$ i.i.d. samples from $\nu$ is defined as
\begin{align*}
M^{\mathrm{potts}}_\nu(i):=\frac{q\max_{1\le c\le q}(n^i_c/n)-1}{q-1},
\end{align*}
where $n^i_c$ is the number of samples whose $i$-th element is $c$. The average magnetization of all states is defined as $M_\nu:=\frac{1}{L^2}\sum_i M^{\mathrm{potts}}_{\nu}(i)$.

Similar to the case of Ising model, we can also define \textbf{row-wise magnetization} (i.e., along the $x$-axis) or \textbf{column-wise magnetization} (i.e., along the $y$-axis) by summing the magnetization of states on the subset of a specific row or column on the square lattice. They are formally defined as
\begin{align}
\label{eq:potts_row_col_mag}
    M^{\mathrm{row}}_{\nu}(k) = \sum_{i \in \mathrm{row}(k)} M^{\mathrm{potts}}_{\nu}(x^i),~ M^{\mathrm{col}}_{\nu}(k) = \sum_{i \in \mathrm{col}(k)} M^{\mathrm{potts}}_{\nu}(x^i), \quad \forall  k \in \cu{-\floor{\frac{L}{2}},...,0,...,\floor{\frac{L}{2}}}.
\end{align}
Based on these observables, we define the absolute magnetization error (abbreviated as Mag. in \cref{tab:exp_potts_16}) for learnt distribution $\nu$ 
 by computing the following
\begin{align}
\label{eq:potts_mag_err_def}
   \frac{1}{2L} \sum_{k \in \cu{-\floor{\frac{L}{2}},...,0,...,\floor{\frac{L}{2}}}} \big| M^{\mathrm{row}}_{\nu}(k) - M^{\mathrm{row}}_{\pi}(k)\big| + \big| M^{\mathrm{col}}_{\nu}(k) - M^{\mathrm{col}}_{\pi}(k)\big|,
\end{align}
where $\pi$ is the target optimal distribution and $\nu$ is the learned distribution.

\begin{figure}[!t]
\centering
\begin{subfigure}{.3\textwidth}
    \centering
    \includegraphics[width=\linewidth]{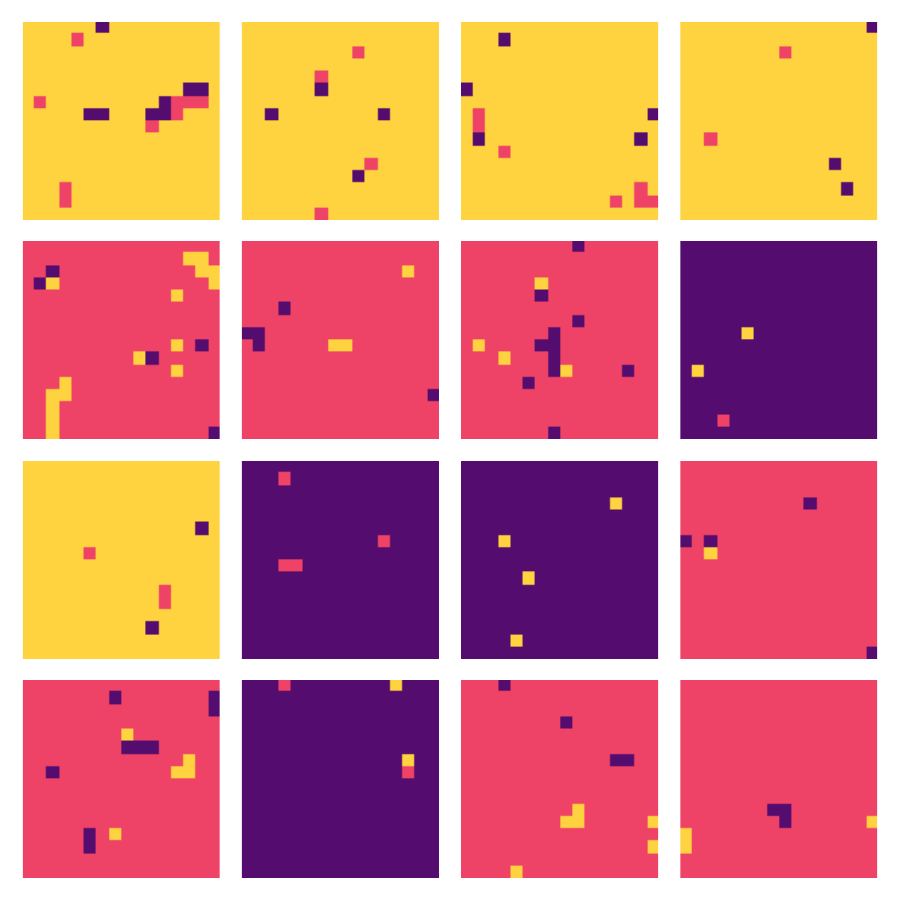}
    \caption{MDNS (ours)}
\end{subfigure}%
\begin{subfigure}{.3\textwidth}
    \centering
    \includegraphics[width=\linewidth]{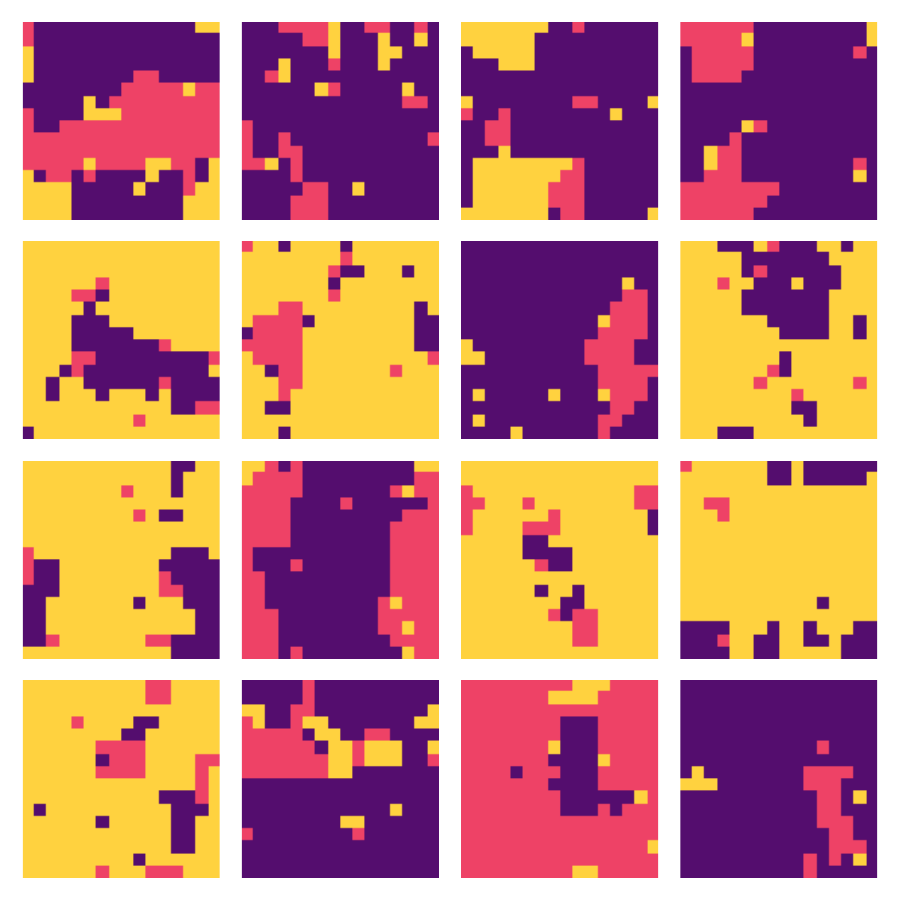}
    \caption{LEAPS}
\end{subfigure}%
\begin{subfigure}{.3\textwidth}
    \centering
    \includegraphics[width=\linewidth]{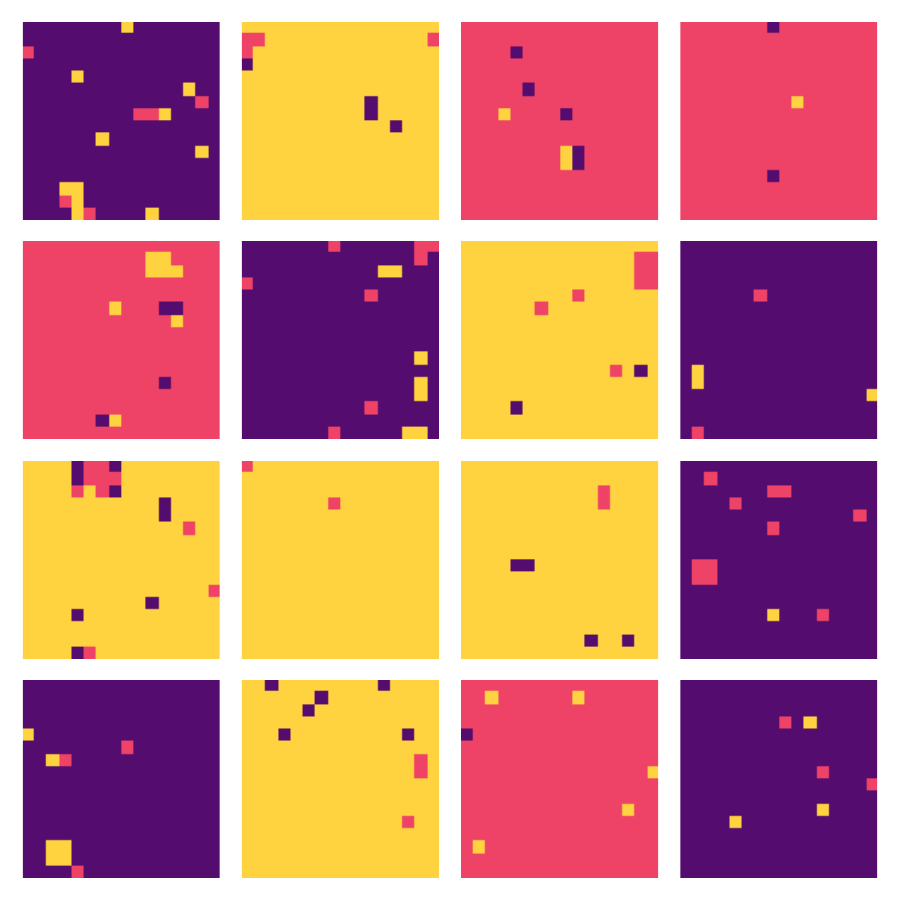}
    \caption{Ground Truth}
\end{subfigure}
\caption{Visualization of non-cherry-picked samples from the learned $16\times16$ Potts model with $J=1$, $q=3$, and $\betal=1.2$. (a) MDNS. (b) LEAPS. (c) Ground Truth (simulated with SW algorithm).}
\label{fig:potts16_vis_low}
\end{figure}

\begin{figure}[!t]
\centering
\begin{subfigure}{.3\textwidth}
    \centering
    \includegraphics[width=\linewidth]{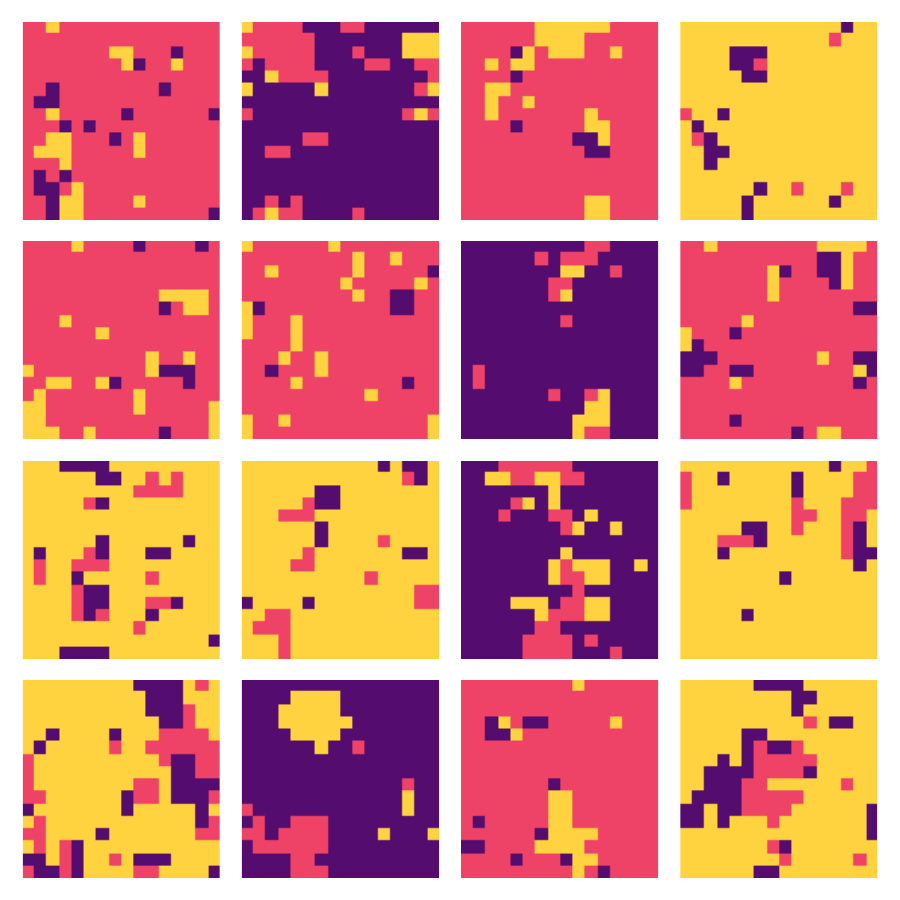}
    \caption{MDNS (ours)}
\end{subfigure}%
\begin{subfigure}{.3\textwidth}
    \centering
    \includegraphics[width=\linewidth]{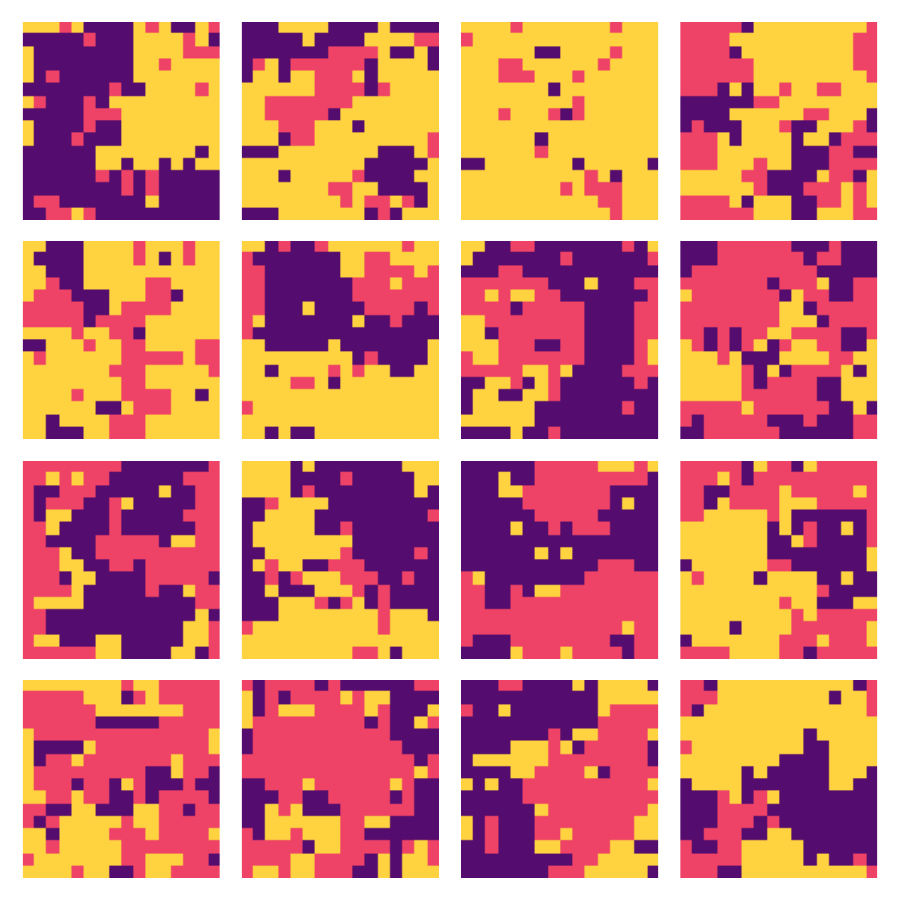}
    \caption{LEAPS}
\end{subfigure}%
\begin{subfigure}{.3\textwidth}
    \centering
    \includegraphics[width=\linewidth]{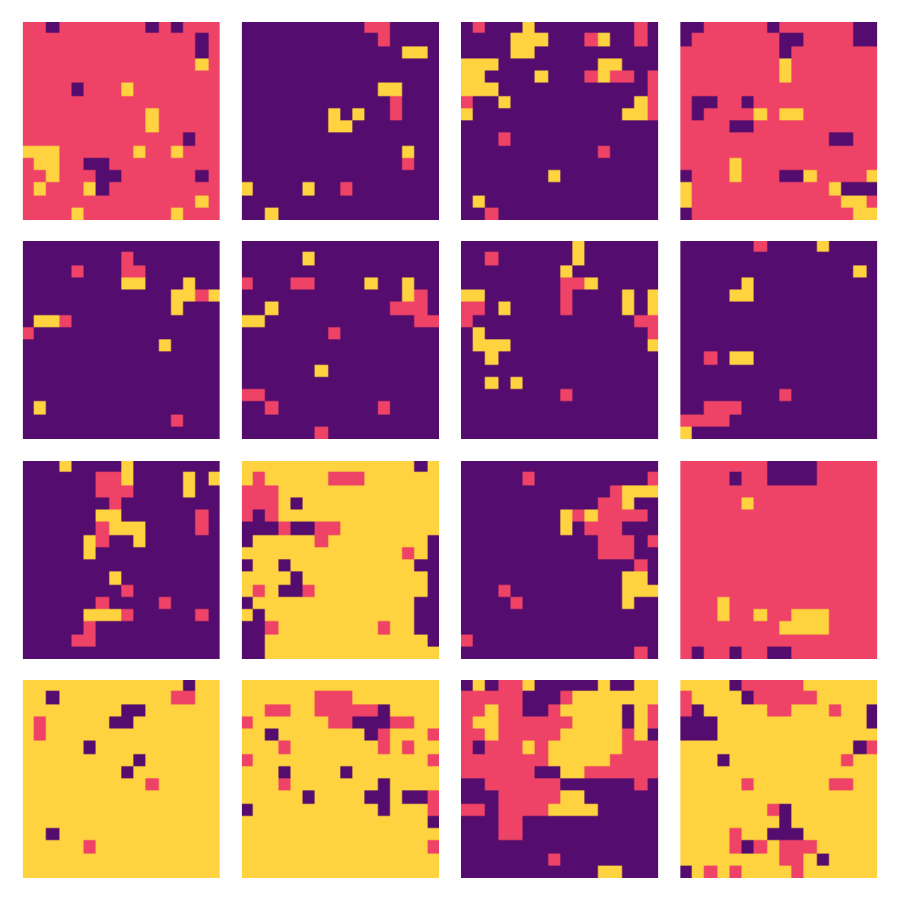}
    \caption{Ground Truth}
\end{subfigure}
\caption{Visualization of non-cherry-picked samples from the learned $16\times16$ Potts model with $J=1$, $q=3$, and $\betac=1.005$. (a) MDNS. (b) LEAPS. (c) Ground Truth (simulated with SW algorithm).}
\label{fig:potts16_vis_crit}
\end{figure}

\begin{figure}[!h]
\centering
\begin{subfigure}{.3\textwidth}
    \centering
    \includegraphics[width=\linewidth]{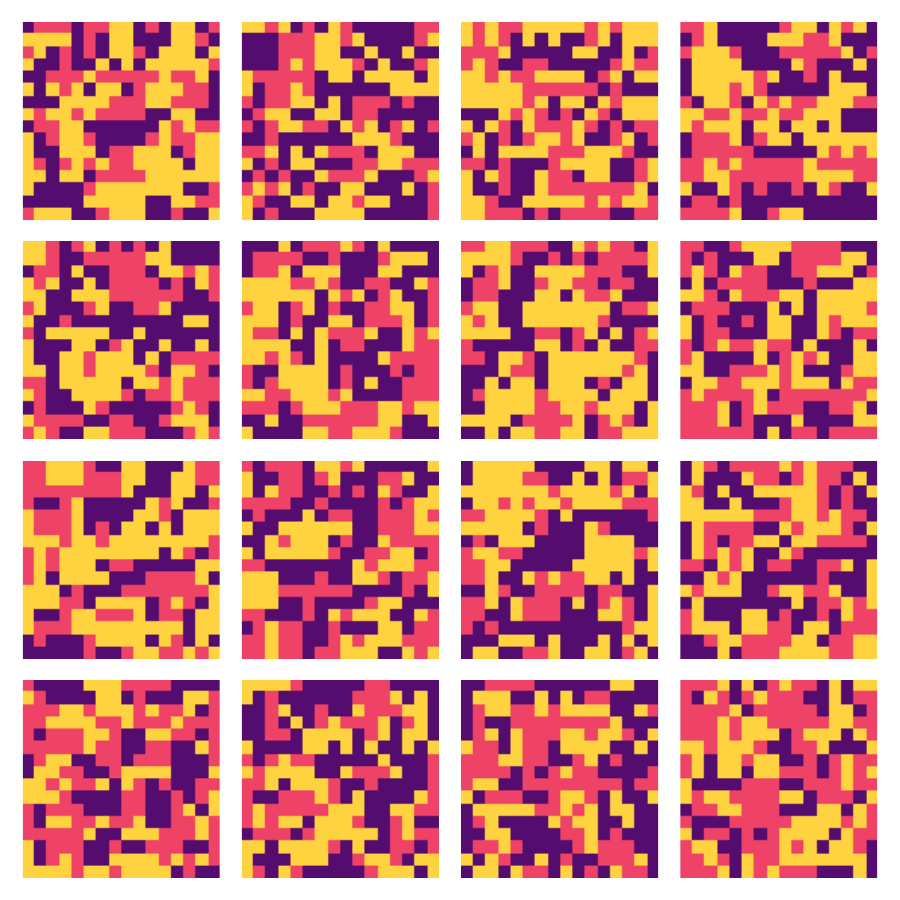}
    \caption{MDNS (ours)}
\end{subfigure}%
\begin{subfigure}{.3\textwidth}
    \centering
    \includegraphics[width=\linewidth]{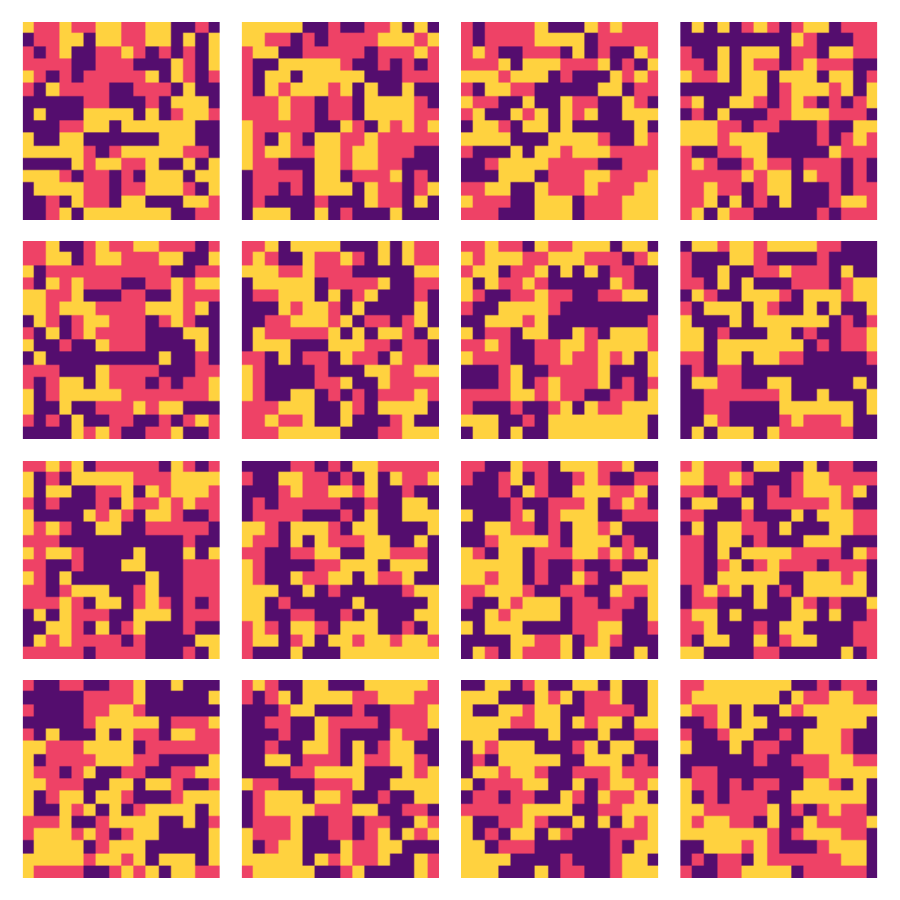}
    \caption{LEAPS}
\end{subfigure}%
\begin{subfigure}{.3\textwidth}
    \centering
    \includegraphics[width=\linewidth]{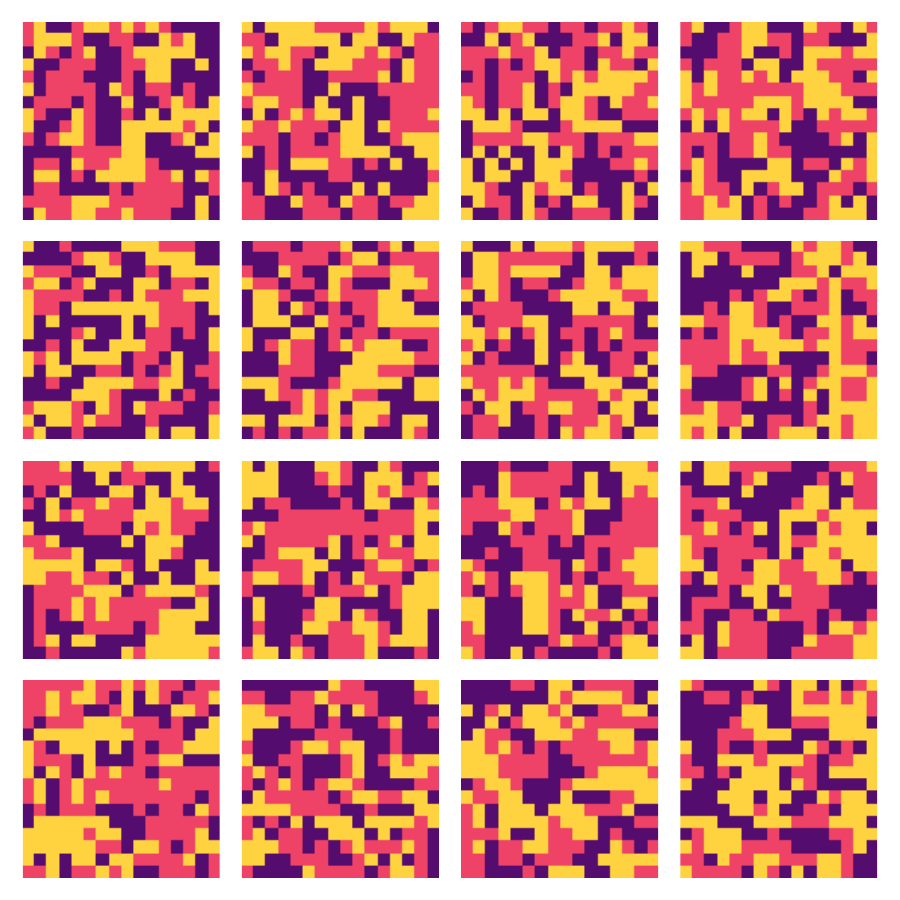}
    \caption{Ground Truth}
\end{subfigure}
\caption{Visualization of non-cherry-picked samples from the learned $16\times16$ Potts model with $J=1$, $q=3$, and $\betah=0.5$. (a) MDNS. (b) LEAPS. (c) Ground Truth (simulated with SW algorithm).}
\label{fig:potts16_vis_high}
\end{figure}

\paragraph{2-Point Correlation.} Potts model has the following definition of 2-point correlation:
\begin{align*}
C^{\mathrm{potts}}_\nu(i, j)=\frac{1}{L^2}\sum_i\E_{\nu(x)}\sq{1_{x^i=x^{j}}-\frac{1}{q}},~\forall i,j\in\Lambda.
\end{align*}
Comparing to the definition for Ising model, the above definition has an additional offset $\frac1q$, which causes the maximal correlation to be strictly smaller than $1$. 

We can define \textbf{row-wise correlation} (i.e., along the $x$-axis) or \textbf{column-wise correlation} (i.e., along the $y$-axis) by summing the correlation between states on the subset of specific pairs of rows or columns. It's formally defined as,
\begin{align}
\label{eq:potts_row_col_corr}
C^{\mathrm{row}}_{\nu}(k, l) = \sum_{\substack{i \in \mathrm{row}(k),~j \in \mathrm{row}(l),\\ i,j \text{ same col}}} C^{\mathrm{potts}}_{\nu}(i,j), \quad C^{\mathrm{col}}_{\nu}(k, l) = \sum_{\substack{i \in \mathrm{col}(k),~j \in \mathrm{col}(l),\\ i,j \text{ same row}}} C^{\mathrm{potts}}_{\nu}(i,j),
\end{align}
where $k,l \in \cu{-\floor{\frac{L}{2}},...,0,...,\floor{\frac{L}{2}}}$. Based on these observables, we can also define the absolute correlation error (abbreviated as Corr. in \cref{tab:exp_potts_16}) for learned distribution $\nu$ by
\begin{align}
\label{eq:potts_corr_err_def}
   \frac{1}{L^{2}} \sum_{(k,l)} \big| C^{\mathrm{row}}_{\nu}(k, l) - C^{\mathrm{row}}_{\pi}(k,l)\big| + \big| C^{\mathrm{col}}_{\nu}(k,l) - C^{\mathrm{col}}_{\pi}(k,l)\big|.
\end{align}

\subsection{Training Curves for Learning $16 \times 16$ Potts Model}
The training ESS curves of the model for learning the $16\times16$ Potts model are provided in \cref{fig:exp_potts16_ess}, where the models for $\betac$ and $\betal$ are initialized at the $30\mathrm{k}$-step checkpoint of the model trained for $\betah$ to implement the warm-up training strategy. Again, due to the mismatch between reward functions at different temperatures, the ESS experiences a sudden drop to near $0$ at iteration $30\mathrm{k}$ for $\betal$ and $\betac$.

\begin{figure}[t]
    \centering
    \includegraphics[width=0.6\linewidth]{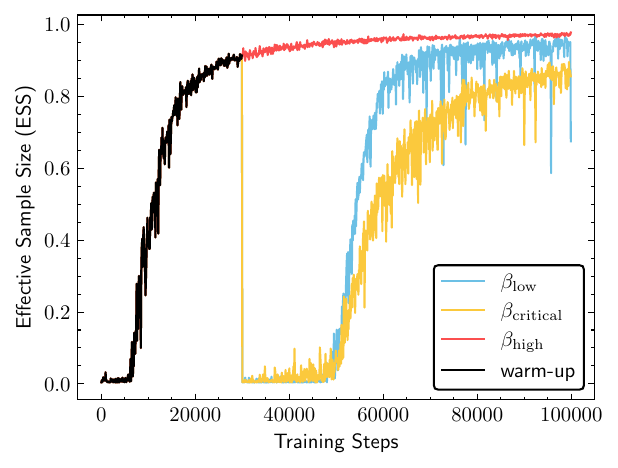}
    \caption{Training curves of effective sample size (ESS) of $16\times16$ Potts model under different temperatures. A closer value to $1$ generally indicates a better performance.}
    \label{fig:exp_potts16_ess}
\end{figure}

\subsection{Additional Results for $16 \times 16$ Potts Model}
We visualize the generated samples in \cref{fig:potts16_vis_low,fig:potts16_vis_crit,fig:potts16_vis_high}. From the plots, we can see the high fidelity of our generated samples as they follow a statistically similar pattern to the ground truth samples produced by running the SW algorithm. We also plot the average 2-point correlation function along the $y$-axis in \cref{fig:potts_16_vis_corr_y}, and visualize the distribution of absolute error of the estimated $M^{\mathrm{row}}$ and $M^{\mathrm{col}}$ to ground truth along each column or row of the square lattice in \cref{fig:potts_vis_mag}. These results suggest that MDNS manages to learn to sample from multimodal, high-dimensional discrete distributions accurately.

\section{UDNS: Extension of MDNS to Uniform Discrete Diffusion Models}
\label{app:unif}
In this section, we discuss an extension of our MDNS framework to uniform discrete diffusion models \cite{lou2024discrete,schiff2025simple}. We present the theory in \cref{app:unif_theory} and discuss the strategy of preconditioning in \cref{app:unif_precond}. Experimental results are presented in \cref{app:unif_exp}.

\subsection{Theory of Uniform Diffusion Neural Sampler}
\label{app:unif_theory}
We choose $T=1$ and construct the reference path measure $\P^0$ by the CTMC that always keeps $\punif(x)=\frac{1}{N^D}1_{x\in\cX_0}$ at all times $t\in[0,1]$. This can be achieved by initializing $\P^0_0=\punif$ and setting the generator as $Q^0_t(x,x^{d\gets n})=\frac{\gamma(t)}{N}$, $\forall n\ne x^d$. Note that each dimension evolves independently under $\P^0$. Let $\gammab(t)=\int_t^1\gamma(s)\d s$. One can compute the following transition distribution from time $t$ to $1$:
\begin{align}
    \P^0_{1|t}(x_1|x_t)=\prod_{d=1}^{D}\ro{\e^{-\gammab(t)}1_{x_1^d=x_t^d}+\frac{1-\e^{-\gammab(t)}}{N}}.
    \label{eq:unif_P0_trans}
\end{align}
By similarly assuming $\gammab(0)=\infty$ like in \cref{lem:memoryless}, we can guarantee that $\P^0_{1|0}(x_1|x_0)=\punif(x_1)$, so under $\P^0$, $X_0$ and $X_1$ are independent, making the reference process memoryless. A direct implication is $\e^{V_0(\cdot)}=\E_{\P^0}[\e^{r(X_1)}|X_0=\cdot]=\E_{\punif}\e^r=Z$ is a constant, just like the mask case.

Moreover, from \cref{lem:p_star_t} we have $\P^*_t(x)=\frac{1}{Z}\P^0_t(x)\e^{V_t(x)}=\frac{1}{ZN^D}\e^{V_t(x)}$, so \cref{eq:q_star_q_0} now reads
$$Q^*_t(x,x^{d\gets n})=Q^0_t(x,x^{d\gets n})\frac{\e^{V_t(x^{d\gets n})}}{\e^{V_t(x)}}=\frac{\gamma(t)}{N}\frac{\P^*_t(x^{d\gets n})}{\P^*_t(x)},~\forall n\ne x^d.$$
We can thus parameterize $Q^u_t(x,x^{d\gets n})=\frac{\gamma(t)}{N}s_\theta(x,t)_{d,n}$, where the neural network $s_\theta$ takes $x\in\cX_0$ and $t\in[0,1]$ as input and outputs a non-negative $D\times N$ matrix.

We approximate the continuous-time process by the following (na\"ive) Euler discretization scheme: for $k=0,1,...,K-1$, let $\Delta t=\frac{1}{K}$ and $t_k=k\Delta t$. To sample from $\P^u$, first sample $X_0\sim\punif$, and then, for $k=0,1,...,K-1$, approximate the transition probability as
$$\P^u_{t_{k+1}|t_k}(x_{t_{k+1}}|x_{t_k})\approx1_{x_{t_{k+1}}=x_{t_k}}+\Delta tQ^u_{t_k}(x_{t_k},x_{t_{k+1}}).$$
Here, we only allow $x_{t_k}$ and $x_{t_{k+1}}$ to differ \textit{at most one} entry in order to correctly compute $Q^u_{t_k}(x_{t_k},x_{t_{k+1}})$, as otherwise the value would be zero. The RN derivative $\de{\P^*}{\P^u}(x)$ is similarly approximated by
\begin{align}
    \log\de{\P^*}{\P^u}(x)&=\log\de{\P^*}{\P^0}(x)-\log\de{\P^u}{\P^0}(x)\nonumber\\
    &\approx r(x_1)-\log Z-\sum_{k=0}^{K-1}\log\frac{\P^u_{t_{k+1}|t_k}(x_{t_{k+1}}|x_{t_k})}{\P^0_{t_{k+1}|t_k}(x_{t_{k+1}}|x_{t_k})}=:W^u(x)-\log Z.\label{eq:rnd_simplified_unif}
\end{align}

Next, if $x_{t_{k+1}}\ne x_{t_k}$, 
\begin{align}
    \log\frac{\P^u_{t_{k+1}|t_k}(x_{t_{k+1}}|x_{t_k})}{\P^0_{t_{k+1}|t_k}(x_{t_{k+1}}|x_{t_k})}\approx\log\frac{Q^u_{t_k}(x_{t_k},x_{t_{k+1}})}{Q^0_{t_k}(x_{t_k},x_{t_{k+1}})}=\log s_\theta(x_{t_k},t_k)_{d,n}\quad\text{when}~x_{t_{k+1}}=x_{t_k}^{d\gets n}.
    \label{eq:unif_log_ratio_changed}
\end{align}
Otherwise, if $x_{t_{k+1}}=x_{t_k}$, we have
\begin{align}
    \log\frac{\P^u_{t_{k+1}|t_k}(x_{t_k}|x_{t_k})}{\P^0_{t_{k+1}|t_k}(x_{t_k}|x_{t_k})}&\approx\log\frac{1+\Delta tQ^u_{t_k}(x_{t_k},x_{t_k})}{1+\Delta tQ^0_{t_k}(x_{t_k},x_{t_k})}\nonumber\\
    &=\log\frac{1-\Delta t\sum_{d=1}^D\sum_{n\ne x_{t_k}^d}Q^u_{t_k}(x_{t_k},x_{t_k}^{d\gets n})}{1-\Delta t\sum_{d=1}^D\sum_{n\ne x_{t_k}^d}Q^0_{t_k}(x_{t_k},x_{t_k}^{d\gets n})}\nonumber\\
    &=\log\frac{1-\frac{\Delta t\gamma(t)}{N}\sum_{d=1}^D\sum_{n\ne x_{t_k}^d}s_\theta(x_{t_k},t_k)_{d,n}}{1-\frac{\Delta t\gamma(t)}{N}\sum_{d=1}^D\sum_{n\ne x_{t_k}^d}1}.
    \label{eq:unif_log_ratio_unchanged}
\end{align}
Here, unlike in the proof of \cref{lem:ctmc_rnd}, we do not use Taylor expansion to approximate $\log(1-\Delta t\star)$ by $-\Delta t\star$ in order not to introduce further approximation error. Notably, due to our parameterization of $Q^u_t$, this only requires one call to the score model $s_\theta$ without any specifically designed model architecture such as the locally equivariant network introduced in \cite{holderrieth2025leaps}.

Now, with the approximation of RN derivative, we can easily derive the LV, CE and RERF losses just by plugging in $W^u(X)$ into \cref{eq:obj_all}. We can similarly derive the WDCE loss as follows. First, as $\P^0_t=\punif$ for all $t$, the CTMC with path measure $\P^0$ is reversible, and hence we know from \cref{eq:unif_P0_trans} that $\P^0_{1|t}(x_1|x_t)=\P^0_{t|1}(x_t|x_1)$. Next, due to the property that $\P^*(x)=\frac{1}{Z}\e^{r(x_1)}\P^0(x)$ from \cref{eq:p_star}, by conditioning on $x_1$, we have 
\begin{equation}
    \P^*_{t|1}(x_t|x_1)=\P^0_{t|1}(x_t|x_1)=\prod_{d=1}^{D}\ro{\e^{-\gammab(t)}1_{x_1^d=x_t^d}+\frac{1-\e^{-\gammab(t)}}{N}},
    \label{eq:unif_p_star_trans}
\end{equation}
which means we can easily sample from the transition kernel $\P^*_{t|1}(\cdot|x_1)$ by independently replacing each entry of $x_1$ with a token from $\unif\{1,...,N\}$ with probability $1-\e^{-\gammab(t)}$. To derive the WDCE loss, one can first prove the DCE trick \cite[Thm. 3.4]{lou2024discrete}: for any function $f$,
\begin{equation}
    \E_{\P^*_1(X_1)\P^*_{t|1}(X_t|X_1)}\sum_{y\ne X_t}\frac{\P^*_t(y)}{\P^*_t(X_t)}f(X_t,t,y)=\E_{\P^*_1(X_1)\P^*_{t|1}(X_t|X_1)}\sum_{y\ne X_t}\frac{\P^*_{t|1}(y|X_1)}{\P^*_{t|1}(X_t|X_1)}f(X_t,t,y).
    \label{eq:dce_trick}
\end{equation}
In fact, simple calculation shows that both sides equal $\sum_{X_t,y\ne X_t}\P^*_t(y)f(X_t,t,y)$. We can therefore derive the WDCE loss as follows:
\begin{align*}
    \kl(\P^*\|\P^u)&=\E_{\P^*(X)}\int_0^1\sum_{y\ne X_t}\ro{Q^*_t\log\frac{Q^*_t}{Q^u_t}+Q^u_t-Q^*_t}(X_t,y)\d t~(\textit{By \cref{lem:ctmc_kl}})\\
    &=\E_{t,\P^*(X)}\sum_{y\ne X_t}\ro{Q^u_t-Q^*_t\log Q^u_t}(X_t,y)+\const\\
    &=\E_{t,\P^*(X)}\sum_{d}\sum_{n\ne X_t^d}\ro{Q^u_t-Q^*_t\log Q^u_t}(X_t,X_t^{d\gets n})+\const,
\end{align*}
where $t\sim\unif(0,1)$, and $\const$ represents terms that do not depend on $\theta$. Next, we leverage the parameterization of $Q^u$ and $Q^*$ as well as the DCE trick \cref{eq:dce_trick}:
\begin{align*}
    &\kl(\P^*\|\P^u)\\
    &=\E_{t,\P^*_1(X_1)\P^*_{t|1}(X_t|X_1)}\frac{\gamma(t)}{N}\sum_d\sum_{n\ne X_t^d}\ro{s_\theta(X_t,t)_{d,n}-\frac{\P^*_t(X_t^{d\gets n})}{\P^*_t(X_t)}\log s_\theta(X_t,t)_{d,n}}+\const\\
    &=\E_{t,\P^*_1(X_1)\P^*_{t|1}(X_t|X_1)}\frac{\gamma(t)}{N}\sum_{d}\sum_{n\ne X_t^d}\ro{s_\theta(X_t,t)_{d,n}-\frac{\P^*_{t|1}(X_t^{d\gets n}|X_1)}{\P^*_{t|1}(X_t|X_1)}\log s_\theta(X_t,t)_{d,n}}+\const\\
    &=\E_{\P^\ub(\Xb)}\de{\P^*}{\P^\ub}(\Xb)\E_{t,\P^*_{t|1}(X_t|\Xb_1)}\Bigg[\frac{\gamma(t)}{N}\\
    &\qquad\sum_{d}\sum_{n\ne X_t^d}\ro{s_\theta(X_t,t)_{d,n}-\frac{\P^*_{t|1}(X_t^{d\gets n}|\Xb_1)}{\P^*_{t|1}(X_t|\Xb_1)}\log s_\theta(X_t,t)_{d,n}}\Bigg]+\const,
\end{align*}
We can simplify the ratio of conditional probabilities according to \cref{eq:unif_p_star_trans}:
$$\frac{\P^*_{t|1}(X_t^{d\gets n}|\Xb_1)}{\P^*_{t|1}(X_t|\Xb_1)}=\frac{\e^{-\gammab(t)}1_{\Xb_1^d=n}+\frac{1-\e^{-\gammab(t)}}{N}}{\e^{-\gammab(t)}1_{\Xb_1^d=X_t^d}+\frac{1-\e^{-\gammab(t)}}{N}}.$$
Finally, recall that $\de{\P^*}{\P^\ub}(\Xb)=\frac{1}{Z}\e^{W^\ub(\Xb)}$ can be computed via \cref{eq:rnd_simplified_unif}, and we can estimate $Z$ by $\E_{\Xb\sim\P^\ub}\e^{W^\ub(\Xb)}$ as in both CE and WDCE losses for MDNS. We thus arrive at the WDCE loss for UDNS:
\begin{equation}
\begin{aligned}
    &\cF_\mathrm{WDCE}(\P^u,\P^*):=\E_{\Xb\sim\P^\ub}\frac1Z\e^{W^\ub(\Xb)}\E_{t,\P^*_{t|1}(X_t|\Xb_1)}\Bigg[\frac{\gamma(t)}{N}\\
    &\qquad \sum_{d}\sum_{n\ne X_t^d}\ro{s_\theta(X_t,t)_{d,n}-\frac{\e^{-\gammab(t)}1_{\Xb_1^d=n}+\frac{1-\e^{-\gammab(t)}}{N}}{\e^{-\gammab(t)}1_{\Xb_1^d=X_t^d}+\frac{1-\e^{-\gammab(t)}}{N}}\log s_\theta(X_t,t)_{d,n}}\Bigg].
\end{aligned}
\label{eq:obj_wdce_unif}
\end{equation}

We summarize the training of the UDNS in \cref{alg:unif}. Here, \texttt{Resample\_with\_Unif} means for sample random variables $\{t^{(i,r)}\}_{1\le i\le B,1\le r\le R}\iid\unif(0,1)$, and for each $i$ and $r$, first randomly masking each entry of $X^{(i)}$ with probability $1-\e^{-\gammab(t^{(i,r)})}$, and then replacing each masked entry independently with a token from $\unif\{1,...,N\}$.

\begin{algorithm}
\caption{Training of Uniform Diffusion Neural Sampler (UDNS)}
\label{alg:unif}
\begin{algorithmic}[1]
\Require score model $s_{\theta}$, batch size $B$, training iterations $K$, reward function $r:\cX_0 \to \R$, learning objective $\cF \in \{\cF_{\mathrm{RERF}}, \cF_{\mathrm{LV}}, \cF_{\mathrm{CE}}, \cF_{\mathrm{WDCE}}\}$, (number of replicates of each sample $R$, resample frequency $k$ for $\cF_\mathrm{WDCE}$).
\For{$\mathtt{step} = 1$ \textbf{to} $K$}
    \If{$\cF \in \{\cF_{\mathrm{RERF}}, \cF_{\mathrm{LV}}, \cF_{\mathrm{CE}}\}$} 
    \State $\{X^{(i)}, W^u(X^{(i)})\}_{1\le i\le B} = \texttt{Sample\_Trajectories\_Unif}(B)$. \Comment{See \cref{alg:sample_traj_unif}.}
    \State Compute $\cF$ with $\{X^{(i)}, W^u(X^{(i)})\}_{1\le i\le B}$. \Comment{See \cref{eq:obj_all}.}
    \ElsIf{$\cF = \cF_{\mathrm{WDCE}}$}
    \If{$\mathtt{step}~\operatorname{mod}~k = 0$} \Comment{Sample new trajectories every $k$ steps.}
        \State $\{X^{(i)}, W^u(X^{(i)})\}_{1\le i\le B} = \mathtt{Sample\_Trajectories\_Unif}(B)$.
        \State Set replay buffer $\cB \gets \{X^{(i)}, W^u(X^{(i)})\}_{1\le i\le B} $.
    \EndIf
    \State $\{\Xt^{(i)}, W^u(\Xt^{(i)})\}_{1\le i\le BR} = \mathtt{Resample\_with\_Unif}(\cB;R)$.
    \State Compute $\cF_\mathrm{WDCE}$ with $\{\Xt^{(i)}, W^u(\Xt^{(i)})\}_{1\le i\le BR}$.
    \EndIf
    \State Update the parameters $\theta$ based on the gradient $\nabla_\theta\cF$.
\EndFor
\Return trained score model $s_{\theta}$.
\end{algorithmic}
\end{algorithm}

\begin{algorithm}[h]
\caption{\texttt{Sample\_Trajectories\_Unif}: Sample trajectories and compute weights for UDNS.}
\label{alg:sample_traj_unif}
\begin{algorithmic}[1]
\Require score model $s_{\theta}$, reward function $r:\cX_0\to\R$, batch size $B$, number of time-intervals $K$, functions $\gamma$, early starting parameter $\epsilon\approx0$.
\State Initialize uniformly random sequences $X^{(i)}_{t_0}\iid\punif$ and weights $W^{(i)}=0$, $1\le i\le B$.
\State Define $t_k=\epsilon+k\Delta t$, $0\le k\le K$, where $\Delta t=\frac{1-\epsilon}{K}$. \Comment{$\gamma(0)=\infty$ so do not initialize at $t_0=0$.}
\For{$k=0$ \textbf{to} $K-1$}
    \State Call the score model and get all the scores $\{s_\theta(X^{(i)}_{t_k},t_k)\}_{1\le i\le B}$.
    \State For each $1\le i\le B$, sample an update from the approximate transition distribution: $X^{(i)}_{t_{k+1}}\gets(X^{(i)}_{t_k})^{d\gets n}$ with probability $\frac{\Delta t\gamma(t)}{N}s_\theta(X^{(i)}_{t_k},t_k)_{d,n}$ for $n\ne(X^{(i)}_{t_k})^d$, and $X^{(i)}_{t_{k+1}}\gets X^{(i)}_{t_k}$ with probability $1-\frac{\Delta t\gamma(t)}{N}\sum_d\sum_{n\ne(X^{(i)}_{t_k})^d}s_\theta(X^{(i)}_{t_k},t_k)_{d,n}$.
    \State For each $1\le i\le B$, based on if $X^{(i)}_{t_{k+1}}\ne X^{(i)}_{t_{k}}$ or not, update weights $W^{(i)}\gets W^{(i)}-\log\frac{\P^u_{t_{k+1}|t_k}(x_{t_{k+1}}|x_{t_k})}{\P^0_{t_{k+1}|t_k}(x_{t_{k+1}}|x_{t_k})}$ according to \cref{eq:unif_log_ratio_changed} or \cref{eq:unif_log_ratio_unchanged}, respectively.
\EndFor
\State For each $1\le i\le B$, update weights with the final reward: $W^{(i)}\gets W^{(i)}+r(X^{(i)})$.

\noindent\Return pairs of sample and weights $\{X^{(i)},W^u(X^{(i)}):=W^{(i)}\}_{1\le i\le B}$.
\end{algorithmic}
\end{algorithm}

\begin{table}[!t]
\centering
\caption{Results for learning $4\times4$ Ising model with $J=1$, $h=0.1$, and $\beta=0.28$ using UDNS (with an ablation study of the effect of preconditioning), best in \textbf{bold}.}
\label{tab:exp_ising_4_unif}
\resizebox{\linewidth}{!}{
\begin{tabular}{cccccccc}
\toprule
Method                                 & Use precond. & ESS $\uparrow$    & $\tv(\phsamp,\pi)$ $\downarrow$ & $\kl(\phsamp\|\pi)$ $\downarrow$ & $\chi^2(\phsamp\|\pi)$ $\downarrow$ & $\widehat{\kl}(\P^u\|\P^*)$ $\downarrow$ & Abs. err. of $\log\Zh$ $\downarrow$ \\ \midrule
\multirow{2}{*}{$\cF_{\mathrm{RERF}}$} & \ding{51}      & $0.9671$          & $0.0787$                        & $0.0357$                        & $0.0738$                            & $0.0182$                                & $1.45745$                           \\ \cmidrule{2-8} 
                                       & \ding{55}      & $0.9769$          & $0.0726$                        & $0.0341$                        & $0.0696$                            & $0.0110$                                & $1.45376$                           \\ \midrule
\multirow{2}{*}{$\cF_{\mathrm{LV}}$}   & \ding{51}      & $\mathbf{0.9900}$ & $\mathbf{0.0710}$               & $\mathbf{0.0332}$               & $\mathbf{0.0647}$                   & $\mathbf{0.0053}$                       & $0.07752$                           \\ \cmidrule{2-8} 
                                       & \ding{55}      & $0.9798$          & $0.0712$                        & $0.0334$                        & $0.0678$                            & $0.0099$                                & $\mathbf{0.00025}$                  \\ \midrule
\multirow{2}{*}{$\cF_{\mathrm{WDCE}}$} & \ding{51}      & $0.9204$          & $0.0878$                        & $0.0397$                        & $0.0847$                            & $0.0450$                                & $0.01911$                           \\ \cmidrule{2-8} 
                                       & \ding{55}      & $0.9301$          & $0.0814$                        & $0.0383$                        & $0.0841$                            & $0.0385$                                & $0.00932$                           \\ \midrule
Baseline (MH)                          &              & /                 & $0.0667$                        & $0.0325$                        & $0.0628$                            & /                                       & /                                   \\ \bottomrule
\end{tabular}
}
\end{table}

\subsection{Preconditioning}
\label{app:unif_precond}
Inspired by path integral sampler \cite{zhang2022path}, we propose to use the following form of preconditioning for learning the score model to sample from $\pi$:
$$\log\frac{\P^*_t(x^{i\gets n})}{\P^*_t(x)}\approx\log s_\theta(x,t)_{i,n}:=\Phi_\theta(x,t)_{i,n}+\sigma_\theta(t)\log\frac{\pi(x^{i\gets n})}{\pi(x)},~\forall i\in\{1,...,L\}^2,~n\in\{\pm1\}\backslash\{x^i\},$$
where $\Phi_\theta:\cX_0\times[0,1]\to\R^{(L\times L)\times2}$ and $\sigma_\theta:[0,1]\to\R$ are free-form neural networks. For the Ising model \cref{eq:ising},
$$\log\frac{\pi(x^{\sim i})}{\pi(x)}=-2\beta J\ro{\sum_{j:j\sim i}x^j}x^i-2\beta hx^i,$$
where $x^{\sim i}$ represents the configuration obtained by flipping the sign of $x^i$. Again, we emphasize that this design of preconditioning replies on the special structure of the target probability distribution, namely the closed-form expression of $\frac{\pi(x^{i\gets n})}{\pi(x)}$ for all $i$ and $n$. The study of the preconditioning for more general target distributions is left as future work.

\subsection{Experiments}
\label{app:unif_exp}
We use UDNS to learn the same $4\times4$ Ising model as in \cref{tab:exp_ising_4_high} (with $J=1$, $h=0.1$, and $\beta=0.28$), and report the quantitative results in \cref{tab:exp_ising_4_unif}. We use the same model backbone and similar training configurations as described in \cref{app:exp_ising_train}, except that (1) the total number of steps is $3000$ to ensure all methods fully converge, and (2) for preconditioning, we parameterize $\sigma_\theta:[0,1]\to\R$ with a multilayer perceptron with hidden sizes $32,32,32$ and SiLU activation, whose total number of parameters is $2\mathrm{k}$. The number of time points $K$ for discretization is chosen as $50$ for both training and evaluation, and the function $\gamma$ is chosen as $\gamma(t)=\frac{1}{t}$, which implies $\gammab(t)=\log\frac{1}{t}$ and $\e^{-\gammab(t)}=t$. The number of replicates $R$ used in $\cF_\mathrm{WDCE}$ is set as $32$. During training, we find that $\cF_\mathrm{CE}$ fails to learn the correct target distribution, and hence its performance is not reported in the table. We can see from \cref{tab:exp_ising_4_unif} that all three learning objectives are capable of generating samples with quality comparable to the ones generated from the MH algorithm, and the general effectiveness ranking of them is $\cF_\mathrm{LV}>\cF_\mathrm{RERF}>\cF_\mathrm{WDCE}$. Moreover, we find surprisingly that applying preconditioning is generally beneficial when using $\cF_\mathrm{LV}$, but is prone to worsen the performance when using $\cF_\mathrm{RERF}$ and $\cF_\mathrm{WDCE}$. However, compared with MDNS, UDNS has a higher computational cost in both training and evaluation due to the requirement of time-discretization, and hence is generally less preferable in practice.

\newpage
\section*{NeurIPS Paper Checklist}

\begin{enumerate}

\item {\bf Claims}
    \item[] Question: Do the main claims made in the abstract and introduction accurately reflect the paper's contributions and scope?
    \item[] Answer: \answerYes{} 
    \item[] Justification: We believe the main claims made in the paper accurately reflect our contributions.
    \item[] Guidelines:
    \begin{itemize}
        \item The answer NA means that the abstract and introduction do not include the claims made in the paper.
        \item The abstract and/or introduction should clearly state the claims made, including the contributions made in the paper and important assumptions and limitations. A No or NA answer to this question will not be perceived well by the reviewers. 
        \item The claims made should match theoretical and experimental results, and reflect how much the results can be expected to generalize to other settings. 
        \item It is fine to include aspirational goals as motivation as long as it is clear that these goals are not attained by the paper. 
    \end{itemize}

\item {\bf Limitations}
    \item[] Question: Does the paper discuss the limitations of the work performed by the authors?
    \item[] Answer: \answerYes{} 
    \item[] Justification: We have discussed the limitations in \cref{sec:conc}.
    \item[] Guidelines:
    \begin{itemize}
        \item The answer NA means that the paper has no limitation while the answer No means that the paper has limitations, but those are not discussed in the paper. 
        \item The authors are encouraged to create a separate "Limitations" section in their paper.
        \item The paper should point out any strong assumptions and how robust the results are to violations of these assumptions (e.g., independence assumptions, noiseless settings, model well-specification, asymptotic approximations only holding locally). The authors should reflect on how these assumptions might be violated in practice and what the implications would be.
        \item The authors should reflect on the scope of the claims made, e.g., if the approach was only tested on a few datasets or with a few runs. In general, empirical results often depend on implicit assumptions, which should be articulated.
        \item The authors should reflect on the factors that influence the performance of the approach. For example, a facial recognition algorithm may perform poorly when image resolution is low or images are taken in low lighting. Or a speech-to-text system might not be used reliably to provide closed captions for online lectures because it fails to handle technical jargon.
        \item The authors should discuss the computational efficiency of the proposed algorithms and how they scale with dataset size.
        \item If applicable, the authors should discuss possible limitations of their approach to address problems of privacy and fairness.
        \item While the authors might fear that complete honesty about limitations might be used by reviewers as grounds for rejection, a worse outcome might be that reviewers discover limitations that aren't acknowledged in the paper. The authors should use their best judgment and recognize that individual actions in favor of transparency play an important role in developing norms that preserve the integrity of the community. Reviewers will be specifically instructed to not penalize honesty concerning limitations.
    \end{itemize}

\item {\bf Theory assumptions and proofs}
    \item[] Question: For each theoretical result, does the paper provide the full set of assumptions and a complete (and correct) proof?
    \item[] Answer: \answerYes{} 
    \item[] Justification: All theoretical results have a complete proof in the appendix. We have checked the mathematical derivation and affirmed that they are correct.
    \item[] Guidelines:
    \begin{itemize}
        \item The answer NA means that the paper does not include theoretical results. 
        \item All the theorems, formulas, and proofs in the paper should be numbered and cross-referenced.
        \item All assumptions should be clearly stated or referenced in the statement of any theorems.
        \item The proofs can either appear in the main paper or the supplemental material, but if they appear in the supplemental material, the authors are encouraged to provide a short proof sketch to provide intuition. 
        \item Inversely, any informal proof provided in the core of the paper should be complemented by formal proofs provided in appendix or supplemental material.
        \item Theorems and Lemmas that the proof relies upon should be properly referenced. 
    \end{itemize}

    \item {\bf Experimental result reproducibility}
    \item[] Question: Does the paper fully disclose all the information needed to reproduce the main experimental results of the paper to the extent that it affects the main claims and/or conclusions of the paper (regardless of whether the code and data are provided or not)?
    \item[] Answer: \answerYes{} 
    \item[] Justification: All information regarding of reproducing the experiments can be found in the appendix.
    \item[] Guidelines:
    \begin{itemize}
        \item The answer NA means that the paper does not include experiments.
        \item If the paper includes experiments, a No answer to this question will not be perceived well by the reviewers: Making the paper reproducible is important, regardless of whether the code and data are provided or not.
        \item If the contribution is a dataset and/or model, the authors should describe the steps taken to make their results reproducible or verifiable. 
        \item Depending on the contribution, reproducibility can be accomplished in various ways. For example, if the contribution is a novel architecture, describing the architecture fully might suffice, or if the contribution is a specific model and empirical evaluation, it may be necessary to either make it possible for others to replicate the model with the same dataset, or provide access to the model. In general. releasing code and data is often one good way to accomplish this, but reproducibility can also be provided via detailed instructions for how to replicate the results, access to a hosted model (e.g., in the case of a large language model), releasing of a model checkpoint, or other means that are appropriate to the research performed.
        \item While NeurIPS does not require releasing code, the conference does require all submissions to provide some reasonable avenue for reproducibility, which may depend on the nature of the contribution. For example
        \begin{enumerate}
            \item If the contribution is primarily a new algorithm, the paper should make it clear how to reproduce that algorithm.
            \item If the contribution is primarily a new model architecture, the paper should describe the architecture clearly and fully.
            \item If the contribution is a new model (e.g., a large language model), then there should either be a way to access this model for reproducing the results or a way to reproduce the model (e.g., with an open-source dataset or instructions for how to construct the dataset).
            \item We recognize that reproducibility may be tricky in some cases, in which case authors are welcome to describe the particular way they provide for reproducibility. In the case of closed-source models, it may be that access to the model is limited in some way (e.g., to registered users), but it should be possible for other researchers to have some path to reproducing or verifying the results.
        \end{enumerate}
    \end{itemize}

\item {\bf Open access to data and code}
    \item[] Question: Does the paper provide open access to the data and code, with sufficient instructions to faithfully reproduce the main experimental results, as described in supplemental material?
    \item[] Answer: \answerYes{} 
    \item[] Justification: Our experiments do not rely on any data. The code repository can be found in the abstract.
    \item[] Guidelines:
    \begin{itemize}
        \item The answer NA means that paper does not include experiments requiring code.
        \item Please see the NeurIPS code and data submission guidelines (\url{https://nips.cc/public/guides/CodeSubmissionPolicy}) for more details.
        \item While we encourage the release of code and data, we understand that this might not be possible, so “No” is an acceptable answer. Papers cannot be rejected simply for not including code, unless this is central to the contribution (e.g., for a new open-source benchmark).
        \item The instructions should contain the exact command and environment needed to run to reproduce the results. See the NeurIPS code and data submission guidelines (\url{https://nips.cc/public/guides/CodeSubmissionPolicy}) for more details.
        \item The authors should provide instructions on data access and preparation, including how to access the raw data, preprocessed data, intermediate data, and generated data, etc.
        \item The authors should provide scripts to reproduce all experimental results for the new proposed method and baselines. If only a subset of experiments are reproducible, they should state which ones are omitted from the script and why.
        \item At submission time, to preserve anonymity, the authors should release anonymized versions (if applicable).
        \item Providing as much information as possible in supplemental material (appended to the paper) is recommended, but including URLs to data and code is permitted.
    \end{itemize}

\item {\bf Experimental setting/details}
    \item[] Question: Does the paper specify all the training and test details (e.g., data splits, hyperparameters, how they were chosen, type of optimizer, etc.) necessary to understand the results?
    \item[] Answer: \answerYes{} 
    \item[] Justification: All experimental details can be found in the appendix.
    \item[] Guidelines:
    \begin{itemize}
        \item The answer NA means that the paper does not include experiments.
        \item The experimental setting should be presented in the core of the paper to a level of detail that is necessary to appreciate the results and make sense of them.
        \item The full details can be provided either with the code, in appendix, or as supplemental material.
    \end{itemize}

\item {\bf Experiment statistical significance}
    \item[] Question: Does the paper report error bars suitably and correctly defined or other appropriate information about the statistical significance of the experiments?
    \item[] Answer: \answerNo{} 
    \item[] Justification: We do not report error bars in the paper.
    \item[] Guidelines:
    \begin{itemize}
        \item The answer NA means that the paper does not include experiments.
        \item The authors should answer "Yes" if the results are accompanied by error bars, confidence intervals, or statistical significance tests, at least for the experiments that support the main claims of the paper.
        \item The factors of variability that the error bars are capturing should be clearly stated (for example, train/test split, initialization, random drawing of some parameter, or overall run with given experimental conditions).
        \item The method for calculating the error bars should be explained (closed form formula, call to a library function, bootstrap, etc.)
        \item The assumptions made should be given (e.g., Normally distributed errors).
        \item It should be clear whether the error bar is the standard deviation or the standard error of the mean.
        \item It is OK to report 1-sigma error bars, but one should state it. The authors should preferably report a 2-sigma error bar than state that they have a 96\% CI, if the hypothesis of Normality of errors is not verified.
        \item For asymmetric distributions, the authors should be careful not to show in tables or figures symmetric error bars that would yield results that are out of range (e.g. negative error rates).
        \item If error bars are reported in tables or plots, The authors should explain in the text how they were calculated and reference the corresponding figures or tables in the text.
    \end{itemize}

\item {\bf Experiments compute resources}
    \item[] Question: For each experiment, does the paper provide sufficient information on the computer resources (type of compute workers, memory, time of execution) needed to reproduce the experiments?
    \item[] Answer: \answerYes{} 
    \item[] Justification: The information on the computing resources for the experiments can be found in the appendix.
    \item[] Guidelines:
    \begin{itemize}
        \item The answer NA means that the paper does not include experiments.
        \item The paper should indicate the type of compute workers CPU or GPU, internal cluster, or cloud provider, including relevant memory and storage.
        \item The paper should provide the amount of compute required for each of the individual experimental runs as well as estimate the total compute. 
        \item The paper should disclose whether the full research project required more compute than the experiments reported in the paper (e.g., preliminary or failed experiments that didn't make it into the paper). 
    \end{itemize}
    
\item {\bf Code of ethics}
    \item[] Question: Does the research conducted in the paper conform, in every respect, with the NeurIPS Code of Ethics \url{https://neurips.cc/public/EthicsGuidelines}?
    \item[] Answer: \answerYes{} 
    \item[] Justification: Our research completely complies with the NeurIPS Code of Ethics.
    \item[] Guidelines:
    \begin{itemize}
        \item The answer NA means that the authors have not reviewed the NeurIPS Code of Ethics.
        \item If the authors answer No, they should explain the special circumstances that require a deviation from the Code of Ethics.
        \item The authors should make sure to preserve anonymity (e.g., if there is a special consideration due to laws or regulations in their jurisdiction).
    \end{itemize}

\item {\bf Broader impacts}
    \item[] Question: Does the paper discuss both potential positive societal impacts and negative societal impacts of the work performed?
    \item[] Answer: \answerYes{} 
    \item[] Justification: The discussion of broader impacts can be found after the references.
    \item[] Guidelines:
    \begin{itemize}
        \item The answer NA means that there is no societal impact of the work performed.
        \item If the authors answer NA or No, they should explain why their work has no societal impact or why the paper does not address societal impact.
        \item Examples of negative societal impacts include potential malicious or unintended uses (e.g., disinformation, generating fake profiles, surveillance), fairness considerations (e.g., deployment of technologies that could make decisions that unfairly impact specific groups), privacy considerations, and security considerations.
        \item The conference expects that many papers will be foundational research and not tied to particular applications, let alone deployments. However, if there is a direct path to any negative applications, the authors should point it out. For example, it is legitimate to point out that an improvement in the quality of generative models could be used to generate deepfakes for disinformation. On the other hand, it is not needed to point out that a generic algorithm for optimizing neural networks could enable people to train models that generate Deepfakes faster.
        \item The authors should consider possible harms that could arise when the technology is being used as intended and functioning correctly, harms that could arise when the technology is being used as intended but gives incorrect results, and harms following from (intentional or unintentional) misuse of the technology.
        \item If there are negative societal impacts, the authors could also discuss possible mitigation strategies (e.g., gated release of models, providing defenses in addition to attacks, mechanisms for monitoring misuse, mechanisms to monitor how a system learns from feedback over time, improving the efficiency and accessibility of ML).
    \end{itemize}
    
\item {\bf Safeguards}
    \item[] Question: Does the paper describe safeguards that have been put in place for responsible release of data or models that have a high risk for misuse (e.g., pretrained language models, image generators, or scraped datasets)?
    \item[] Answer: \answerNA{} 
    \item[] Justification: We believe our paper poses no such risks.
    \item[] Guidelines:
    \begin{itemize}
        \item The answer NA means that the paper poses no such risks.
        \item Released models that have a high risk for misuse or dual-use should be released with necessary safeguards to allow for controlled use of the model, for example by requiring that users adhere to usage guidelines or restrictions to access the model or implementing safety filters. 
        \item Datasets that have been scraped from the Internet could pose safety risks. The authors should describe how they avoided releasing unsafe images.
        \item We recognize that providing effective safeguards is challenging, and many papers do not require this, but we encourage authors to take this into account and make a best faith effort.
    \end{itemize}

\item {\bf Licenses for existing assets}
    \item[] Question: Are the creators or original owners of assets (e.g., code, data, models), used in the paper, properly credited and are the license and terms of use explicitly mentioned and properly respected?
    \item[] Answer: \answerYes{} 
    \item[] Justification: We have mentioned the original papers that produced the code in the appendix.
    \item[] Guidelines:
    \begin{itemize}
        \item The answer NA means that the paper does not use existing assets.
        \item The authors should cite the original paper that produced the code package or dataset.
        \item The authors should state which version of the asset is used and, if possible, include a URL.
        \item The name of the license (e.g., CC-BY 4.0) should be included for each asset.
        \item For scraped data from a particular source (e.g., website), the copyright and terms of service of that source should be provided.
        \item If assets are released, the license, copyright information, and terms of use in the package should be provided. For popular datasets, \url{paperswithcode.com/datasets} has curated licenses for some datasets. Their licensing guide can help determine the license of a dataset.
        \item For existing datasets that are re-packaged, both the original license and the license of the derived asset (if it has changed) should be provided.
        \item If this information is not available online, the authors are encouraged to reach out to the asset's creators.
    \end{itemize}

\item {\bf New assets}
    \item[] Question: Are new assets introduced in the paper well documented and is the documentation provided alongside the assets?
    \item[] Answer: \answerNA{} 
    \item[] Justification: Our paper does not release new assets.
    \item[] Guidelines:
    \begin{itemize}
        \item The answer NA means that the paper does not release new assets.
        \item Researchers should communicate the details of the dataset/code/model as part of their submissions via structured templates. This includes details about training, license, limitations, etc. 
        \item The paper should discuss whether and how consent was obtained from people whose asset is used.
        \item At submission time, remember to anonymize your assets (if applicable). You can either create an anonymized URL or include an anonymized zip file.
    \end{itemize}

\item {\bf Crowdsourcing and research with human subjects}
    \item[] Question: For crowdsourcing experiments and research with human subjects, does the paper include the full text of instructions given to participants and screenshots, if applicable, as well as details about compensation (if any)? 
    \item[] Answer: \answerNA{} 
    \item[] Justification: Our paper does not involve crowdsourcing nor research with human subjects.
    \item[] Guidelines:
    \begin{itemize}
        \item The answer NA means that the paper does not involve crowdsourcing nor research with human subjects.
        \item Including this information in the supplemental material is fine, but if the main contribution of the paper involves human subjects, then as much detail as possible should be included in the main paper. 
        \item According to the NeurIPS Code of Ethics, workers involved in data collection, curation, or other labor should be paid at least the minimum wage in the country of the data collector. 
    \end{itemize}

\item {\bf Institutional review board (IRB) approvals or equivalent for research with human subjects}
    \item[] Question: Does the paper describe potential risks incurred by study participants, whether such risks were disclosed to the subjects, and whether Institutional Review Board (IRB) approvals (or an equivalent approval/review based on the requirements of your country or institution) were obtained?
    \item[] Answer: \answerNA{} 
    \item[] Justification: Our paper does not involve crowdsourcing nor research with human subjects.
    \item[] Guidelines:
    \begin{itemize}
        \item The answer NA means that the paper does not involve crowdsourcing nor research with human subjects.
        \item Depending on the country in which research is conducted, IRB approval (or equivalent) may be required for any human subjects research. If you obtained IRB approval, you should clearly state this in the paper. 
        \item We recognize that the procedures for this may vary significantly between institutions and locations, and we expect authors to adhere to the NeurIPS Code of Ethics and the guidelines for their institution. 
        \item For initial submissions, do not include any information that would break anonymity (if applicable), such as the institution conducting the review.
    \end{itemize}

\item {\bf Declaration of LLM usage}
    \item[] Question: Does the paper describe the usage of LLMs if it is an important, original, or non-standard component of the core methods in this research? Note that if the LLM is used only for writing, editing, or formatting purposes and does not impact the core methodology, scientific rigorousness, or originality of the research, declaration is not required.
    \item[] Answer: \answerNA{} 
    \item[] Justification: The the core method development in our paper does not involve LLMs as any important, original, or non-standard components.
    \item[] Guidelines:
    \begin{itemize}
        \item The answer NA means that the core method development in this research does not involve LLMs as any important, original, or non-standard components.
        \item Please refer to our LLM policy (\url{https://neurips.cc/Conferences/2025/LLM}) for what should or should not be described.
    \end{itemize}

\end{enumerate}

\end{document}